\let\originalleft\left
\let\originalright\right
\renewcommand{\left}{\mathopen{}\mathclose\bgroup\originalleft}
\renewcommand{\right}{\aftergroup\egroup\originalright}
\theoremstyle{plain}
\newtheorem{theorem}{Theorem}
\newtheorem{proposition}[theorem]{Proposition}
\newtheorem{lemma}[theorem]{Lemma}
\theoremstyle{definition}
\newtheorem{definition}[theorem]{Definition}
\newtheorem{claim}[theorem]{Claim}
\setlist[enumerate]{itemsep=0.2ex, topsep=0.5\topsep}
\setlist[description]{itemsep=0.2ex, topsep=0.5\topsep}
\setlist[itemize]{itemsep=0.2ex, topsep=0.5\topsep}
\def\thmt@refnamewithcomma #1#2#3,#4,#5\@nil{%
\@xa\def\csname\thmt@envname #1utorefname\endcsname{#3}%
\ifcsname #2refname\endcsname
\csname #2refname\expandafter\endcsname\expandafter{\thmt@envname}{#3}{#4}%
\fi
}
\newcommand{\cG}{\mathcal{G}}
\newcommand{\Bb}{\mathbb{B}}
\newcommand{\Nb}{\mathbb{N}}
\newcommand{\Rb}{\mathbb{R}}
\newcommand{\kfgt}[1]{ET\xspace}
\newcommand{\new}[1]{\emph{#1}}
\renewcommand{\vec}[1]{\bm{#1}}
\newcommand{\oms}{\{\!\!\{}
\newcommand{\cms}{\}\!\!\}}
\DeclareMathOperator{\tr}{\mathrm{Tr}}
\definecolor{sns_green}{HTML}{2CA02C}
\definecolor{sns_orange}{HTML}{FF7F0E}
\definecolor{sns_blue}{HTML}{1F77B4}
\definecolor{sns_red}{HTML}{D62728}
\definecolor{sns_purple}{HTML}{9467BD}
\newcommand{\multiset}[1]{\{\!\!\{ #1 \}\!\!\}}
\newcommand{\tokenindex}[2]{\lbrack #1 \rbrack_{#2}}
\newcommand{\infobox}[2]{%
    \begin{tcolorbox}[
        colframe=black,
        colback=gray!16,
        boxrule=1pt,
        arc=1.5mm,
        width=\linewidth,
        boxsep=1pt,
    ]
    #2
    \end{tcolorbox}
}
\setlist[enumerate]{itemsep=0.2ex, topsep=0.15\topsep}
\setlist[description]{itemsep=0.2ex, topsep=0.15\topsep}
\setlist[itemize]{itemsep=0.2ex, topsep=0.15\topsep}
\title{Generalizable Insights for Graph Transformers in Theory and Practice}
\author{%
    Timo Stoll$^{*}$\quad Luis Müller$^{*}$\quad Christopher Morris \\
    Department of Computer Science\\
    RWTH Aachen University\\
    Aachen, Germany\\
    \texttt{timo.stoll@log.rwth-aachen.de} \\
}
\begin{document}
\maketitle

\renewcommand{\thefootnote}{\fnsymbol{footnote}}
\footnotetext[1]{Equal contribution.}

\begin{abstract}
Graph transformers (GTs) have shown strong empirical performance, yet current architectures vary widely in their use of attention mechanisms, positional embeddings (PEs), and expressivity. Existing expressivity results are often tied to specific design choices and lack comprehensive empirical validation on large-scale data. This leaves a gap between theory and practice, preventing generalizable insights that exceed particular application domains. Here, we propose the Generalized-Distance Transformer (GDT), a GT architecture based on standard attention that incorporates many recent advancements for GTs, and we develop a fine-grained understanding of the GDT's representation power in terms of attention and PEs. Through extensive experiments, we identify design choices that consistently perform well across various applications, tasks, and model scales, demonstrating strong performance in a few-shot transfer setting without fine-tuning. Our evaluation covers over eight million graphs with roughly 270M tokens across diverse domains, including image-based object detection, molecular property prediction, code summarization, and out-of-distribution algorithmic reasoning. We distill our theoretical and practical findings into several generalizable insights about effective GT design, training, and inference.
\end{abstract}

\section{Introduction}\label{sec:introduction}
Graphs are a fundamental data structure for representing relational data and are prevalent across scientific and industrial domains. They naturally model interactions in chemistry~\citep{gilmer2017neural, jumper2021alphafold},  biology~\citep{zitnik2018modeling,fout2017protein}, social and citation networks~\citep{kipf2016semi,hamilton2017inductive}, recommendation systems~\citep{ying2018graph,wu2020graph}, computer vision~\citep{xu2021learning}, and code analysis~\citep{allamanis2018learning,hellendoorn2021global}. While \new{graph neural networks} (GNNs)~\citep{zhou2020graph,bronstein2021geometric}, specifically 
\new{message-passing neural networks} (MPNNs)~\citep{gilmer2017neural}, remain the most prominent architectures in graph learning, recently, \emph{graph transformers} (GTs) have emerged~\citep{Mue+2023} and have found success in applications such as protein folding \citep{Abramson2024alphafold3}, weather forecasting \citep{price2025gencast}, or robotics \citep{vosylius2025instant}. Moreover, because graphs are a general modeling language, GTs can be seen as generalizations of traditional transformer architectures \citep{Vaswani+2017+Attention, devlin2019bert, Brown+2020+GPT3}. As such, theoretical and practical insights about GTs can be leveraged to improve our understanding of transformers' reasoning abilities and representation power \citep{sanford2024understanding, cheng2025electricflow}. In addition, LLMs with causal masking can be seen as GTs on special types of directed acyclic graphs, and tools from graph learning can be used to study and understand their behavior at inference time \citep{barbero2024glasses, barbero2025llmsattendtoken}.

Despite these promises, the progress of GTs is hindered by a lack of standardized methods for obtaining generalizable insights. Specifically, we identify three main obstacles in the current literature: \emph{architecture-tied expressivity}, \emph{limited evaluation}, and \emph{graph-specific attention}.
Here, architecture-tied expressivity refers to the shortcoming that current expressivity results are often tied to specific architectural designs, such as special attention mechanisms \citep{Zha+GDWL+2023, ma+2023+GRIT, mueller2024et, black+2024+comparinggts} or choices of \new{positional embeddings} (PEs) \citep{tsitsulin2022graph, ma+2023+GRIT, Kim+2022+TokenGT, mueller+2024+aligning}. In addition, empirically, GTs are often evaluated and compared on small-scale datasets \citep{rampasek+2022+graphgps, ma+2023+GRIT} where otherwise negligible implementation choices can be more prominent, leading to limited insights. In addition, GTs are rarely evaluated on their representation-learning capabilities, for example, in zero— or few-shot transfer settings. Finally, regarding graph-specific attention, most GTs deviate far from the traditional transformer architecture, making it challenging to derive generalizable insights about transformers beyond particular application domains.

\begin{figure}
        \begin{center}
    \includegraphics[scale=0.8]{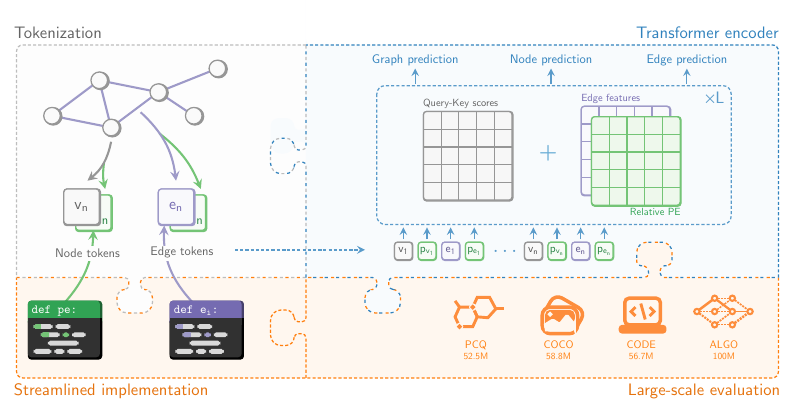}
        \end{center}
	\caption{Overview of the GDT and accompanying evaluation. \textit{Top left}: We support node- and edge-level tokenization with corresponding absolute PEs (depicted below each token). \textit{Top right}: We incorporate relative PEs and edge features via the attention bias. \textit{Bottom left}: We provide effective and streamlined implementations for incorporating edge features and PEs. \textit{Bottom right}: All empirical evaluations are done on large-scale datasets spanning various applications.}
    \label{fig:overview}
\end{figure}
\paragraph{Present work}
This work aims to overcome the above obstacles and provide a general and powerful graph transformer architecture. Through rigorous theoretical and empirical analysis, we develop the \new{Generalized-Distance Transformer} (GDT), a general graph transformer architecture whose expressivity can be characterized by the powerful Generalized-Distance Weisfeiler--Leman algorithm \citep{Zha+GDWL+2023}. Concretely, the GDT
\begin{enumerate}
    \item captures MPNNs, most graph transformers, and many other transformer models, e.g., causal and bi-directional transformers;
    \item is effective across application domains, as well as on graph-, node-, and edge-level prediction tasks; and
    \item is evaluated at a sufficient data scale and can learn transferable representations, allowing for few-shot transfer and extrapolation.
\end{enumerate}

\emph{Our provably expressive GDT architecture, supported by a rigorous empirical evaluation, represents a significant step toward developing highly effective, general-purpose graph models that enable generalizable insights across diverse domains.}

\paragraph{Related work}\label{sec:related_work}
With GT architectures being successful in various domains, several approaches exist for applying transformers to graph learning tasks. Apart from pure transformer architectures such as \citep{dwivedi-GT-2020,Ying+2021+Graphormer,Kim+2022+TokenGT,mueller+2024+aligning}, most GT designs incorporate changes to the attention mechanism \citep{Bo-Specformer-2023,kreuzer2021rethinking,ma+2023+GRIT}, or use attention jointly with MPNNs \citep{rampasek+2022+graphgps, choi-topology-informed-GT}; see \citet{Mue+2023} for an overview of GTs. Moreover, \citet{Zha+GDWL+2023} propose a modified attention mechanism for a GT to simulate the \new{Generalized Distance Weisfeiler--Leman algorithm} (GD-WL), a variant of the \new{Weisfeiler--Leman algorithm} ($1$-WL)~\citep{Wei+1968} incorporating distance information. Indeed, there exists an extensive literature on deriving architectures more expressive than the $1$-WL test, both for GNNs
\citep{Azizian-pgnn-2021,Maron-IGNs-2019, Maron-IGN-universality-2019, Morris-weisfeiler-leman-go-sparse,Puny-PPGN+-2023} %
as well as GTs \citep{Ma-CKGConv-2024, Zha+GDWL+2023, Zhang-PEtheory-2024, Kim+2022+TokenGT, mueller+2024+aligning, mueller2024et}.
As noted by \citet{Mue+2023}, GTs heavily rely on structural and positional information captured by a positional embedding to increase expressiveness. Common choices include absolute PEs such as SAN \citep{kreuzer2021rethinking}, LPE \citep{mueller+2024+aligning}, SPE \citep{Huang-SPE-2024}, and SignNet/BasisNet \citep{Lim-SignNet-2023}, RWSE \citep{dwivedi2021graph}, RRWP \citep{ma+2023+GRIT}, as well as PEs based on substructure counting \citep{Ying+2021+Graphormer}.
In terms of theoretical and empirical evaluation of GTs, the closest related works on the theoretical side are \citet{Zhang-PEtheory-2024},  \citet{black+2024+comparinggts},  \citet{rampasek+2022+graphgps} \textcolor{black}{and \citet{LiTheoreticalDeepDive2024}}. However, neither of these works considers standard attention or compares design choices such as PEs on large-scale data.

\section{Generalized-Distance Transformer}\label{sec:framework}
In this section, we derive the GDT by combining multiple methods from the recent graph learning literature, while maintaining standard attention and compatibility with most traditional transformer models. Moreover, we prove that the GDT is powerful enough to simulate the general and expressive GD-WL algorithm~\citep{Zha+GDWL+2023}. We will first introduce some notation and necessary background, and then develop our theoretical framework.

\subsection{Expressivity and Weisfeiler--Leman variants} We consider finite graphs $G \coloneqq (V(G), E(G), \ell_V, \ell_E)$ with nodes $V(G)$, edges $E(G)$. Note that for simplicity we assume that the nodes and edges are already embedded via node embeddings $\ell_V \colon V(G) \rightarrow \mathbb{R}^d$, and edge embeddings $\ell_E \colon V(G)^2 \rightarrow \mathbb{R}^d$, where $d \in \mathbb{N}^+$ is the embedding dimension and $\ell_E(v, w)$ is simply the all-zero vector if there is no edge between nodes $v$ and $w$. %
We always fix an arbitrary order on the nodes $V(G)$ to be consistent with vectorial representations such as those used in transformers.
We study the expressivity of a graph model via its ability to distinguish non-isomorphic graphs, which is common practice for graph neural networks and GTs \citep{Morris+WLGoNeural+2019, abboud+2022+spmpnn, Zha+GDWL+2023, black+2024+comparinggts, mueller+2024+aligning}. Such a notion of expressivity is often studied in the context of the new $k$-dimensional Weisfeiler--Leman algorithm ($k$-WL)~\citep{Cai+1992+WL}, a hierarchy of graph isomorphism heuristics with increasing expressivity and computational complexity as $k > 0$ grows. MPNNs without PEs typically have $1$-WL expressivity. Another important graph isomorphism heuristic in the context of this work is the GD-WL variant~\citep{Zha+GDWL+2023}, which we formally introduce here. Concretely, given a graph $G \coloneqq (V(G), E(G), \ell_V)$, we seek to iteratively update colors for each node $v \in V(G)$, denoted $\chi^t_G(v)$, where $t \geq 0$ denotes the iteration number. We initialize $\chi^0_G(v)$ with the node colors consistent with $\ell_V$, that is $\chi^0_G(v) = \chi^0_G(w)$ if and only if $\ell_V(v) = \ell_V(w)$, for all pairs of nodes $v, w$. Then, the GD-WL updates the color $\chi^t_G(v)$ of node $v \in V(G)$, as
\begin{equation}\label{eq:gd_wl_color_udpate}
    \chi^{t+1}_G(v) \coloneqq \textsf{hash}\big(\multiset{(d_G(v, w), \chi^t_G(w)) : w \in V(G)} \big),
\end{equation}
where $d_G \colon V(G)^2 \to \mathbb{R}^+$ is a distance between nodes in $G$ and $\textsf{hash}$ is an injective function, mapping each distinct multiset to a previously unused color. The expressivity of the GD-WL depends on the choice of $d_G$. Setting $d_G(v, w) = 1$ if and only if $v$ and $w$ have an edge in $G$, yields $1$-WL expressivity. In practice, the GD-WL is often implemented with a GT, where $d_G$ is incorporated via a modified attention~\citep{ma+2023+GRIT,Zha+GDWL+2023}. In \Cref{sec:unifying_framework}, for the first time, we prove that a GT with standard attention can simulate the GD-WL, with $d_G$ being incorporated as a PE.

\subsection{Defining the GDT}
While many variations of GTs exist, we consider the standard transformer encoder based on \citet{Vaswani+2017+Attention}. \textcolor{black}{This allows us to use a standard attention layer, without modifications as commonly seen in other GTs.}
Concretely, the GDT processes a matrix of initial token embeddings $\vec{X}^0 \in \mathbb{R}^{L \times d}$, derived from $G$, using scaled dot-product attention and subsequent application of a multi-layer perceptron (MLP). Here $L \in \mathbb{N}^+$ denotes the number of tokens, typically in the order of the number of nodes, and $d$ denotes the embedding dimension. We now describe tokenization and attention, and how we incorporate edge embeddings into the GDT.

\paragraph{Tokenization}
For this study, we will consider two possible tokenizations: (a) node-level tokenization, where each token corresponds to a node in $G$ and the initial token embeddings are constructed from node embeddings $\ell_V$;
and (b) edge-level tokenization, where each token corresponds to either a node or an edge in $G$ and the initial token embeddings are constructed from the node embeddings $\ell_V$ and the edge embeddings $\ell_E$ for node- and edge-tokens, respectively.
In practice, we use the fact that edge-level tokenization is equivalent to node-level tokenization on a transformation $G'$ of $G$, with $V(G') \coloneqq \{(v, v) \mid v \in V(G)\} \cup E(G)$ and $E(G') \coloneqq \{ ((u, v), (w, z)) \mid u = w \vee u = z \vee v = w \vee v = z\}$.

\paragraph{Special tokens}
As a convention, and following many prior works on transformer encoders, we use a special \texttt{[cls]} token to read out graph-level representations from the GT. For simplicity, we treat the \texttt{[cls]} token as a virtual node connected to all other nodes. This virtual node is also equipped with a unique node embedding $\ell_V(\texttt{[cls]})$ and unique edge embeddings $\ell_E(\texttt{[cls]}, v) = \ell_E(\texttt{[cls]}, w)$ and $\ell_E(v, \texttt{[cls]}) = \ell_E(w, \texttt{[cls]})$, for all pairs of nodes $v, w \in V(G)$.

\paragraph{Attention} For the attention, let $\vec{Q}, \vec{K}, \vec{V} \in \mathbb{R}^{L \times d}$ and $\vec{B} \in \mathbb{R}^{L \times L}$ be the \textit{attention bias}. We define biased attention as
\begin{equation*}
    \textsf{Attention}(\vec{Q}, \vec{K}, \vec{V}, \vec{B}) \coloneqq \textsf{softmax}\big(d^{-\frac{1}{2}} \cdot \vec{Q}\vec{K}^T + \vec{B} \big) \vec{V},
\end{equation*}
where $\textsf{softmax}$ is applied row-wise.
While many variations of the standard transformer layer exist, it generally takes the form
\begin{equation}\label{eq:general_transformer_encoder}
    \vec{X}^{t+1} \coloneqq \mathsf{MLP}\big(\textsf{Attention}(\vec{X}^t\vec{W}_Q, \vec{X}^t\vec{W}_K, \vec{X}^t\vec{W}_V, \vec{B})\big),
\end{equation}
where $\vec{W}_Q, \vec{W}_K, \vec{W}_V \in \mathbb{R}^{d \times d}$ are learnable linear transformations, and we use a two-layer MLP commonly found in transformer encoder layers; see \Cref{app:background} for a formal definition. In practice, transformers typically have additional normalizations and residual connections. They are implemented using multi-head attention with attention bias tensor $\mathbf{B} \in \mathbb{R}^{L \times L \times h}$ where $h$ is the number of attention heads; see \Cref{app:background} for a formal definition. Note that \Cref{eq:general_transformer_encoder} is a general formulation whose special cases include the local GT \citep{dwivedi-GT-2020}, an attention-based variant of MPNNs with $\mathbf{B}_{ij} = 0$ if nodes $i$ and $j$ share an edge and $\mathbf{B}_{ij} = -\infty$ else; attention with causal masking \citep{Vaswani+2017+Attention} with $\mathbf{B}_{ij} = 0$ if $i < j$ and $\mathbf{B}_{ij} = -\infty$ else; as well as many relative PEs \citep{shaw2018relativepe, beltagy2020longformer, press2022trainshorttestlong}.

\paragraph{Edge embeddings} To incorporate edge embeddings into the GDT, we distinguish between node-level and edge-level GT. For the edge-level GT, edge embeddings are explicitly incorporated via edge tokens. For the node-level case, we adapt the strategy from \citet{bechlerspeicher2025positiongraphlearninglose}, which is itself adapted from Graphormer \citep{Ying+2021+Graphormer}, to incorporate edge embeddings into the attention bias via an additional projection to the number of attention heads. Formally, for all pairs of nodes $i, j \in V(G) \cup \{\texttt{[cls]}\}$,
\begin{equation*}
    \vec{B}_{ij} \coloneqq \rho(\ell_E(i, j)),
\end{equation*}
where $\rho \colon \mathbb{R}^d \rightarrow \mathbb{R}^h$ is a neural network, such as a linear transformation or an MLP.

\paragraph{Making predictions}
For supervised learning with the GDT, we can make graph-, node-, and edge-level predictions by applying an MLP head to the \texttt{[cls]} token embedding, the node token embeddings, and the edge token embeddings. Note that we can leverage edge-level tokenization for edge-level tasks, which provides explicit edge token embeddings. We apply $k$-nearest-neighbors ($k$-NN) to the token embeddings after the last layer for few-shot transfer without additional fine-tuning.

\paragraph{Absolute and relative PEs} We can incorporate two classes of PEs, \textit{absolute} PEs such as RWSE \citep{dwivedi2021graph}, LPE \citep{kreuzer2021rethinking, mueller+2024+aligning}, and SPE \citep{Huang-SPE-2024}, which are added at the token-level, and \textit{relative} PEs such as RRWP \citep{ma+2023+GRIT}, which describe relational information between two tokens. Concretely, an absolute PE takes the form $\mathbf{P} \in \mathbb{R}^{L \times d}$ where the row $\mathbf{P}_i$ is the embedded PE vector corresponding to token $i$. We then project and add $\mathbf{P}_i$ to the node  embedding of token $i$ to obtain the initial token embeddings, or formally,
\begin{equation*}
    \vec{X}_i \coloneqq \ell_V(i) + \vec{P}_i\vec{W}_P,
\end{equation*}
where $\vec{W}_P \in \mathbb{R}^{d \times d}$ is a learnable weight matrix. Moreover, a relative PE takes the form $\mathbf{U} \in \mathbb{R}^{L \times L \times d}$, which we project and add to the edge embeddings to construct the attention bias $\mathbf{B}$. Note that we only consider relative PEs in node-level tokenization. Concretely, for all pairs of nodes $i, j \in V(G) \cup \{\texttt{[cls]}\}$,
\begin{equation*}
    \vec{B}_{ij} \coloneqq \rho(\ell_E(i, j))+ \mathbf{U}_{ij}\vec{W}_U,
\end{equation*}
where $\vec{W}_U \in \mathbb{R}^{d \times h}$ is a learnable weight matrix. We note that the GDT is permutation-equivariant if and only if the absolute and relative PEs used are permutation-equivariant. Since we always assume the presence of, potentially trivial, edge embeddings, the GDT has, at the very least, an embedding of the adjacency matrix as its attention bias, forming a kind of default relative PE. We will refer to this PE as NoPE.

\subsection{The expressive power of the GDT}\label{sec:unifying_framework}
We now have the necessary definitions to formally state our theoretical result for the expressivity of the GDT. Importantly, our result allows us to study GT expressivity solely through PE choice, effectively decoupling model expressivity from attention selection. Concretely, we show that the GDT, with absolute and relative PEs, is sufficient to simulate the GD-WL, as well as that the GD-WL provides an upper bound on the expressivity of the GDT; see \Cref{proof:transcendence_lemma} for the proof.
\begin{theorem}[informal]\label{theorem:standard_gt_gdwl}
The following holds:
\begin{enumerate}
    \item For every choice of distance function, there exists a selection of PEs and a parameterization of the GDT sufficient to simulate the GD-WL. 
    \item For every choice of PEs and parameterization of the GDT, there exists a distance function and an initial coloring of the GD-WL sufficient to simulate the GDT.
\end{enumerate}
\end{theorem}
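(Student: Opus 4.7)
The overall plan is to prove both directions by exploiting the clean separation of information flow in the GDT: distance- and edge-related information enters only through the relative PE and edge-embedding bias, while node-local information enters only through the absolute PE and the initial token embedding. This matches exactly the (distance, color) structure that drives a single GD-WL refinement step, so both directions reduce to carefully aligning one GDT layer with one round of \Cref{eq:gd_wl_color_udpate}.

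For the first direction, I would fix an arbitrary distance function $d_G$ and build PEs that make the GDT emulate GD-WL. Concretely, I would choose the absolute PE so that $\ell_V(v) + \vec{P}_v\vec{W}_P$ is an injective encoding of $\chi^0_G(v)$, and the relative PE so that $\vec{U}_{ij}\vec{W}_U$ is an injective encoding of $d_G(v_i, v_j)$. By induction on the iteration number $t$, I would show that one layer of biased softmax attention followed by the MLP can realize an injective hash of $\multiset{(d_G(v, w), \chi^t_G(w)) : w \in V(G)}$. The standard recipe is to use many attention heads, each one selecting a single distance bucket by setting the bias to a large negative value outside that bucket, and then combining the heads through a sufficiently wide MLP in the style of the Deep Sets and GIN universality arguments. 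When $d_G$ is real-valued, restricting to a fixed finite family of graphs ensures that only finitely many distance values actually occur, so the bucketing incurs no loss.

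For the second direction, I would instead start from an arbitrary parameterization of the GDT and its chosen PEs, and define a distance function by $d_G(v, w) \coloneqq (\ell_E(v, w), \vec{U}_{vw})$ together with an initial coloring $\chi^0_G(v) \coloneqq (\ell_V(v), \vec{P}_v)$, interpreted through injective encodings into the respective codomains. Then, by induction on the layer index, I would argue that two nodes receiving the same GDT token embedding must already share the same GD-WL color, because each GDT layer is a deterministic, permutation-invariant function of the multiset $\multiset{(d_G(v, w), \chi^t_G(w)) : w \in V(G)}$: both biased softmax attention and the subsequent MLP depend on the neighbors only through this multiset once we condition on the distance-color pair.

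The main obstacle I anticipate is the injective-multiset step in the first direction: softmax attention returns only a convex combination of value vectors, which is not by itself injective on multisets. Getting around this requires the head-splitting-by-distance argument together with a width argument for the post-attention MLP, and it also interacts with the virtual \texttt{[cls]} token, whose all-to-all edges must not collapse the per-node aggregation and which needs its own dedicated bucket. Additional subtleties include specifying the precision regime in which $d_G$ is faithfully encoded by finite-dimensional PEs, and precisely stating what \emph{simulation} means (e.g., refinement equivalence of colorings on all graphs of bounded size), which is where I expect the formal statement to be tightened in \Cref{proof:transcendence_lemma}.
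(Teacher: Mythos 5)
Your direction~2 argument essentially matches the paper's: you define $d_G$ and an initial GD-WL coloring from the attention bias and initial token embeddings, then argue by induction on layers that each GDT layer is a function of the multiset $\multiset{(d_G(v,w),\chi^t_G(w)) : w \in V(G)}$. This is the same route as \Cref{prop:backward_direction}.

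Direction~1 is where there is a genuine gap, and it is exactly the obstacle you flag at the end. Your plan is to split distance information across heads (one head per distance bucket, masking everything else to $-\infty$) and then invoke Deep-Sets/GIN universality for the post-attention MLP. But this does not work: once you mask a head to a single distance bucket, softmax within that bucket collapses to a \emph{uniform mean} over the tokens at that distance, and means are not injective on multisets — the multiset $\multiset{a,a}$ and the multiset $\multiset{a}$ produce identical head outputs, so the cardinality information is destroyed before the MLP ever sees anything. No width argument for a downstream MLP can recover information that the attention layer has already lost; the universality results you cite (Deep Sets, GIN) hinge on \emph{sum} aggregation for exactly this reason. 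You would have to additionally recover the per-bucket count (e.g.\ by forcing a known reference token, such as \texttt{[cls]}, to share every bucket and reading the count off the dilution factor), but your proposal places \texttt{[cls]} in its own bucket and never develops such a mechanism, so the injectivity step as stated fails.

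The paper avoids bucketing entirely and instead proves injectivity of \emph{plain} softmax aggregation over the distance-paired multiset. The key is a number-theoretic tool — the Lindemann--Weierstrass theorem — applied to the observation that the (normalized) softmax output is a ratio of sums of exponentials of rational distances. Clearing denominators turns the equality $\textsf{softmax}(\vec{v})\vec{X} = \textsf{softmax}(\vec{w})\vec{X}$ into an equality of two multisets of exponentials, which by linear independence of $\{\exp(a_i)\}$ over distinct algebraic $a_i$ forces the underlying distance multisets to agree. The normalization is handled by requiring $\max_i \vec{v}_i = \max_i \vec{w}_i$ (enforced via the \texttt{[cls]} distance convention) and at least two distinct rows in $\vec{X}$ (guaranteed by the distinct \texttt{[cls]} color); see \Cref{lemma:transcendence} and \Cref{lemma:multiset_decomposed_transcendence}. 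This is the key idea missing from your argument: it gives injectivity \emph{in spite of} the weighted-mean structure of softmax, rather than trying to route around it with hard masking. The paper also sets $\vec{W}_Q = \vec{W}_K = \vec{0}$, $\vec{W}_V = \vec{I}$ and uses a single head, so the multi-head bucketing machinery you envision is unnecessary.
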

The central problem we face when proving the first statement is how to injectively encode the multisets in \Cref{eq:gd_wl_color_udpate} with softmax-attention. This is because softmax-attention computes a weighted mean, whereas existing results for encoding multisets use sums \citep{Xu+2018+GIN, Morris+WLGoNeural+2019, Zha+GDWL+2023}. To overcome this limitation, we first note that the weighted mean of softmax-attention is essentially a normalized sum of exponential numbers. We then leverage a classical result from number theory, namely that sums of distinct exponential numbers are linearly independent over the algebraic numbers, known as the Lindemann--Weierstrass theorem \citep{Baker+LindemannWeierstrass+1990}. In the proof of \Cref{theorem:standard_gt_gdwl}, we show that this linear independence property is sufficient for injectivity, provided that there are at least two distinct token embeddings, a property always satisfied in the presence of the \texttt{[CLS]} token. We note that our expressivity result explicitly uses a property of softmax-attention instead of leveraging idealized versions of softmax, such as saturated softmax or hardmax, commonly used for expressivity results for graph transformers \citep{Zha+GDWL+2023, mueller+2024+aligning} and transformers in general \citep{perez2019turing, Hahn2020AttentionLimits,Merrill2022Saturated,Merill2024CoT}.

\infobox{sns_orange}{\textbf{Insight 1:} The expressivity of biased attention can be characterized by the GD-WL.}

A consequence of \Cref{theorem:standard_gt_gdwl} is that the GDT with NoPE is equivalent to 1-WL; see \Cref{proof:transcendence_lemma} for a formal discussion of this fact. With \Cref{theorem:standard_gt_gdwl}, we have characterized the expressivity of the GDT in terms of the GD-WL. In the next section, we show that this expressivity can be enhanced using PEs. To this end, we present a range of new PE expressivity results, yielding the most fine-grained picture of GT expressivity.

\section{The expressive power of positional embeddings}\label{sec:expressive_power}

\begin{figure}
        \begin{center}
        \includegraphics[scale=0.75]{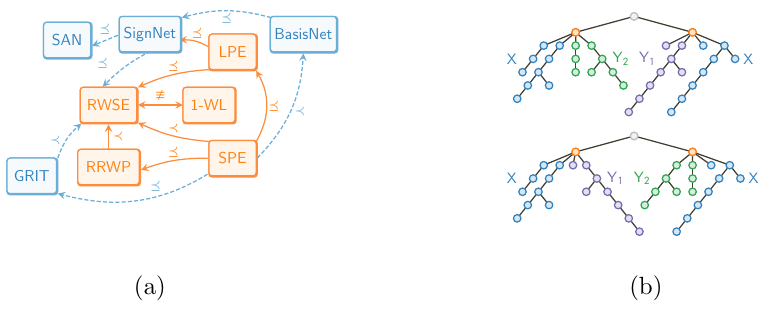}
        \end{center}
	 \caption{(a): Overview of our theoretical PE results in the context of existing results for PE expressivity. $A \prec B $ $(A \preceq B , A \not\equiv B)$: algorithm A is strictly more powerful (at least as powerful, incomparable) than/to B (b): Trees proposed by \citet{cvetkovic1988trees} used in the proof of \Cref{lemma:1WLvsRWSE}.} \label{fig:PEtheory}
\end{figure}

This section provides a comprehensive theoretical expressiveness hierarchy of PEs based on the works of \citet{black+2024+comparinggts} and \citet{Zhang-PEtheory-2024}, including novel results on PE expressiveness. 
Based on the theoretical results from \Cref{sec:unifying_framework}, we expand on GDT expressiveness by introducing PE expressiveness, establishing a pathway between our transformer architecture and incorporating graph structure information. Furthermore, leveraging the PE expressiveness results, we obtain initial guidelines for empirically evaluating PE design choices. Here, we introduce the four PEs central to our theoretical and empirical study; see \Cref{appendix:proofssec4} for results for additional PEs.

\paragraph{PEs}
We consider random-walk-based PEs and those based on the graph Laplacian's eigenvalues.
Random-walk-based PEs are embeddings of the random-walk probabilities obtained from multiple powers of the degree-normalized adjacency matrix of the graph. We consider RWSE \citep{dwivedi2021graph}, an absolute PE which uses only the return probabilities of random walks for each node and has linear-time complexity, and RRWP \citep{ma+2023+GRIT}, a relative PE, which uses all random walk probabilities between two nodes and has quadratic runtime complexity.

Laplacian PEs are embeddings of the eigenvectors and eigenvalues of the graph Laplacian; see \Cref{app:background} for a definition. Here, we consider LPE \citep{kreuzer2021rethinking, mueller+2024+aligning}, an absolute PE which uses a linear-time embedding method but suffers from a lack of basis-invariance, making the PE non-equivariant to the permutation of nodes \citep{Lim-SignNet-2023}. In addition, we consider SPE \citep{Huang-SPE-2024}, an absolute PE which is permutation-equivariant but has quadratic runtime complexity. \textcolor{black}{We restrict ourselves to this selection as other common PEs, such as the shortest path distance or resistance distance, can be approximated by RWSE or RRWP \citep{black+2024+comparinggts}.}Further details and formal definitions are presented in \Cref{app:background}.

\paragraph{1-WL and random-walk PEs}
\citet{toenshoff-CraWL-2023} already show that there are pairs of graphs with $n$ nodes, distinguishable by the $1$-WL, requiring random walks with at least $\mathcal{O}(n)$ steps to be distinguished. Here, we show that RWSE is incomparable to the $1$-WL. This holds independent of the number of random walk steps, and as a result, we can consider RWSE to provide additional information as a PE to a pure transformer architecture by differentiating $1$-WL indistinguishable graphs; see \Cref{appendix:proofssec4} for the proof. 
\begin{theorem}\label{lemma:1WLvsRWSE}
The RWSE embedding is incomparable to the $1$-WL test.
\end{theorem}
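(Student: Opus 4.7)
The plan is to establish incomparability by exhibiting two pairs of graphs, each witnessing one direction of non-domination between RWSE and $1$-WL.

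For the direction \emph{$1$-WL distinguishes, RWSE does not}, I use the pair of non-isomorphic trees of \citet{cvetkovic1988trees} depicted in \Cref{fig:PEtheory}(b). Since $1$-WL distinguishes every pair of non-isomorphic trees (a classical result), these trees are separated by $1$-WL. To show RWSE-equivalence, I exhibit a bijection $\varphi$ between the vertex sets of the two trees such that, for every vertex $v$ and every $k \geq 0$, the $k$-step return probability of the random walk at $v$ in the first tree equals the corresponding quantity at $\varphi(v)$ in the second tree. This then implies that the multisets of per-node RWSE vectors of the two trees coincide, so RWSE cannot distinguish them. The existence of such a walk-preserving bijection is the defining property of Cvetkovic's construction, and its verification amounts to matching closed-walk generating functions vertex-by-vertex on the small trees in the figure.

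For the direction \emph{RWSE distinguishes, $1$-WL does not}, I take $H_1 = C_6$ (the six-cycle) and $H_2 = 2C_3$ (the disjoint union of two triangles). Both graphs are $2$-regular on six nodes, so $1$-WL stabilizes at the constant coloring and produces identical color multisets on the two graphs. However, the three-step return probability is $0$ at every vertex of $C_6$, since the cycle is bipartite and therefore admits no closed walks of odd length, whereas it is strictly positive at every vertex of $2C_3$, since each triangle contains closed walks of length three. Hence, the multisets of per-node RWSE vectors differ, and RWSE separates the two graphs.

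The main obstacle is the first direction: certifying that the Cvetkovic trees admit a return-probability-preserving vertex bijection. This is a vertex-level strengthening of ordinary cospectrality and is specific to the construction; it can be verified either by a direct combinatorial/spectral argument exploiting the symmetry of the construction or by explicitly computing the closed-walk generating functions at each vertex of the two small trees in \Cref{fig:PEtheory}(b) and matching them up under $\varphi$. The second direction, by contrast, reduces to the immediate parity observation above.
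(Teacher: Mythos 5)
Your second direction is both correct and \emph{cleaner} than the paper's: the paper invokes CSL graphs (via its \Cref{lemma:RWPE-CSLgraphs}), whereas your $C_6$ versus $2C_3$ example is immediate. Both graphs are $2$-regular on six vertices so $1$-WL produces identical color histograms, while the three-step return probability is $0$ on the bipartite cycle and $1/4$ on each triangle vertex, so RWSE separates them. That is a valid and more elementary witness.

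Your first direction contains a genuine gap. You assert that a return-probability-preserving vertex bijection ``is the defining property of Cvetkovic's construction'' and that verification amounts to matching closed-walk generating functions vertex-by-vertex. But the Cvetkovic trees are defined by having equal \emph{adjacency-matrix} eigenvalues and graph angles, which gives equality of $\vec{A}^k_{vv}$ (closed-walk counts) under the bijection. RWSE is built from $(\vec{D}^{-1}\vec{A})^k_{vv}$, i.e.\ from walk probabilities in which each step is weighted by $1/\deg$ of the \emph{intermediate} vertex. These two quantities do not coincide in general: a closed-walk-count-preserving bijection need not preserve degree-normalized return probabilities, because the latter depend on the degree profile along the interior of every walk, not just on the walk count. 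So appealing to Cvetkovic's cospectrality/coangularity does not discharge the obligation. The paper's proof sidesteps this by not invoking the spectral characterization at all: it decomposes each tree into two backbone nodes and the attached subtrees $X$, $Y_1$, $Y_2$, directly propagates the degree-normalized transition probabilities (e.g.\ $\tfrac{1}{5}$ and $\tfrac{2}{5}$ out of a degree-$5$ backbone node), and matches the resulting RWSE vectors class by class. To repair your argument you would need to carry out exactly this kind of direct computation on the normalized walk matrix (or establish cospectrality plus coangularity for $\vec{D}^{-1/2}\vec{A}\vec{D}^{-1/2}$ rather than for $\vec{A}$), rather than citing the adjacency-spectral property of the construction.
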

We briefly highlight the most essential proof idea. Concretely, the selected trees introduced by \citet{cvetkovic1988trees}, shown in \Cref{fig:PEtheory},
are known not to be distinguishable using their eigenvalues and graph angles. However, all trees can be distinguished by the $1$-WL test \citep{Cai+1992+WL}. At the same time, it is well-known that RWSE can distinguish indistinguishable CSL graphs by the $1$-WL \citep{dwivedi2021graph}. 
Further, we note that \Cref{lemma:1WLvsRWSE} provides not only a single pair of graphs but rather an infinite number of trees indistinguishable by RWSE. 
Finally, we show the following result relating RRWP to RWSE.
\begin{proposition}\label{lemma:RWSEvsRRWP}
RRWP is strictly more expressive than RWSE, given the same random walk length. 
\end{proposition}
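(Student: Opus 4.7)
The plan is to prove \Cref{lemma:RWSEvsRRWP} in two steps: first the non-strict comparison $\text{RWSE}\preceq\text{RRWP}$, then a concrete graph pair witnessing strictness. Writing $M_G \coloneqq D_G^{-1} A_G$ for the degree-normalized adjacency of $G$, the length-$K$ RWSE feature of a node $v$ is the diagonal tuple $([M_G^k]_{vv})_{k=1}^{K}$, whereas the length-$K$ RRWP feature of an ordered pair $(u,v)$ is $([M_G^k]_{uv})_{k=1}^{K}$. Hence RWSE is exactly the diagonal restriction of RRWP, so any bijection $\pi\colon V(G)\to V(H)$ that preserves RRWP features a fortiori preserves RWSE features. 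This immediately implies that every pair of graphs distinguished by RWSE is also distinguished by RRWP.

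For strictness, the plan is to exhibit two non-isomorphic graphs that RWSE cannot tell apart but RRWP can. A clean source is any pair of non-isomorphic strongly regular graphs (SRGs) sharing the same parameters; I will use the Shrikhande graph $S$ and the $4\times 4$ rook graph $L(K_{4,4}) = K_4 \square K_4$, both strongly regular with parameters $(16, 6, 2, 2)$. The key observation is that any SRG is walk-regular: applying the defining identity $A^2 = kI + \lambda A + \mu(J - I - A)$ inductively, every power $A^t$ can be written as an affine combination $\alpha_t I + \beta_t A + \gamma_t (J - I - A)$ whose coefficients depend only on $t$ and the SRG parameters $(k,\lambda,\mu)$. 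In particular $[A^t]_{vv} = \alpha_t$ is independent of $v$, so $[M^t]_{vv} = \alpha_t / k^t$ is a constant depending only on the parameters. Since $S$ and $L(K_{4,4})$ share these parameters, every node in either graph carries the same RWSE vector and any bijection preserves the RWSE features.

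It remains to show that RRWP separates the two graphs. The $k = 1$ slice of the RRWP tensor is $M_G = A_G / k$, so any bijection $\pi$ preserving RRWP features must in particular satisfy $[A_S]_{uv} = [A_{L(K_{4,4})}]_{\pi(u)\pi(v)}$ for every ordered pair, i.e., $\pi$ must be a graph isomorphism between $S$ and $L(K_{4,4})$. But these two SRGs are well known to be non-isomorphic: the rook graph has clique number $4$ (each row and column of the $4\times 4$ grid is a $K_4$), whereas the Shrikhande graph has clique number $3$. Hence no RRWP-preserving bijection exists, completing the strict comparison. The main obstacle is choosing the candidate pair: one needs graphs that are walk-regular (so that RWSE collapses to a vertex-independent vector) yet non-isomorphic (so that the adjacency information sitting in RRWP's first slice already suffices to separate them). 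Non-isomorphic SRGs with identical parameters are the canonical source, and the Shrikhande/rook pair is the smallest classical instance; any other such pair would work equally well.
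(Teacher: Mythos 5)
Your proof is correct and follows the same two-part structure as the paper's argument (non-strict containment via the diagonal restriction of the random-walk tensor, then a concrete pair of graphs for strictness), but you pick a genuinely different strictness witness. The paper stays within its own toolbox: it reuses the CSL pair $G_{(11,3)}$ and $H_{(11,4)}$ from \Cref{lemma:RWSE-CSLcounterex}, already shown to be RWSE-indistinguishable, and appeals to \Cref{lemma:CSL-RRWP}, which shows RRWP separates any two non-isomorphic CSL graphs. You instead use the Shrikhande graph and the $4\times 4$ rook graph, both strongly regular with parameters $(16,6,2,2)$: the adjacency algebra of an SRG is spanned by $\vec{I}$, $\vec{A}$, and $\vec{J}-\vec{I}-\vec{A}$, so every power $\vec{A}^t$ has constant diagonal determined solely by the parameters, hence both regular graphs produce the identical (vertex-independent) RWSE vector, while their non-isomorphy is certified by clique numbers $4$ versus $3$, and any bijection preserving the full RRWP tensor would preserve the normalized-adjacency slice and hence be a graph isomorphism. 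Your route is arguably cleaner — walk-regularity yields RWSE-equality as a one-line spectral fact, whereas the paper's CSL indistinguishability argument is a more bespoke walk-counting exercise — at the cost of importing an external graph family rather than reusing the CSL machinery already set up for other results in the appendix. One small slip: in the paper's definition the index-$1$ slice of the RRWP tensor is $\vec{I}$ and the index-$2$ slice is $\vec{R} = \vec{D}^{-1}\vec{A}$, so the normalized adjacency sits in the second slice rather than the first; this does not affect the substance of your argument.
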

\paragraph{Random-walk PEs and eigen PEs}
We provide results for RWSE, RRWP, LPE, and SPE. In contrast to previous works \citep{ma+2023+GRIT,Zhang-PEtheory-2024}, we analyze the expressivity of RRWP directly, without using the GRIT architecture. We find that RRWP is approximated by SPE, which in turn is strictly weaker than the $3$-WL test \citep{Zhang-PEtheory-2024}. Further, we expand on proofs by \citet{Lim-SignNet-2023} to approximate RWSE using LPE. Taken together, we obtain a fine-grained hierarchy of PEs, which we summarize in the following result.
\begin{proposition}\label{Proposition:SPELPERWSE}
SPE is at least as expressive as LPE and RRWP,
LPE is at least as expressive as RWSE.
\end{proposition}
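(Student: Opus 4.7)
The plan is to decompose the proposition into three simulation claims: (i) LPE is at least as expressive as RWSE, (ii) SPE is at least as expressive as RRWP, and (iii) SPE is at least as expressive as LPE. Each follows a common template, namely to exhibit a parameterization of the stronger PE, composed with a downstream neural network, that reproduces the features of the weaker one, after which the two PEs must distinguish the same pairs of graphs.

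For claim (i), the starting point is the spectral identity $\vec{D}^{-1/2}\vec{A}\vec{D}^{-1/2} = \vec{I} - \vec{L}_{\mathrm{sym}} = \sum_j (1-\lambda_j)\vec{v}_j \vec{v}_j^\top$, from which a short computation gives the RWSE return probability $[\vec{M}^k]_{ii} = \sum_j (1-\lambda_j)^k v_{j,i}^2$. This expression is a polynomial in $\lambda_j$ and, through the squaring, is sign-invariant in $\vec{v}_j$. Since LPE exposes the eigenpairs $(\vec{v}_j, \lambda_j)$ to a subsequent neural network, universal approximation provides a small MLP that realizes this sum for every walk length up to the RWSE depth, recovering RWSE from LPE.

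For claim (ii), I would apply the same identity at the pair level. The RRWP feature $[\vec{M}^k]_{ij}$ admits the representation $d_i^{-1/2}\bigl[\sum_s (1-\lambda_s)^k \vec{v}_s \vec{v}_s^\top\bigr]_{ij} d_j^{1/2}$ after accounting for degree normalization. SPE is definitionally of the form $\rho\bigl(\sum_s \phi(\lambda_s)\, \vec{v}_s \vec{v}_s^\top\bigr)$, so instantiating $\phi$ as the vector of polynomials $\bigl((1-\lambda)^k\bigr)_{k=0}^{K-1}$ yields exactly the needed spectral tensors; the outer degree rescaling is linear in the degrees, which are accessible through the node embedding channel and can therefore be absorbed into $\rho$.

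For claim (iii), I would argue that any pair of graphs LPE can distinguish is already distinguished by SPE. The main obstacle, and what I expect to be the delicate step, is that LPE is not basis-invariant whereas SPE is, so LPE appears a priori to carry strictly more information. I would resolve this by restricting attention to distinguishability of isomorphism classes, which is basis-invariant by definition: any function of LPE consistent on isomorphism classes must depend on the eigendecomposition only through basis-invariant statistics, and every such statistic is recoverable from SPE's spectral projectors $\vec{v}_s \vec{v}_s^\top$ together with $\phi(\lambda)$ for sufficiently rich $\phi$. I would then invoke the universality theorem of \citet{Huang-SPE-2024} for SPE over basis-invariant equivariant functions to conclude the simulation, modulo the standard continuity and density argument that composes the approximation with the downstream GDT.
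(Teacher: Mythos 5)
Your decomposition into three claims, (i) LPE $\succeq$ RWSE, (ii) SPE $\succeq$ RRWP, and (iii) SPE $\succeq$ LPE, mirrors the paper's, and for (i) and (ii) your proof sketch is essentially identical to the paper's. For (i), the paper uses exactly the spectral identity you write, $(\text{diag}(\mathbf{D}^{-1}\mathbf{A}))^k = \text{diag}\big(\sum_j (1-\lambda_j)^k \vec{v}_j\vec{v}_j^\top\big)$, following \citet{Lim-SignNet-2023}, and argues universal approximation of the polynomial by $\phi$ and $\rho$. For (ii), the paper takes the same route you do: it rewrites RRWP via generalized PageRank, $\sum_k \gamma_k(\mathbf{D}^{-1}\mathbf{A})^k = \sum_i\big(\sum_k \gamma_k(1-\lambda_i)^k\big)\mathbf{P}_i(u,v)\,\text{deg}(u)^{-1/2}\text{deg}(v)^{-1/2}$, instantiates $\phi$ as polynomials in $\lambda$, and recovers the degree factors from $\mathbf{L}_{uu}=\sum_i\lambda_i\mathbf{P}_i(u,u)$ (rather than appealing to a node-feature channel as you do, but these are interchangeable implementations of the same idea).

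Where you genuinely diverge is in (iii), SPE $\succeq$ LPE. The paper argues this \emph{inside the color-refinement framework of \citet{Zhang-PEtheory-2024}}: it writes LPE, SignNet, BasisNet, and SPE as compositions of color-refinement operators (\Cref{refinementalgodef}), shows by a chain of refinement comparisons that BasisNet refines the LPE refinement (\Cref{propa1refinement}, \Cref{propa2refinement}), and then inherits SPE $\succeq$ LPE from the established equivalence of SPE and BasisNet in \citet{Zhang-PEtheory-2024}. You instead take a direct analytic route: restrict to isomorphism-class distinguishability so that only basis-invariant statistics matter, observe that such statistics factor through the spectral projectors $\vec{v}_s\vec{v}_s^\top$ and $\phi(\lambda)$, and invoke the universality of SPE over basis-invariant equivariant functions from \citet{Huang-SPE-2024}. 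Both routes are legitimate; the paper's refinement calculus is more formal and composes cleanly with the rest of their PE hierarchy (it also yields the SAN/SignNet/BasisNet relationships for free), while your approach is shorter but leans on two claims you flag as delicate and leave unproved: that any LPE-based graph distinguisher consistent on isomorphism classes is necessarily a basis-invariant equivariant function of the eigendecomposition, and that every such function is realized by SPE's $\rho\big(\sum_s\phi(\lambda_s)\vec{v}_s\vec{v}_s^\top\big)$ form. Those gaps are genuine but fillable; the paper sidesteps them by working at the coarser level of color refinements, where invariance is built in.
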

We obtain an even more fine-grained hierarchy of Eigen-vector-based PEs by including additional embeddings as discussed in \Cref{appendix:proofssec4}. %
As a consequence of the hierarchy, all PEs considered in this section can distinguish graphs not distinguishable by the $1$-WL or, equivalently, the GDT with NoPE. 
\infobox{sns_orange}{\textbf{Insight 2:} Any PE in $\{\text{RWSE}, \text{RRWP}, \text{LPE}, \text{SPE}\}$ enhances the expressivity of the GDT.}%

We present a summary of our results in \Cref{fig:PEtheory} and highlight our contributions to a completed theoretical expressiveness hierarchy of PEs, complementing the work of \citet{Zhang-PEtheory-2024} and \citet{black+2024+comparinggts}. Together with the theoretical insights obtained in \Cref{sec:framework} we arrive at a detailed understanding of the expressivity of the GDT with four popular PEs. 

\begin{figure}
    \centering
    \includegraphics[width=\linewidth]{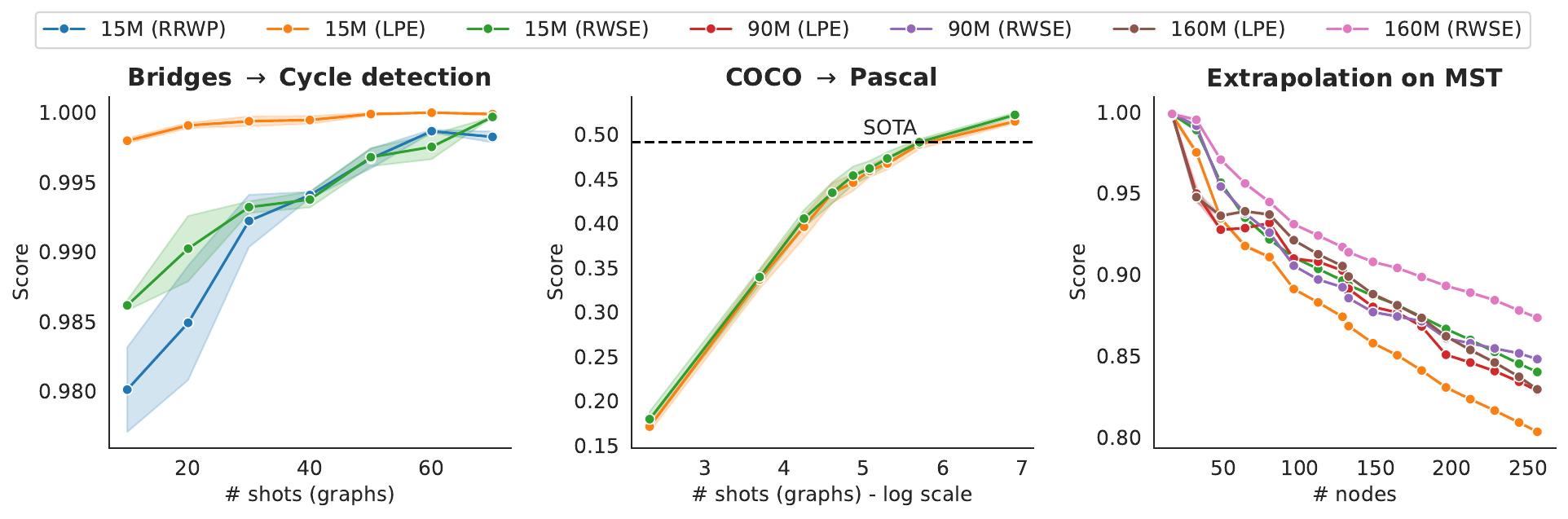}
    \caption{Results of inference-time experiments. From left to right: Few-shot transfer from \textsc{Bridges} to \textsc{Cycles} with $3$-NN over 3 random seeds; few-shot transfer from \textsc{COCO} to \textsc{Pascal} with $5$-NN over 10 random seeds; extrapolation beyond training data on \textsc{MST} over 3 random seeds.}
    \label{fig:inference}
\end{figure}

\section{Experiments}\label{sec:experiments}
In this section, we empirically evaluate our GDT model on various real-world and synthetic datasets, at graph-, node-, edge-level, and in- and out-of-distribution. Concretely, we compare the performance and efficiency of the PEs in \Cref{sec:expressive_power} and study few-shot transfer, parameter scaling, and size generalization capabilities of the GDT.
In the process, we hope to gain a deeper understanding of the relationship between empirical performance and expressivity, based on the results presented in \Cref{sec:expressive_power}, and derive generalizable insights for GTs. Next, we describe our implementation design, dataset selection, and experimental schedule, and present our empirical results.

\paragraph{Implementation} We base our implementation on the \texttt{torch.nn.TransformerEncoderLayer} proposed in PyTorch \citep{Paszke2019PyTorch}. This allows us to use memory and runtime-efficient attention implementations such as FlashAttention \citep{Dao+2022+FlashAttention} and Memory Efficient Attention \citep{Rabe2021Memoryeffattn}. In addition, we seek to harmonize implementation differences across PEs to reduce the impact of implementation-specific advantages as much as possible by using the same number and width of MLP layers and the same activation functions across all PEs. A complete overview of our implemented model architecture is given in \Cref{app:experimental_details}.

\paragraph{Real-world datasets} 
For consistency across graph-, node-, and edge-level tasks, we measure dataset size by the number of input tokens. For the real-world tasks, we evaluate our models on PCQM4Mv2 (\textsc{PCQ}) \citep{hu2021ogblsc}, a molecular property prediction dataset with 52.5M tokens, \textsc{COCO} \citep{dwivedi2022long}, an image-based object detection dataset with 58.8M tokens, and OGB-Code2 (\textsc{Code}) \citep{hu2020ogb}, a code summarization dataset with 56.7M tokens. We focus exclusively on large-scale datasets from diverse domains to derive generalizable insights for GT performance. For few-shot transfer, we select \textsc{Pascal} \citep{dwivedi2022long}, which has the same image domain as \textsc{COCO} but uses different object categories. \citet{bechlerspeicher2025positiongraphlearninglose} already demonstrate strong transfer from \textsc{COCO} to \textsc{Pascal} through fine-tuning. We give an overview of state-of-the-art performance on these datasets in \Cref{app:experimental_details}.

\paragraph{Algorithmic reasoning datasets} In addition to real-world tasks, we add synthetic algorithmic reasoning tasks for graph algorithms inspired by the CLRS benchmark \citep{velickovic+2022+clrs}. %
Our selection includes the minimum spanning tree problem (\textsc{MST}), detecting bridges in a graph (\textsc{Bridges}), and calculating the maximum flow in an undirected graph (\textsc{Flow}). Here, \textsc{Bridges} and \textsc{MST} are edge-level tasks, and \textsc{Flow} is a graph-level task. We further consider the task of detecting whether a node lies on a cycle (\textsc{Cycles}), a node-level complement to \textsc{Bridges}, to evaluate transfer learning capabilities. Following the literature in algorithmic reasoning for transformer architectures \citep{ZhouTeachingAlgoReasoning2022,Zhoulengthgen2024, ZhouTransformersLengthGeneralization2024} and in particular, graph algorithmic reasoning \citep{DiaoRelationalAttn2023,velickovic+2022+clrs, clrs-textMarkeeva2024,mueller2024et}, we evaluate in the size generalization setting where test-time graph instances are up to 16 times larger than those seen during training. Size generalization has been recently identified as a key challenge for graph learning \citep{MorrisPositionTheory2024}. An expanded introduction to each task is available in \Cref{app:experimental_details}. 

\subsection{Experimental design}
In the first step, we evaluate different PE choices from \Cref{sec:expressive_power} for the GDT on all six upstream tasks. We also consider NoPE, which uses only edge embeddings to infer the graph structure.
We fix the parameters to 15M; see \Cref{app:experimental_details} for the choice of hyperparameters. %
Additionally, we compute the runtime and memory efficiency observed for each PE and task. %
Due to the high memory requirements of storing full random-walk matrices for RRWP on large datasets, we compute RRWP matrices at runtime. To allow a fair comparison between PEs that accounts for computational efficiency, we set a compute budget of 5 GPU days for the 16M models. \textcolor{black}{Furthermore, we evaluate the GDT in comparision to Graphormer-GD \citep{Zha+GDWL+2023} on BREC \citep{brecbenchmark}, a benchmark evaluating theoretical expressiveness on graph samples. For this, we use RWSE and LPE as PEs for both architectures.}

In the second step, we select the best models from the first step and further evaluate them using few-shot transfer, scaling model size, and extrapolating the graph size. In particular, we assess few-shot transfer from \textsc{Coco} to \textsc{Pascal}, as well as few-shot transfer from \textsc{Bridges} to \textsc{Cycles}. Note that even though \textsc{Bridges} is an edge-level task and \textsc{Cycles} is a node-level task, we should expect strong transfer, as a node lies on a cycle if and only if at least one of its incident edges is not a bridge.  
For scaling, we train additional models with 90M and 160M parameters for \textsc{PCQ} and \textsc{MST}.
Finally, we provide extrapolation results for up to 256 nodes (16$\times$ the size of the training graphs) on \textsc{MST}.

\subsection{Discussion of base models}
We present our task results in \Cref{tab:pe_results}, as well as runtime and memory requirements in \Cref{fig:scaling_resources}. RRWP performs best of all selected PEs on 4 out of 6 tasks. However, due to the need to compute RRWP matrices at runtime, for \textsc{Coco} and \textsc{Code} we use additional resources to provide results. 
Most notably, RWSE and LPE perform significantly better than NoPE and SPE for all tasks except \textsc{Flow}, but do not face any efficiency issues. Furthermore, LPE and RWSE perform similarly across tasks, placing second and third, respectively, and are often competitive with the less efficient RRWP. \textcolor{black}{Despite theoretical results observed in \Cref{sec:expressive_power}, we note the differences in experimental results to be less pronounced. This aligns with previous work on PEs, indicating discrepancies between theory and empirical evaluation. However, we extend this observation by providing an evaluation at scale and a direct comparison of PEs on a single architecture.}

\infobox{sns_orange}{\textbf{Insight 3:} PE efficiency can vary greatly while predictive performance differences are less pronounced.}

Due to their favorable efficiency and competitive predictive performance, we selected LPE and RWSE for our scaling and extrapolation experiments and few-shot transfer. For few-shot transfer from \textsc{Bridges} to \textsc{Cycles}, we additionally select RRWP due to its significantly better OOD performance on \textsc{Bridges}.

\textcolor{black}{With results for BREC shown in \Cref{tab:pe_brec_results}, we observe the performance of GDT and Graphormer-GD models to be comparable and differences to be negligible, aligning experimental results to theoretical observations made in \Cref{theorem:standard_gt_gdwl} and showcasing non-trivial performance. Further, the difference in expressiveness of RWSE and LPE can be observed for both models. However, we note that BREC results depend on specific hyperparameter choices and are often unstable during training. An overview of the hyperparameters selected for BREC can be found in \Cref{subsection:BREC_implementation}. }
\subsection{Extended evaluation}
We present the scaling results in \Cref{fig:scaling_resources} (a), few-shot transfer in \Cref{fig:inference} (a) and (b), and extrapolation results in \Cref{fig:inference} (c). For scaling, we observe that the relative performance between PEs is relatively robust to model scale. In three out of four cases, in- and out-of-distribution performance improves consistently with increasing model scale. The only exception is the 160M model with LPE on \textsc{MST}, which drops off slightly compared to its 90M counterpart but still outperforms both 15M models on this task.

\infobox{sns_orange}{\textbf{Insight 4:} Scaling the GDT generally improves in- and out-of-distribution performance.}

For few-shot transfer, we find that all three evaluated models can demonstrate strong performance when transferring from \textsc{Bridges} to \textsc{Cycles} with just a few shots. In particular, the 15M model with LPE already achieves near-perfect performance with 10 shots and is significantly better than RWSE and RRWP up to 60 shots. When transferring from \textsc{COCO} to \textsc{Pascal}, we observe performance increases even for 1000 shots where both RWSE and LPE surpass the current SOTA on \textsc{Pascal}, despite seeing less than 10\% of the available training samples at inference-time; see \Cref{app:experimental_details} for an overview of state-of-the-art performance on \textsc{Pascal}.

\infobox{sns_orange}{\textbf{Insight 5:} Representations learned by the GDT allow for effective few-shot transfer.}

Finally, we find all PEs and model scales to extrapolate well on \textsc{MST}. In particular, we still observe an F1 score of around 85 at 256 nodes or 16$\times$ the graph sizes seen during training.

\begin{table}
    \centering
      \caption{16M parameter results for different PEs over 3 random seeds. \textsc{PCQ} MAE is in micro electron volt (meV) for clarity of presentation. The mean rank is computed by sorting the models' scores for each task.}\label{tab:pe_results}
\resizebox{\columnwidth}{!}{\begin{tabular}{lcccccccc}\toprule
\multirow{2}{*}{\textbf{PE}} & Mean & \textsc{PCQ} & \textsc{COCO} & \textsc{Code} & \textsc{Flow} & \textsc{MST} & \textsc{Bridges} \\
 & Rank &  \small MAE $\downarrow$ & \small F1 $\uparrow$ & \small F1 $\uparrow$ & \small MAE $\downarrow$ & \small F1 $\uparrow$ & \small F1 $\uparrow$ \\ \midrule
 NoPE & 3.50 & 93.6 \tiny $\pm$ 0.5& 43.12 \tiny $\pm$ 00.85& 19.27 \tiny $\pm$ 00.20&  1.73 \tiny $\pm$ 0.09 & 93.29 \tiny $\pm$ 00.88 & 55.36 \tiny $\pm$ 24.94 \\
 LPE & 2.50 & 92.7 \tiny $\pm$ 0.9 & 44.83 \tiny $\pm$ 00.71& 19.48 \tiny $\pm$ 00.21& 1.75 \tiny $\pm$ 0.12 & 91.08 \tiny $\pm$ 00.95 & 91.76 \tiny $\pm$ 07.66 \\
 SPE & 4.00 & 94.1 \tiny $\pm$ 0.6 & 43.87 \tiny $\pm$ 00.54 & 19.35 \tiny $\pm$ 00.21& 1.98 \tiny $\pm$ 0.14 & 92.52 \tiny $\pm$ 00.12 & 54.81 \tiny $\pm$ 21.20 \\
 RWSE & 2.67 & 92.9 \tiny $\pm$ 0.6& 43.82 \tiny $\pm$ 01.01 & 19.39 \tiny $\pm$ 00.47& 1.49 \tiny $\pm$ 0.02 & 93.26 \tiny $\pm$ 00.45 & 87.34 \tiny $\pm$ 03.97 \\
 RRWP & \textbf{2.33} & 90.4 \tiny $\pm$ 0.3& 39.91 \tiny $\pm$ 01.07 & 19.42 \tiny $\pm$ 00.10 & 1.45 \tiny $\pm$ 0.06 & 96.04 \tiny $\pm$ 00.91  & 99.21 \tiny $\pm$ 00.09 \\
\bottomrule
\end{tabular}}
  
\end{table}

\begin{figure}
    \centering
    \includegraphics[width=\linewidth]{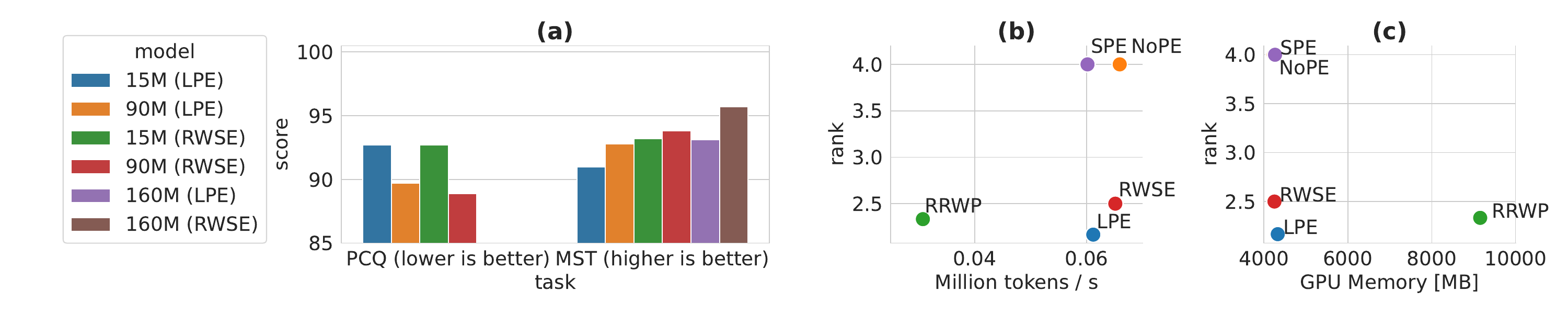}
    \caption{(a): Results on 90M and 160M models for \textsc{PCQ} and \textsc{MST} evaluated on LPE and RWSE. (b): Number of tokens evaluated per second during training for each PE. Results are obtained by averaging runtimes per token across tasks. (c): Average GPU memory requirement for each PE. }
    \label{fig:scaling_resources}
\end{figure}

\begin{table}
    \centering
      \caption{Results of GDT and Graphormer-GD models on the BREC benchmark. Each column indicates the number of samples correctly distinguished by each model.}\label{tab:pe_brec_results}
\resizebox{0.67\columnwidth}{!}{\begin{tabular}{lcccccc}\toprule
\textbf{Model} & PE & Basic & Regular & Extension & CFI & Total \\
\midrule
Graphormer-GD	&RWSE&	57&	50&	96&	0&	203 \\
Graphormer-GD	& LPE &	55&	40&	85&	3&	183 \\
GDT	&RWSE&	57&	50&	96&	0&	203 \\
GDT	&LPE&	54&	39&	84&	3&	180 \\
\bottomrule
\end{tabular}}
  
\end{table}
\section{Limitations}\label{sec:limitations}
Currently, the GDT can only make use of memory-efficient attention at inference time, due to the use of a learnable attention bias. For example, the learnable attention bias is not compatible out-of-the-box with FlashAttention2 \citep{dao2024flashattention2} or FlexAttention \citep{dong2025flexattention}. Moreover, many more PE variants exist that could be included in our study; see, for example, PEs listed in \Cref{sec:introduction}. Finally, we note that our current implementation does not take the sparsity of the attention bias into account, which can lead to prohibitive memory requirements for very large graphs. 

\section{Conclusion}
We establish the GDT, a generalizable, expressive graph transformer based on the standard transformer implementation. We show the GDT to be equivalent to the GD-WL in terms of theoretical expressiveness, augmented in expressivity by using PEs and their respective expressiveness.  
Further, we demonstrate strong empirical performance across multiple domains and large-scale datasets, determining an empirical hierarchy of PEs. We also show the GDT to be able to learn transferable representations, extrapolate on graph size for synthetic tasks, and results being robust concerning model scale. Thereby, we provide generalizable theoretical and empirical insights for graph transformers. 

\begin{ack}
TS, LM, and CM are partially funded by a DFG Emmy Noether grant (468502433) and RWTH Junior Principal Investigator Fellowship under Germany’s Excellence Strategy. We thank Erik Müller for crafting the figures.
\end{ack}

\newpage
\bibliography{bibliography}

 \newpage
\appendix

\section{Experimental details}\label{app:experimental_details}
Here, we present hyperparameter choices, architecture design, and dataset selections for the empirical evaluation of our GT architecture. 
\begin{table} 
\caption{Hyperparameters for our 16M models.}
    \centering
\resizebox{\columnwidth}{!}{\begin{tabular}{lcccccccc}\toprule
\small \textbf{Hyperparameter} & \textsc{PCQ} & \textsc{COCO} & \textsc{Code} & \textsc{Flow} & \textsc{MST} & \textsc{Bridges}  \\
\midrule
\small Learning rate & 1e-4 & 4e-4/3.5e-4 & 1e-4/1.5e-4 & 1e-4/2e-4 & 3e-4 & 2e-4/1e-4\\
\small Batch size &  256 & 32 & 32 & 256 & 256 & 256 \\
\small Optimizer &  AdamW & AdamW & AdamW & AdamW & AdamW & AdamW\\
\small Grad. clip norm & 1 & 1 & 1 & 1 & 1 & 1\\ \midrule
\small Num. layers & 16 & 16 & 16 & 16 & 16 & 16\\
\small Hidden dim. & 384 & 384 & 384 & 384 & 384 & 384\\
\small Num. heads & 16 & 16 & 16 & 16 & 16 & 16\\
\small Activation & ReLU & ReLU & ReLU & ReLU & ReLU & ReLU \\ \midrule
\small RWSE/RRWP Steps &  32 & 32 & 32 & 16 & 16 & 16\\
\small Num eigvals/eigvecs &  32 & 32 & 32 & 16 & 16 & 16\\
\small Hidden dim. RWSE/RRWP & 768 & 768 & 768 & 768 & 768 & 768 \\
\small Hidden dim. LPE/SPE & 384 & 384 & 384 & 384 & 384 & 384\\
\small Num. layers $\phi$ & 2 & 2 & 2 & 2 & 2 & 2\\
\small Num. layers $\rho$ & 2 & 2 & 2 & 2 & 2 & 2\\
\small GNN type $\rho$ (SPE) & GIN & GIN & GIN & GIN & GIN & GIN \\ \midrule
\small Edge encoder & MLP & MLP & MLP & MLP & MLP & MLP \\
\small Weight decay & 0.1 & 0.1 & 0.1 & 0.1 & 0.1 & 0.1\\
\small Dropout & 0.1 & 0.1 & 0.1 & 0.1 & 0.1 & 0.1\\
\small Attention dropout &  0.1 & 0.1 & 0.1 & 0.1 & 0.1 & 0.1\\ \midrule 
\small \#Steps & 2M & 1M & 200k & 11718 & 11718 & 11718\\
\small \#Warmup steps & 20k & 10k & 2k & 118 & 118 & 118\\

\bottomrule
\end{tabular}}
  
    \label{tab:hyperparam16M}
\end{table}

\begin{table}
   \caption{Hyperparameters for our 90M and 160M models.}
    \centering
\resizebox{0.8\textwidth}{!}{\begin{tabular}{lccccc}\toprule
\small\textbf{Hyperparameter} & \textsc{PCQ(90M)} & \textsc{MST(90M)} & \textsc{MST(160M)}  \\
\midrule
\small Learning rate & 1e-4 & 1e-4/7e-5 & 7e-5\\
\small Batch size &  256 & 256 & 256  \\
\small Optimizer &  AdamW & AdamW & AdamW \\
\small Grad. clip norm & 1 & 1 & 1 \\ \midrule
\small Num. layers & 24 & 24 & 24 \\
\small Hidden dim. & 768 & 768 & 1024 \\\
\small Num. heads & 16 & 16 & 16 \\
\small Activation & ReLU & ReLU & ReLU \\ \midrule
\small RWSE/RRWP steps &  32 & 32 & 32 \\
\small Num. eigvals/eigvecs &  32 & 32 & 32 \\
\small Hidden dim. RWSE/RRWP & 768 & 768 & 768  \\
\small Hidden dim. LPE/SPE & 384 & 384 & 384 \\
\small Num. layers $\phi$ & 2 & 2 & 2 \\
\small Num. layers $\rho$ & 2 & 2 & 2 \\
\small GNN type $\rho$ (SPE) & GIN & GIN & GIN  \\ \midrule
\small Edge encoder & MLP & MLP & MLP  \\
\small Weight decay & 0.1 & 0.1 & 0.1 \\
\small Dropout & 0.1 & 0.1 & 0.1 \\
\small Attention dropout &  0.1 & 0.1 & 0.1 \\ \midrule 
\small \#Steps & 2M & 11718 & 11718\\
\small \#Warmup steps & 20k & 118 & 118\\

\bottomrule
\end{tabular}}
 
    \label{tab:hyperparam100M}
\end{table}

\begin{table}
   \caption{Dataset statistics.}
    \centering
\resizebox{\columnwidth}{!}{\begin{tabular}{lcccccccc}\toprule
\small \textbf{Statistic} & \textsc{Pcq} & \textsc{Coco} & \textsc{Code} & \textsc{Flow} & \textsc{MST} & \textsc{Bridges}  \\
\midrule
\small \# Graphs & 3.746M & 123,286 & 452,741 & 1M& 1M& 1M 	 \\
\small \# Avg. Nodes & 14.13 & 476.88 &  125.2 & 16/64 & 16/64& 16/64\\
\small \# Avg. Edges & 14.56 & 3,815.08 & 124.2 & 48.11/213.586 &31.66/209.34 & 48.46/ 395.02\\
\small Prediction level & graph & node & graph & graph & edge & edge \\
\small Metric & MAE & F1 & F1 & MAE & F1 & F1\\

\bottomrule
\end{tabular}}
 
    \label{tab:datasetstats}
\end{table}
\subsection{Data sources and licenses}
PCQM4MV2 is available at \url{https://ogb.stanford.edu/docs/lsc/pcqm4mv2/} under a CC BY 4.0 license. OGB-Code2 is available at \url{https://ogb.stanford.edu/docs/graphprop/#ogbg-code2} under a MIT license. The COCO-SP and PASCAL-VOC-SP datasets as part of the LRGB benchmark \citep{dwivedi2022long} are available at \url{https://github.com/vijaydwivedi75/lrgb} under a CC BY 4.0 license. \textcolor{black}{ BREC is available at \url{https://github.com/GraphPKU/BREC} under a MIT license.} Statistics for all datasets, including the algorithmic reasoning datasets, are available in \Cref{tab:datasetstats}

\subsection{Hyperparameters}\label{app:Hyperparams}
 \Cref{tab:hyperparam16M} and \Cref{tab:hyperparam100M} give an overview of the hyperparameters used for models highlighted in our work. Given the large number of hyperparameters and the scale of the tasks, we did not perform a grid search or any other large-scale hyperparameter optimization.  Nonetheless, we swept the learning rate for each task and model size. Across the experiments, we select the hyperparameters based on the best validation score and then evaluate on the test set. We search for suitable learning rates on the 16M models to determine which models to scale. Due to the increased computational demand, we then reduce the learning rate for the 100M models. 

For \textsc{PCQ}, we set the learning rate to 1e-4 after sweeping the learning rate over the set \{7e-5, 1e-4, 3e-4\}. Further, we set the learning rate for \textsc{COCO} to 4e-4 for eigen-information-based embeddings and 3.5e-4 for RWSE after sweeping over \{7e-5, 1e-4, 3e-4, 4e-4\}. For \textsc{Code}, we reduce the learning rate to 1.5e-4 (RWSE) with the same sweep as with \textsc{PCQ}, and across all \textsc{MST} runs, we keep the learning rate at 3e-4 for the 16M models, reducing the learning rate to 1e-4 (LPE) and 7e-5 (RWSE)  for the 90M model and 7e-5 for the 160M model at \textsc{MST} with the same initial sweep as for \textsc{COCO}. Furthermore, for \textsc{Flow}, we set the learning rate to 1e-4 (LPE, SPE, NoPE) and 2e-4 (RWSE, RRWP), respectively. We obtain similar results for \textsc{Bridges} with 1e-4 (RWSE, LPE) and 2e-4 (RRWP, SPE, NoPE) as the learning rate, using the initial sweep from \textsc{PCQ}.
In addition, we evaluated each PE with \{8,32\} random walk steps or eigenvectors and \{4,16\} for algorithmic reasoning tasks. Regarding PE encoder design, we selected a simple MLP architecture with two layers where applicable. However, we use the same number of layers, heads, and embedding dimensions across datasets in our transformer architecture, thereby keeping the architecture unchanged. Otherwise, we follow previous literature for initial hyperparameter choices, namely the GraphGPS \citep{rampasek+2022+graphgps}, GRIT \citep{ma+2023+GRIT}, and Graphormer \citep{Ying+2021+Graphormer} papers. 
We used an AdamW optimizer for each experiment with $\beta_1=0.9$ and $\beta_2= 0.999$. Further, the learning rate scheduler uses a cosine annealing schedule with a 1\% warm-up over the total number of steps. Additionally, we use an L1 loss for regression targets and a cross-entropy loss for classification targets, except \textsc{Code}, where we use the proposed loss function. 
In \Cref{tab:runtime} and \Cref{tab:memory}, we further report runtimes and memory usage of all models evaluated in our work. 
\subsection{Architecture}\label{app:implementation}
In the following, we showcase the implementation of our GT architecture and the injection of PE information into the attention mechanism. We consider an Encoder, Processor, Decoder architecture with additional preprocessing for the PEs and graph-specific features.
\paragraph{Preprocessing} First, we preprocess each dataset to include the respective eigenvalues and eigenvectors of the graph Laplacian and the powers of the random walk matrices. These are then applied to the respective PE encoder, which is an MLP with two layers that casts the PE features to the embedding dimension. 
We consider transformed graphs for edge-level tasks, such as BRIDGES and MST, as a special case. In this case, the graph is converted to constitute the edge-level graph corresponding to the original graph. Then, node features and PE features are computed on this transformed graph.
For the GDT, we provide a maximum context size per dataset. Tokens exceeding the context size are then removed. 
\paragraph{Encoder}
The node and edge features of each dataset's graphs are then fed into a linear layer, mapping them to the embedding dimension. These feature embeddings are specific to each dataset and embed graph-specific features. For \textsc{Code} we consider additional preprocessing steps, as described by \citep{hu2021ogblsc} to derive the respective graph structure. Further, we add a [cls] token as it is a standard practice to read out graph-level representations \citep{ying2021transformers}. 
\paragraph{Processor}
Following our description of the GDT architecture as shown in \Cref{sec:framework},
a single layer in the GDT architecture computes the expression shown in \Cref{def:transformer} using GELU as a nonlinearity. In the case of absolute PEs, they are added to the node embeddings before the initial layer; for relative PEs, they are added to the attention bias $B$. The GT layer then computes full multi-head scaled-dot-product attention over node-level tokens, adding $B$ to the unnormalized attention matrix before applying softmax. We refer to \Cref{background:transformer} for a detailed discussion. From this representation, including node and edge features, relative and absolute PEs, and the embedded graph structure, the processor computes representations of node- and graph-level features.
We then stack multiple GT layers together: 12 for the 16M model and 24 for the 100M model. 
\paragraph{Decoder} After the last layer, an MLP decoder with two layers is applied to provide the prediction head of the model. Since each dataset has its prediction target, we provide a decoder for each dataset mapping the last layer output to the prediction target, where $\vec{W}_1 \in \mathbb{R}^{d \times d}$ and $\vec{W}_2\in \mathbb{R}^{d \times o}$ are learnable weight matrices and $o$ is the respective output dimension for each task, i.e.,
\begin{equation*}
    \vec{W}_2\text{LayerNorm}(\text{GELU}(\vec{W}_1x)).
\end{equation*}
For clarity, we omit bias terms throughout this section. Each result is then passed to the respective loss function to compute the gradient step.
\subsection{Algorithmic reasoning data}\label{app:algoreasodef}
For our synthetic experiments, we evaluate on three out-of-distribution algorithmic reasoning tasks derived from the CLRS benchmark \citep{velickovic+2022+clrs}, totaling 100M tokens. These tasks assess size generalization in a controlled synthetic setting with randomly generated graphs. Unlike CLRS, we do not train models with intermediate algorithmic steps. Here, we describe graph generation and each algorithmic reasoning task in detail.

\paragraph{Graph generation}
We develop a heuristic graph-generation method that produces graphs with desirable problem-specific properties, such as a reasonable distribution of shortest-path lengths or the number of bridges. Concretely, we begin by sampling an Erdos-Renyi graph $G$ with $n$ nodes and edge probability $p$ and denote the connected components of $G$ with $C_1, \dots, C_m$. For each $i \in [m]$, we randomly choose a component $C_j$ with $j \neq i$. Then, we select random nodes $v \in C_i$ and $w \in C_j$ and augment $G$ with the edge $(i, j)$. We repeat this process $K$ times. We select parameters $p$ and $K$ for each task based on problem-specific characteristics. We detail these choices in the task descriptions, which we provide next.

\paragraph{Maximum flow}
In \textsc{Flow}, the task is to predict the maximum flow value in an edge-weighted directed graph. The task uses discrete node features indicating whether a node is the source, the sink, or neither. The task uses the flow capacity between two nodes as continuous scalar-valued edge features. \textsc{Flow} is a graph-level regression task.

\paragraph{Minimum spanning tree}
In \textsc{MST}, the task is to predict the set of edges that forms the minimum spanning tree (MST) in an edge-weighted graph with mutually distinct edge weights, ensuring the uniqueness of the MST. The task uses the weight of each edge as continuous scalar-valued edge features. \textsc{MST} is a binary edge classification task where the class label indicates whether an edge is contained within the MST.

\paragraph{Bridges} In \textsc{Bridges}, the task is to predict the set of edges that are bridges in an undirected graph. The task does not use any node or edge features. \textsc{Bridges} is a binary edge classification task where the class label indicates whether an edge is a bridge in the graph.

\subsection{Runtime and memory}\label{app:runtime_memory}
Here we provide additional information on the runtime and memory requirements of our GDT. We sample the runtime of each experiment by running multiple steps and averaging their runtime. For memory consumption, we consider the model's complete forward pass and estimate the allocated memory using PyTorch's memory profiling. All computations were made using \textsf{bfloat16} precision during computation. We run the experiments on a single node consisting of one L40 GPU with 40GB VRAM, 12 CPU cores, and 120GB RAM for all runtime and memory computations. \textcolor{black}{In case of COCO and CODE with RRWP as a PE, we used 2 l40 GPUs.}
We further note that the presented runtimes are the final runtimes obtained from the selected experiments, and significantly more runtime was used to obtain the chosen hyperparameter choices. We note that the
automatic compilation is performed automatically by torch.compile, improves the
runtime and memory scaling significantly across all tasks. 

\Cref{tab:runtime} shows the runtime for a single step, averaged across 1\,000 training steps obtained for each model. Timings were obtained using torch functionality. Further \Cref{tab:memory} shows the memory requirement for 1\,000 steps of each model. We further note the runtime speed improvements during inference experiments from \Cref{sec:experiments} while using FlashAttention \citep{Dao+2022+FlashAttention}.
\paragraph{Hardware optimizations} 
Efficient neural network compilation is already available via CUDA implementations in PyTorch and other programming languages such as Triton. We use torch.compile throughout all our experiments. In addition, we want to highlight FlashAttention \citep{Dao+2022+FlashAttention}, available for the standard transformer, and used in the GDT as an example of architecture-specific hardware optimizations that can reduce runtime and memory requirements.

\begin{table}
    \caption{16M/90M/160M models runtime results of a single step, averaged across 1\,000 steps. Each value is given in seconds/step.}\label{tab:runtime}
    \centering
\resizebox{0.9\textwidth}{!}{\begin{tabular}{lccccccc}\toprule
\multirow{1}{*}{\textbf{PE}} & \#Param. & \textsc{PCQ} & \textsc{COCO} & \textsc{Code}  & \textsc{Flow} & \textsc{MST} & \textsc{Bridges} \\
 \midrule
 NoPE & 16M & 0.079  & 0.105& 0.0921 & 0.071 & 0.096 &  0.072   \\
 LPE & 16M &  0.084 & 0.109&  0.104 & 0.071 & 0.106  &  0.088   \\
 SPE & 16M &  0.091 & 0.123&  0.105 & 0.059& 0.106 & 0.085   \\
 RWSE & 16M & 0.072  & 0.101 & 0.102 &0.071 & 0.091 & 0.088    \\
 RRWP &  16M & 0.137 & 0.135 &  0.219& 0.066 & 0.1452 & 0.101    \\
 \midrule
  LPE & 90M &  0.167 & - & -& - & 0.184 & -  \\
 RWSE & 90M & 0.159 & - &- & - & 0.186  & -  \\
   LPE & 160M &0.219 & - & -& - & 0.246 & -  \\
 RWSE & 160M &0.219 & - &- & - & 0.237 & -  \\
\bottomrule
\end{tabular}}

\end{table}

\begin{table}
    \caption{16M/90M/160M models memory requirements in MB for 1\,000 steps of each model during training.}\label{tab:memory}
    \centering
\resizebox{0.9\textwidth}{!}{\begin{tabular}{lcccccccc}\toprule
\multirow{1}{*}{\textbf{PE}} & \#Param. & \textsc{PCQ} & \textsc{COCO} & \textsc{Code}  & \textsc{Flow} & \textsc{MST} & \textsc{Bridges} \\
\midrule
 NoPE & 16M & 3120.63  & 5117.57 & 9749.29 & 1702.69 & 3255.37 & 2456.55    \\
 LPE & 16M &  3221.13 & 5239.89 &   9852.71 & 1763.87& 3401.60 &  2499.54   \\
 SPE & 16M &  3161.32 & 5157.65 &   9763.58 & 1730.19 & 3290.09 & 2490.0   \\
 RWSE & 16M & 3131.57  & 5147.34 & 9766.30 & 1713.27 & 3276.84 &  2474.63   \\
 RRWP &  16M & 5419.56 & 5223.77 &  19221.97 & 2253.85 & 5522.04 &  3723.31   \\
 \midrule
  LPE & 90M &  9844.48 & - & -& - & 10197.77 &  - \\
 RWSE & 90M & 9659.54 & - &- & - & 9947.07 & -  \\
   LPE & 160M &13513.76& - & -& - &13986.39  &  - \\
 RWSE & 160M &13266.44& - &- & - &13647.66  & -  \\
\bottomrule
\end{tabular}}

\end{table}

\subsection{Comparison with state-of-the-art}\label{app:sota}
While our study focuses exclusively on the GDT, we provide SOTA performance numbers for our real-world tasks to understand whether the GDT performance is competitive with the best models in the literature. Concretely, for \textsc{PCQ} without 3D positions, the best models typically achieve between 0.0809 and 0.0859 MAE \citep{chen2023gpropagation, mueller2024et, ma+2023+GRIT, rampasek+2022+graphgps}. For \textsc{COCO} and \textsc{Pascal}, we find models are generally evaluated on a 500K parameter budget and achieve up to 43.98 F1 and 49.12 F1, respectively \citep{chen2025neuralwalker}. Note that we do not adhere to this budget when training on \textsc{COCO} as we find it overly restrictive given the considerable size of this dataset. Consequently, we also use the pre-trained 15M model when performing few-shot transfer from \textsc{COCO} to \textsc{Pascal}. Finally, on \textsc{Code}, the best models score somewhere between 19.37 \citep{chen2022sat} and 22.22 F1 \citep{geisler2023sat++}.
\textcolor{black}{
\subsection{BREC Evaluation}\label{subsection:BREC_implementation}
We selected RWSE and LPE as two representative PEs for the random-walk and eigen PE, respectively. We further evaluated RRWP PEs but observed training instability in both models, which we attribute to the small dataset size in BREC (only 64 samples per task). The small-scale dataset size seems to affect the relative PE RRWP more than the node-level PE RWSE and LPE. Our implementation aligns with GDT and Graphormer-GD, differing only in the attention style. Moreover, we choose 4 layers, an embedding dimension of 64, 4 attention heads, and 8 random walks and eigenvalues, respectively. We generally observed training instabilities caused by the multiplicative $\phi_1$ of Graphormer-GD \citep{Zha+GDWL+2023}.  To achieve the performance seen in \Cref{tab:pe_brec_results}, we manually initialize the weights of $\phi_1$ and  $\phi_2$ and such that Graphormer-GD initially performs standard attention (akin to the GDT).
}
\subsection{Scaling Results}
Here, we provide additional results for scaling the GDT to 90M and 160M parameters. These results correspond to the results seen in \Cref{fig:scaling_resources}. 

\begin{table}[h]
    \centering
      \caption{90M and 160M parameter results for different PEs over 2 random seeds. \textsc{PCQ} MAE is in micro electron volt (meV) for clarity of presentation.}\label{tab:pe_results_scaled}
\resizebox{0.55\textwidth}{!}{\begin{tabular}{lccc}\toprule
\multirow{2}{*}{\textbf{PE}} & \textsc{PCQ (90M)}  & \textsc{MST (90M)} & \textsc{MST (160M)} \\
 &  \small MAE $\downarrow$  & \small F1 $\uparrow$ & \small F1 $\uparrow$ \\ \midrule
 
 LPE & 89.7 \tiny$\pm$ 0.4&  92.86 \tiny$\pm$ 00.17& 93.11 \tiny$\pm$ 01.01 \\
 
 RWSE & 88.9 \tiny$\pm$ 0.7&  94.29 \tiny$\pm$ 00.68&  95.80 \tiny$\pm$ 00.18\\

 RRWP & 86.5 \tiny$\pm$ 0.3 & - & - \\
\bottomrule
\end{tabular}}
  
\end{table}

\textcolor{black}{
\subsection{Note on Architecture Selection}
By using standard quadratic attention, as seen in transformers for other domains, we can leverage existing theoretical insights and empirical improvements observed in the literature. In addition, these assumptions make our proposed architecture, in theory, compatible with existing frameworks for improving training and inference resources, such as FlashAttention \citep{Dao+2022+FlashAttention} and FlexAttention \citep{dong2025flexattention}. 
While GTs with linear time scaling exist, we note that quadratic attention has been successfully applied in GTs for real-world tasks, such as the works of \citep{Abramson2024alphafold3, simplefold} on protein folding and \citep{price2025gencast} on weather forecasting. Moreover, linear attention variants can often be understood as approximations of quadratic attention, thereby leveraging theoretical results obtained for quadratic attention GTs \citep{xingless, wu2022nodeformer, Performers2021}.}

\section{Background}\label{app:background}
Here, we provide background material on various concepts and definitions used in our work.

\paragraph{Basic notations} Let $\Nb \coloneq \{ 1, 2, \ldots \}$ and $\Nb_0 \coloneq \Nb \cup \{ 0 \}$. The set $\Rb^+$ denotes the set of non-negative real numbers. For a set $X$, $A \subset X$ denotes the \textit{strict} subset and $A \subseteq X$ denotes the subset.
For $n \in \Nb$, let $[n] \coloneq \{ 1, \dotsc, n \} \subset \Nb$. We use $\oms \dotsc \cms$ to denote multisets, i.e., the generalization of sets allowing for multiple, finitely many instances for each of its elements. For two non-empty sets $X$ and $Y$, let $Y^X$ denote the set of functions from $X$ to $Y$. Given a set $X$ and a subset $A \subset X$, we define the indicator function $1_A \colon X \to \{0,1\}$ such that $1_A(x) = 1$ if $x \in A$, and $1_A(x) = 0$ otherwise. Let $\vec{M}$ be an $n \times m$ matrix, $n>0$ and $m>0$, over $\Rb$, then $\vec{M}_{i,\cdot}$,  $\vec{M}_{\cdot,j}$, $i \in [n]$, $j \in [m]$, are the $i$th row and $j$th column, respectively, of the matrix $\vec{M}$. We denote with $\textsf{set}(\vec{M})$ the set of rows of $\vec{M}$. Let $\vec{N}$ be an $n \times n$ matrix, $n>0$, then the \new{trace} $\tr(\vec{N}) \coloneq \sum_{i \in [n]} N_{ii}$.  In what follows, $\vec{0}$ denotes an all-zero vector with an appropriate number of components.

\paragraph{Graphs} An \new{(undirected) graph} $G$ is a pair $(V(G),E(G))$ with \emph{finite} sets of \new{vertices} $V(G)$ and \new{edges} $E(G) \subseteq \{ \{u,v\} \subseteq V(G) \mid u \neq v \}$. \new{vertices} or \new{nodes} $V(G)$ and \new{edges} $E(G) \subseteq \{ \{u,v\} \subseteq V(G) \mid u \neq v \}$.  The \new{order} of a graph $G$ is its number $|V(G)|$ of vertices. If not stated otherwise, we set $n \coloneq |V(G)|$ and call $G$ an \new{$n$-order graph}. We denote the set of all $n$-order (undirected) graphs by $\cG_n$ and the set of all (undirected) graphs up to $n$ vertices by $\cG_{\leq n}$. In a \new{directed graph}, we define $E(G) \subseteq V(G)^2$, where each edge $(u,v)$ has a direction from $u$ to $v$. Given a directed graph $G$ and vertices $u,v \in V(G)$, we say that $v$ is a \new{child} of $u$ if $(u,v) \in E(G)$.  A (directed) graph $G$ is called \new{connected} if, for any $u,v \in V(G)$, there exist $r \in \Nb$ and $\{u_1, \ldots, u_r\} \subseteq V(G)$, such that $(u,u_1),(u_1,u_2), \ldots, (u_r,v) \in E(G)$, and analogously for undirected graphs by replacing directed edges with undirected ones. We say that a graph $G$ is \new{disconnected} if it is not connected. For a graph $G$ and an edge $e \in E(G)$, we denote by $G \setminus e$ the \new{graph induced by removing} edge $e$ from $G$. For an $n$-order graph $G \in \cG_n$,  assuming $V(G) = [n]$, we denote its \new{adjacency matrix} by $\vec{A}(G) \in \{ 0,1 \}^{n \times n}$, where $\vec{A}(G)_{vw} = 1$ if and only if $\{v,w\} \in E(G)$. The \new{neighborhood} of a vertex $v \in V(G)$ is denoted by $N_G(v) \coloneq \{ u \in V(G) \mid \{v, u\} \in E(G) \}$, where we usually omit the subscript for ease of notation, and the \new{degree} of a vertex $v$ is $|N_G(v)|$. A graph $G$ is a \new{tree} if connected, but $G \setminus e$ is disconnected for any $e \in E(G)$. A tree or a disjoint collection of trees is known as a forest.

A \new{rooted tree} $(G,r)$ is a tree where a specific vertex $r$ is marked as the \new{root}. For a rooted (undirected) tree, we can define an implicit direction on all edges as pointing away from the root; thus, when we refer to the \new{children} of a vertex $u$ in a rooted tree, we implicitly consider this directed structure. For $S \subseteq V(G)$, the graph $G[S] \coloneq (S,E_S)$ is the \new{subgraph induced by $S$}, where $E_S \coloneq \{ (u,v) \in E(G) \mid u,v \in S \}$. A \new{(vertex-)labeled graph} is a pair $(G,\ell_G)$ with a graph $G = (V(G),E(G))$ and a (vertex-)label function $\ell_G \colon V(G) \to \Sigma$, where $\Sigma$ is an arbitrary countable label set. For a vertex $v \in V(G)$, $\ell_G(v)$ denotes its \new{label}. A Boolean \new{(vertex-)$d$-labeled graph} is a pair $(G,\ell_G)$ with a graph $G = (V(G),E(G))$ and a label function $\ell_G \colon V(G) \to \{0,1\}^d$. We denote the set of all $n$-order Boolean $d$-labeled graphs as $\cG_{n,d}^{\Bb}$. An \new{attributed graph} is a pair $(G,a_G)$ with a graph $G = (V(G),E(G))$ and an (vertex-)attribute function $a_G \colon V(G) \to \Rb^{1 \times d}$, for $d > 0$. That is, unlike labeled graphs, vertex annotations may come from an uncountable set. The \new{attribute} or \new{feature} of $v \in V(G)$ is $a_G(v)$. We denote the class of all $n$-order graphs with $d$-dimensional, real-valued vertex features by $\cG_{n,d}^{\Rb}$.

Two graphs $G$ and $H$ are \new{isomorphic} if there exists a bijection $\varphi \colon V(G) \to V(H)$ that preserves adjacency, i.e., $(u,v) \in E(G)$ if and only if $(\varphi(u),\varphi(v)) \in E(H)$. In the case of labeled graphs, we additionally require that $\ell_G(v) = \ell_H(\varphi(v))$ for $v \in V(G)$. Moreover, we call the equivalence classes induced by $\simeq$ \emph{isomorphism types} and denote the isomorphism type of $G$ by $\tau(G)$. A \new{graph class} is a set of graphs closed under isomorphism. Given two graphs $G$ and $H$ with disjoint vertex sets, we denote their disjoint union by $G \,\dot\cup\, H$.

\subsection{Transformers}\label{background:transformer}
Here, we will introduce attention with an additive attention bias and the transformer architecture.

\begin{definition}[Attention (with bias)]
Let $\vec{Q}, \vec{K}, \vec{V} \in \mathbb{R}^{n \times d}$ and $\vec{B} \in \mathbb{R}^{n \times n}$, with $n, d \in \mathbb{N}^+$. We define biased attention as
\begin{equation*}
    \textsf{Attention}(\vec{Q}, \vec{K}, \vec{V}, \vec{B}) \coloneqq \textsf{softmax}\big(d^{-\frac{1}{2}} \cdot \vec{Q}\vec{K}^T + \vec{B} \big) \vec{V},
\end{equation*}
where $\textsf{softmax}$ is applied row-wise and defined, for a vector $\vec{x} \in \mathbb{R}^{1 \times n}$, as
\begin{equation*}
    \textsf{softmax}(\vec{x}) \coloneqq 
    \begin{bmatrix}
        \dfrac{\exp(\vec{x}_1)}{\sum_{i \in [n]} \exp(\vec{x}_i)} & \dots & \dfrac{\exp(\vec{x}_n)}{\sum_{i \in [n]} \exp(\vec{x}_i)}
    \end{bmatrix}.
\end{equation*}
\end{definition}

\begin{definition}[Multi-head attention (with bias)]\label{def:mha}
Let $\vec{X} \in \mathbb{R}^{n \times d}$, $\vec{B} \in \mathbb{R}^{n \times n \times h}$, and let $\vec{W}_Q, \vec{W}_K, \vec{W}_V \in \mathbb{R}^{d \times d}, \vec{W}_O \in \mathbb{R}^{d \times d}$ be learnable parameters, with $n, d \in \mathbb{N}^+$. Let $h \in \mathbb{N}^+$ be the number of heads, such that a $d_h \in \mathbb{N}^+$ for which $d = h \cdot d_h$. We call $d_h$ the \textit{head dimension} and define $h$-head attention over $\vec{X}$ as
\begin{equation*}
    \textsf{MHA}(\vec{X}, \vec{B}) \coloneqq \begin{bmatrix}
       \Tilde{\vec{X}}_1 & \dots & \Tilde{\vec{X}}_h
    \end{bmatrix}\vec{W}_O,
\end{equation*}
where, for all $i \in [h]$,
\begin{equation*}
    \Tilde{\vec{X}}_i \coloneqq \textsf{Attention}(\vec{X}\vec{W}_{Q}^{(i)}, \vec{X}\vec{W}_{K}^{(i)}, \vec{X}\vec{W}_{V}^{(i)}, \vec{B}_i),
\end{equation*}
with $\vec{B}_i \in \mathbb{Q}^{n \times n}$ denoting the attention bias for the $i$-head, indexed along the third dimension of $\vec{B}$,
$\vec{W}_{Q}^{(i)}, \vec{W}_{K}^{(i)}, \vec{W}_{V}^{(i)} \in \mathbb{R}^{d \times d_h}$,
and
\begin{align*}
    \vec{W}_{Q} &\coloneqq \begin{bmatrix}
       \vec{W}_{Q}^{(1)}, \dots, \vec{W}_{Q}^{(h)}m    \end{bmatrix} \\
    \vec{W}_{K} &\coloneqq \begin{bmatrix},
       \vec{W}_{K}^{(1)}, \dots, \vec{W}_{K}^{(h)}
    \end{bmatrix}, \\
    \vec{W}_{V} &\coloneqq \begin{bmatrix}
       \vec{W}_{V}^{(1)}, \dots, \vec{W}_{V}^{(h)}
    \end{bmatrix}.
\end{align*}
\end{definition}

\begin{definition}[Two-layer MLP]\label{def:mlp}
Let $\vec{X} \in \mathbb{R}^{n \times d}$ with $n, d, d_f \in \mathbb{N}^+$, where $d_f$ is the hidden dimension. We define a two-layer MLP as
\begin{equation*}
    \textsf{MLP}(\vec{x}) \coloneqq \sigma(\vec{x}\vec{W}_1)\vec{W}_2,
\end{equation*}
where $\textsf{MLP}$ is applied independently to each row $\vec{x} \in \mathbb{R}^{1 \times d}$ in $\vec{X}$. Here, $\vec{W}_1 \in \mathbb{R}^{d \times d_f}$ is the in-projection matrix, $\vec{W}_2 \in \mathbb{R}^{d_f \times d}$ is the out-projection matrix, and $\sigma: \mathbb{R} \rightarrow \mathbb{R}$ is an element-wise activation function such as GELU \citep{hendrycks2016gelu}.
\end{definition}

\begin{definition}[Transformer architecture]\label{def:transformer}
Let $\vec{X} \in \mathbb{R}^{n \times d}$ be a token matrix and $\vec{B} \in \mathbb{R}^{n \times n}$ be an attention bias, with $n, d \in \mathbb{N}^+$.
The $t+1$-th transformer layer updates token representations $\vec{X}^t \in \mathbb{R}^{n \times d}$ as
\begin{align*}
    \vec{X}' &\leftarrow \vec{X}^t + \textsf{MHA}(\mathsf{LayerNorm}(\vec{X}^t), \vec{B}), \\
    \vec{X}^{t+1}  &\leftarrow \vec{X}' + \mathsf{MLP}(\mathsf{LayerNorm}(\vec{X}')),
\end{align*}
where $\textsf{MLP}$ is defined in \Cref{def:mlp}.
\end{definition}

\subsection{Extended notation for the theoretical analysis of the GDT}
Here, we introduce some notation for the GDT that we will use in our theoretical analysis.

\paragraph{Learnable parameters}
We give an overview of all learnable parameters of the GDT in \Cref{tab:gt_parameter_overview}. In practice, node and edge features are typically present as integers or continuous feature vectors, and we embed them using learnable MLPs. We refer to such parameters as \textit{embedding parameters}.
Note that in \Cref{tab:gt_parameter_overview}, we exclude embedding parameters, as for simplicity, we assume in our framework that node and edge features are already embedded.

Let $d, d_f, T, h \in \mathbb{N}^+$ denote the number of embedding dimensions, the number of hidden dimensions, the number of layers, and the number of attention heads, respectively. Then, the number of learnable parameters (excluding embedding parameters) is given by 
\begin{align*}
   \textbf{\# params} &= 3d + d^2 + dd_f + d_fh + h + 3Td^2 + 2Tdd_f \\
   &= (3T + 1)d^2 + (2T+1)dd_f + 3d + (d_f+1)h.
\end{align*}
We denote the complete set of learnable parameters in \Cref{tab:gt_parameter_overview} with $\Theta(d, d_f, T, h)$.

\paragraph{Graph transformer representations}
Here, we introduce some short-hand notation for graph transformer representations. Given token matrix $\vec{X}$ and attention bias $\vec{B}$, we write $\hat{\vec{X}}^t(v)$ to denote the representation of node $v \in V(G)$ after $t$ transformer layers with $\vec{X}$ and $\vec{B}$ as input. Note that, since we fix an arbitrary order of the nodes, if $v$ is the $i$-th node in this order, $\hat{\vec{X}}^t(v) = \hat{\vec{X}}^t_i$.

\begin{table}
   \caption{Overview of learnable parameters in the GDT, excluding embedding parameters. Here, $d \in \mathbb{N}^+$ is the embedding dimension, $d_f \in \mathbb{N}^+$ is the hidden dimension, and $T \in \mathbb{N}^+$ is the number of layers. The suffix {\small{$\times T$}} indicates that the parameters occur in each of the $T$ layers.}
    \centering
    \resizebox{\textwidth}{!}{
    \begin{tabular}{lccl}
    \toprule
        Params. & Dims. & Module & Description \\ \midrule
        $\ell_V(\texttt{[cls]})$ & $1 \times d$ & \multirow{4}{*}{Token embeddings} & Learnable embedding for the \texttt{[cls]} token \\ 
        $\ell_E(\texttt{[cls]}, \cdot)$ & $1 \times d$ &  & Learnable embedding for the out-going edges from the \texttt{[cls]} token \\ 
        $\ell_E(\cdot, \texttt{[cls]})$ & $1 \times d$ &  & Learnable embedding for the in-coming edges to the \texttt{[cls]} token \\ 
        $\vec{W}_P$ & $d \times d$ &  & Weight matrix for the node-level PEs \\ \midrule
        $\rho$ & $(d \times d_f, d_f \times h)$ & \multirow{2}{*}{Attention bias}  & MLP applied to the edge embeddings \\
        $\vec{W}_U$ & $1 \times h$ &  & Weight matrix for the relative PEs in the attention bias \\
        \midrule
        $\vec{W}_Q$ \small{$\times T$} & $d \times d$ & \multirow{3}{*}{\textsf{MHA} \tiny{\Cref{def:mha}}}  & Query weight matrix in multi-head attention \\
        $\vec{W}_K$ \small{$\times T$}  & $d \times d$ &  & Key weight matrix in multi-head attention \\
        $\vec{W}_V$ \small{$\times T$}  & $d \times d$ & & Value weight matrix in multi-head attention \\ \midrule
        $\vec{W}_1$ \small{$\times T$}  & $d \times d_f$ & \multirow{2}{*}{\textsf{MLP} \tiny{\Cref{def:mlp}}} & Input projection of the MLP \\
        $\vec{W}_2$ \small{$\times T$}  & $d_f \times d$ & & Output projection of the MLP \\
    \bottomrule
    \end{tabular}}
 
    \label{tab:gt_parameter_overview}
\end{table}

\subsection{Extended notation for the theoretical analysis of PEs}\label{section:appendixA}
Here, we introduce the notations used by \citet{Zhang-PEtheory-2024} in their paper on PE expressiveness. We adapt this notation to fit LPE \citep{mueller+2024+aligning}, SAN \citep{kreuzer2021rethinking}, and SignNet \citep{Lim-SignNet-2023} and use it in our proofs in \Cref{appendix:proofssec4}. We introduce the notation for the color refinement algorithm and propose one for each PE. The respective algorithms for BasisNet \citep{Lim-SignNet-2023} and SPE \citep{Huang-SPE-2024} are given by \citet{Zhang-PEtheory-2024}. 

\begin{definition}{\citep{Zhang-PEtheory-2024}}
We call any graph invariant a \new{$k$-dim color mapping}. The family of $k$-dim color mappings is denoted by $M_{k}$. Each color mapping defines an equivalence relation $\sim_{\chi}$ between rooted graphs $G_u, H_v$ marking $k$ vertices and $G^u \sim_{\chi} H^v$ iff $\chi_G(u) = \chi_H(v)$. Further, we denote the family of $k$-dim spectral color mappings by $M_{k}^{\Lambda}$. Similar to $M_k$ the family of spectral color mappings is obtained from the color mappings acting on $\{(G_u, \lambda) \colon G_u \in G, \lambda \in \Lambda^M(G)\}$ where $\Lambda^M(G)$ denotes the eigenvalues of a matrix $\vec{M}$. 
\end{definition}

\begin{definition}{\citep{Zhang-PEtheory-2024}}\label{def:colortransform}
A function $T$ mapping from $M_{k_1}$ to $M_{k_2}$ is called a color transform. We assume that all color transforms are order-preserving with respect to color mappings. Given $T(\chi) \preceq \chi$, a color transform is also called color refinement, and $T^t$ denotes the $t$ times composition of $T$. 
In addition, $T^{\infty}$ is the stable color refinement obtained from $T^{t'}$ with $t'$ the smallest integer where further iterations do not induce a different partition of the underlying nodes in a graph, resulting in $T \circ T^{\infty} \equiv T^{\infty}$. Following \citet{Zhang-PEtheory-2024} $T^{\infty}$ is well defined. 
\end{definition}
A coloring algorithm is then formed by concatenating a stable color transform $T^{\infty} \colon M_k \rightarrow M_k$ and a pooling function $U \colon M_k \rightarrow M_0$.
\begin{definition}
    We say that color mappings $\chi_1, \chi_2$ are equivalent given that $G^u \sim_{\chi_1} H^v$ iff $G^u \sim_{\chi_2} H^v$. Furthermore, we say that a color mapping $\chi_1$ is finer/more expressive than $\chi_2$ if $G^u \sim_{\chi_1} H^v \Rightarrow G^u \sim_{\chi_2} H^v$, noted by $\preceq$. 
\end{definition}
\begin{lemma}{\citep{Zhang-PEtheory-2024}}\label{propa1refinement}
    Let $T_1, T_2 \colon M_{k_1} \rightarrow M_{k_2}$ and $U_1, U_2 \colon M_{k_2} \rightarrow M_{k_3}$ be color refinements. If $T_1 \preceq T_2$ and $U_1 \preceq U_2$ then $U_1 \circ T_1 \preceq U_2 \circ T_2$. 
\end{lemma}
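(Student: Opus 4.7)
The plan is to unfold the definitions and exploit two stated properties of color transforms: that they are order-preserving with respect to $\preceq$, and that $\preceq$ is transitive (being defined via implication between equivalence relations). Given any color mapping $\chi \in M_{k_1}$, I want to compare $U_1(T_1(\chi))$ and $U_2(T_2(\chi))$ as elements of $M_{k_3}$, which by definition of $\preceq$ amounts to an implication between the induced equivalence relations on marked rooted graphs.

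First, I would instantiate the hypothesis $T_1 \preceq T_2$ at $\chi$, which yields $T_1(\chi) \preceq T_2(\chi)$ in $M_{k_2}$. Next, I apply $U_2$ to both sides and use the assumption in \Cref{def:colortransform} that all color transforms are order-preserving with respect to color mappings; this gives
\begin{equation*}
    U_2(T_1(\chi)) \preceq U_2(T_2(\chi)).
\end{equation*}
Separately, applying the hypothesis $U_1 \preceq U_2$ at the element $T_1(\chi) \in M_{k_2}$ yields
\begin{equation*}
    U_1(T_1(\chi)) \preceq U_2(T_1(\chi)).
\end{equation*}

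Finally, I chain these two inequalities by transitivity of $\preceq$ (which follows immediately, since $\preceq$ is defined as an implication $G^u \sim_{\chi_1} H^v \Rightarrow G^u \sim_{\chi_2} H^v$, and implication is transitive). This produces $U_1(T_1(\chi)) \preceq U_2(T_2(\chi))$. Since $\chi$ was arbitrary, we obtain $U_1 \circ T_1 \preceq U_2 \circ T_2$ as color transforms from $M_{k_1}$ to $M_{k_3}$.

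There is no real obstacle here: the lemma is essentially the compatibility of a preorder with function composition, once one has (i) order-preservation of the right-hand operator in a composition and (ii) transitivity of the preorder. The only subtlety worth flagging in the write-up is making explicit that $T_1 \preceq T_2$ is interpreted pointwise on color mappings, so that it can be instantiated at the specific argument $\chi$ chosen above, and that the order-preservation assumption is exactly what allows us to push $U_2$ through the inequality $T_1(\chi) \preceq T_2(\chi)$.
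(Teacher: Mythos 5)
Your proof is correct. The paper itself does not reprove this lemma — it is stated and cited from \citet{Zhang-PEtheory-2024} without a proof — so there is nothing to compare against; your argument (instantiate pointwise, push $U_2$ through the inequality via the order-preservation assumption from \Cref{def:colortransform}, instantiate $U_1 \preceq U_2$ at $T_1(\chi)$, and chain by transitivity of $\preceq$) is the natural one and supplies exactly the missing reasoning.
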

\begin{lemma}{\citep{Zhang-PEtheory-2024}}\label{propa2refinement}
    Let $T_1 \colon M_{k_1} \rightarrow M_{k_1}$ and $T_2 \colon M_{k_2} \rightarrow M_{k_2}$ be color refinements and $T^{\infty} \colon M_k \rightarrow M_k$ be the stable refinement of $M_k$. Further let $U_1 \colon M_{k_0} \rightarrow M_{k_1}$ and $U_2: M_{k_1} \rightarrow M_{k_2}$ be color refinements. Then it follows. If $T_2 \circ U_2 \circ T_1^{\infty} \circ U_1 \equiv U_2 \circ T_1^{\infty} \circ U_1$ then $U_2 \circ T_1^{\infty} \circ U_1 \preceq T_2^{\infty} \circ U_2 \circ U_1$. 
\end{lemma}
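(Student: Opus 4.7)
The plan is to prove the statement by induction on the number of applications of $T_2$ starting from $U_2 \circ U_1$. Write $\chi \coloneqq U_2 \circ T_1^{\infty} \circ U_1$ and $\chi_t \coloneqq T_2^t \circ U_2 \circ U_1$ for $t \geq 0$. By Definition~\ref{def:colortransform} there is a least $t^*$ with $\chi_{t^*} \equiv T_2^{\infty} \circ U_2 \circ U_1$, so the goal reduces to showing $\chi \preceq \chi_t$ for every $t \geq 0$ (and then specializing to $t = t^*$). The guiding intuition is that the hypothesis forces $\chi$ to be a $T_2$-fixed point that is already finer than $\chi_0$, so iterating $T_2$ from $\chi_0$ cannot overshoot $\chi$.

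For the base case $t = 0$, I would show $\chi \preceq U_2 \circ U_1$. Since $T_1^{\infty}$ is a color refinement, $T_1^{\infty}(\psi) \preceq \psi$ for every $\psi$, and in particular $T_1^{\infty} \circ U_1 \preceq U_1$. Applying Lemma~\ref{propa1refinement} with the reflexive pairing $U_2 \preceq U_2$ yields $U_2 \circ T_1^{\infty} \circ U_1 \preceq U_2 \circ U_1$, which is exactly $\chi \preceq \chi_0$.

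For the inductive step, assume $\chi \preceq \chi_t$. Composition on the left with $T_2$ is monotone, again by Lemma~\ref{propa1refinement} applied with $T_2 \preceq T_2$, hence $T_2 \circ \chi \preceq T_2 \circ \chi_t = \chi_{t+1}$. Here the hypothesis enters: $T_2 \circ \chi = T_2 \circ U_2 \circ T_1^{\infty} \circ U_1 \equiv U_2 \circ T_1^{\infty} \circ U_1 = \chi$, so $\chi$ is a $T_2$-fixed point. Combining these two facts gives $\chi \preceq \chi_{t+1}$, closing the induction. Instantiating $t = t^*$ delivers the desired $\chi \preceq T_2^{\infty} \circ U_2 \circ U_1$.

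The main delicacy I expect lies not in the induction itself but in the bookkeeping around the different color-mapping spaces $M_{k_0}, M_{k_1}, M_{k_2}$: one has to verify that the partial order $\preceq$ and the monotonicity of composition in Lemma~\ref{propa1refinement} transport correctly across these spaces, and that $T_1^{\infty}$ being a refinement of its own input still translates to $T_1^{\infty} \circ U_1 \preceq U_1$ under the ambient equivalence on $M_{k_1}$. Once that compatibility is established, the argument is a clean two-step induction that turns the stability assumption on $\chi$ into the comparison with the stable limit $T_2^{\infty} \circ U_2 \circ U_1$.
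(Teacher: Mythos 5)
The paper states this lemma as cited from \citet{Zhang-PEtheory-2024} and supplies no proof of its own, so there is nothing in the paper to compare your argument against; I can only assess it on its merits. Your proof is correct. Writing $\chi \coloneqq U_2 \circ T_1^{\infty} \circ U_1$ and $\chi_t \coloneqq T_2^t \circ U_2 \circ U_1$, the base case $\chi \preceq \chi_0$ follows from $T_1^{\infty}$ being a refinement (so $T_1^{\infty}\circ U_1 \preceq U_1$ pointwise) together with the order-preservation of $U_2$, and the inductive step $\chi \preceq \chi_t \Rightarrow \chi \preceq \chi_{t+1}$ follows from the monotonicity of $T_2$ combined with the fixed-point hypothesis $T_2\circ\chi \equiv \chi$; specializing at the stabilization index $t^{*}$ for the input $U_2\circ U_1$ (which exists and is finite by the well-definedness of $T_2^{\infty}$, as noted after \Cref{def:colortransform}) gives the claim. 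The bookkeeping concern you flag is benign: all comparisons $\preceq$ are taken within the single codomain space ($M_{k_1}$ for the base-case premise, $M_{k_2}$ thereafter), and \Cref{propa1refinement} together with the blanket order-preservation assumption on color transforms supplies exactly the monotonicity you invoke. One cosmetic note: you do not actually need the full strength of \Cref{propa1refinement} in the base case, since composing with the single monotone map $U_2$ already yields $U_2\circ T_1^{\infty}\circ U_1 \preceq U_2\circ U_1$ directly.
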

The two lemmas above provide a straightforward approach to determining whether architecture $A_1$ is more expressive than architecture $A_2$ \citep{Zhang-PEtheory-2024}. To prove that $A_1$ is more expressive than $A_2$, we show that $T_2 \circ T_1^{\infty} \equiv T_1^{\infty}$ holds, with $T_i$ being the color refinement of $A_i$ respectively. 
\begin{definition}{\citep{Zhang-PEtheory-2024}}\label{refinementalgodef}
    We define the following color refinements corresponding to the induced refinements of each algorithm. We define global pooling as providing an injective coloring of a multiset using a hash function. In this case, we consider the multiset over nodes in a graph. Other pooling operations are defined below. 
    
     \textbf{Global pooling}: Define $T_\text{GP} \colon M_1 \rightarrow M_0$ and $\chi \in M_1$ such that for a graph $G$ and a color mapping $\chi \in M_1$ it holds:
        \begin{equation*}
            [T_\text{GP}(\chi)](G) = \textsf{hash}(\{\!\!\{\chi_{G}(u) \colon u \in V(G)\}\!\!\}).
        \end{equation*}
    The $1$-WL refinement gives us the 1-WL coloring update, generalized to all nodes in the graph rather than just neighboring graphs.
    
     \textbf{1-WL refinement}: Given $T_\text{WL} \colon   M_1 \rightarrow M_1$ such that for any choice of $\chi \in M_1$:
        \begin{equation*}
            [T_\text{WL}(\chi)]_{G}(u) = \textsf{hash}(\chi_{G}(u), \{\!\!\{(\chi_{G}(v), 
            \text{atp}_{G}(u,v)) \colon v \in V(G)\}\!\!\}).
        \end{equation*}
    We then define one- and two-dimensional spectral pooling, which allows for pooling over distinct eigenvalues similar to the global pooling refinement. 
    
    \textbf{Spectral Pooling}: Define $T_\text{SP2} \colon M_2^{\Lambda} \rightarrow M_1$ and $T_\text{SP1} \colon M_1^{\Lambda} \rightarrow M_1$ such that for $\chi \in M_2^{\Lambda}$ and $\chi' \in M_1^{\Lambda}$: 
        \begin{equation*}
            [T_\text{SP1}(\chi')]_{G}(u) = \textsf{hash}(\{\!\!\{\chi'_{G}(\lambda, u) \colon \lambda \in \Lambda^M(G) \}\!\!\}).
        \end{equation*}
        \begin{equation*}
            [T_\text{SP2}(\chi)]_{G}(u,v) = \textsf{hash}(\{\!\!\{\chi_{G}(\lambda, u, v) \colon \lambda \in \Lambda^M(G) \}\!\!\}).
        \end{equation*}
        A pooling variant without the spectrum is denoted by $T_{P2}$ and $T_{P1}$.

    To allow for an examination of BasisNet and SPE, we consider the $2$-IGN refinement. This refinement is obtained from evaluating the expressiveness of a $2$-IGN and its basis functions as defined by \citet{Maron-IGNs-2019}. 
    
    \textbf{$2$-IGN refinement}: With $T_\text{IGN} \colon M_2 \rightarrow M_2$, $\chi \in M_2$ as any color mapping and $\delta_{uv}(c) = c$ given $u=v$, otherwise 0:
        \begin{align*}
            [T_\text{IGN}(\chi)]_{G}(u,v) = \textsf{hash}(\chi_{G}(u,v), \chi_{G}(u,u), \chi_{G}(v,v), \chi_{G}(v,u), \delta_{uv}(\chi_{G}(u,u)),\notag \\  \{\!\!\{\chi_{G}(u,w) \colon w\in V(G) \}\!\!\},  \{\!\!\{\chi_{G}(w,u)  \colon w\in V(G) \}\!\!\}, \notag \\ \{\!\!\{\chi_{G}(v,w)  \colon w\in V(G) \}\!\!\},  \{\!\!\{\chi_{G}(w,v)  \colon w\in V(G) \}\!\!\}, \notag \\   \{\!\!\{\chi_{G}(w,w)  \colon w\in V(G) \}\!\!\},  \{\!\!\{\chi_{G}(w,x)  \colon w, x\in V(G) \}\!\!\}, \notag \\ \delta_{uv}( \{\!\!\{\chi_{G}(u,w) \colon w\in V(G) \}\!\!\}), \delta_{uv}( \{\!\!\{\chi_{G}(w,u) \colon  w\in V(G) \}\!\!\}), \notag \\ \delta_{uv}( \{\!\!\{\chi_{G}(w,w)  \colon w\in V(G) \}\}), \delta_{uv}( \{\!\!\{\chi_{G}(w, x)  \colon w, x\in V(G) \}\!\!\}) ).
        \end{align*}

    Further, we use BasisNet pooling refinement and Siamese IGN refinement to describe the BasisNet computation process. 
    
   \textbf{BasisNet Pooling}: Given $T_\text{BP} \colon M_2^{\Lambda} \rightarrow M_1^{\Lambda}$ and $\chi \in M_2^{\Lambda}$:
        \begin{align*}
            [T_\text{BP}(\chi)]_{G}(\lambda,u) = \textsf{hash}(\chi_{G}(\lambda,u,v), \{\!\!\{\chi_{G}(\lambda, u, v): v\in V(G) \}\!\!\}, \notag \\\{\!\!\{\chi_{G}(\lambda, v, u)  \colon v\in V(G) \}\!\!\},  \{\!\!\{\chi_{G}(\lambda, v, v)  \colon v\in V(G) \}\!\!\}, \{\!\!\{\chi_{G}(\lambda, v, w) \colon v, w\in V(G) \}\!\!\}).
        \end{align*}

    \textbf{Siamese IGN refinement}: Given $T_\text{SIAM} \colon M_2^{\Lambda} \rightarrow M_2^{\Lambda}$ and $\chi \in M_2^{\Lambda}$:
        \begin{equation*}
            [T_\text{SIAM}(\chi)]_{G}(\lambda,u,v) = [T_\text{IGN}(\chi(\lambda, \cdot, \cdot))]_{G}(u,v).
        \end{equation*}
We further provide additional color refinement algorithms based on the encodings introduced throughout this work:
Given the initial color refinement of SAN as $\chi_\text{SAN}(\lambda,u) = (\lambda_1, \dots, \lambda_m, v_{1:m}^u)$ where $\lambda_i$ denote the eigenvalues and $v^u$ the eigenvector of the graph Laplacian associated to the node $u$. Then we define the SAN color refinement alongside the existing refinements as follows:
    \begin{equation*}
        [T_\text{SAN}(\chi)]_{G} = T_\text{GP} \circ T_\text{ENC} \circ T_{L}(\chi_\text{SAN}).
    \end{equation*}
with $T_\text{ENC}$ denoting the transformer encoder layer. 
The complete BasisNet refinement is given by the concatenation of refinements given in Definition \Cref{refinementalgodef}:
    \begin{equation*}
        [T_\text{BasisNet}]_{G} = T_\text{GP} \circ T_\text{WL} \circ T_\text{SP1} \circ T_\text{BP} \circ T_\text{SIAM}(\chi_\text{Basis}).
    \end{equation*}

Following the definition of BasisNet as a color refinement in \Cref{refinementalgodef} and assuming a message passing GNN for $\rho$, the color refinement of SignNet using the initial SignNet color refinement $\chi\text{Sign}(\lambda,u,w) = (\lambda, \vec{V}^u, \vec{V}^w)$ and $T_{\phi} \colon M_2^{\Lambda} \rightarrow M_2^{\Lambda}$ is given by:
\begin{equation*}
    [T_{\text{Sign}(\chi)}]_{G} = T_\text{GP} \circ T_\text{WL}^{\infty} \circ T_\text{SP2} \circ T_{\phi}(\chi\text{Sign}), 
\end{equation*}
where $\vec{V}^u$ denotes the eigenvector associated to the eigenvalue $\lambda$ and the node $u$ and $T_{\phi}$ be the refinement depending on the choice of $\phi$. However, we note $T_\text{IGN} \preceq T_{\phi}$, by definition of SignNet and BasisNet. 

The refinement for the LPE encoding is given similarly to the BasisNet and SPE refinement by replacing $\rho$ with a color refinement that is 1-WL expressive. Furthermore, we assume $\phi$ to be a spectral color refinement with expressiveness up to a 2-IGN. Common choices for $\phi$ include MLPs or GNNs known to be less expressive than a 2-IGN. 
With initial colorings $\chi_{\text{LPE}}(\lambda, u,w) = (\lambda, \vec{V}^u, \vec{V}^w)$ and $T_{\phi}^{\text{LPE}}: M_2^{\Lambda} \rightarrow  M_2^{\Lambda}$ the refinement follows: 
\begin{equation*}
    [T_{LPE(\chi)}]_{G} = T_\text{GP} \circ T_\text{WL}^{\infty} \circ T_\text{SP2} \circ T_{\phi}^{\text{LPE}}(\chi_{\text{LPE}}), 
\end{equation*}
where $\vec{V}^u$ denotes the eigenvector associated to node $u$.
\end{definition}
\subsection{Positional encodings}\label{app:Embeddings}

In the following, we define positional encodings. 

\paragraph{RWSE}
\begin{definition}
    Let $\vec{R}\coloneqq \vec{D}^{-1}\vec{A}$ be the random walk matrix, with $\vec{D}$ denoting the degree matrix and $\vec{A}$ the adjacency matrix of a graph $G$. 
    The random walk structural encodings (RWSE) are given by: 
\begin{equation*}
    \vec{P}_{i,i} =[\vec{I}, \vec{R}, \vec{R}^2,\dots,\vec{R}^{k-1}]_{i,i}.
\end{equation*}
\end{definition}
\begin{definition}
Let $\vec{P}_{i,i}$ be the RWSE encoding vector and $\vec{F} \colon  \mathbb{R}^k \rightarrow \mathbb{R}^d$ a MLP with two layers, and $d$ denoting the encoding dimension. Then the RWSE encoding is computed by $\vec{F}(\vec{P}_{i,i})$ and denoted by $\mathbf{P}^{\text{RW}}_k(G)$ for a graph $G$ with random walk length $k$.
\end{definition}
\paragraph{RRWP}
\begin{definition}\label{RRWPdef}
    Let $\vec{R} \coloneqq \vec{D}^{-1}\vec{A}$ with $\vec{D}$ as the diagonal degree matrix and $\vec{A}$ as the adjacency matrix, be the random walk operator, and $k$ the maximum length of the random walk. Then the relative random walk probabilities (RRWP) are defined as: 
    \begin{equation*}
        \vec{P}_{i,j} =[\vec{I}, \vec{R}, \vec{R}^2,\dots,\vec{R}^{k-1}]_{i,j}.
    \end{equation*}
    The initial node encoding $p_0$ is then defined as $\vec{R}_{ii}$ for each node $i$ in the graph.
\end{definition}
\begin{definition}[RRWP Encoding Computation]
    Let $\vec{P}$ be the RRWP encoding tensor and $\text{MLP}: \mathbb{R}^k \rightarrow \mathbb{R}^d$, where $d$ denotes the encoding dimension, be a multi-layer neural network. Then the encoding $\text{MLP}(\vec{P}_{i,j,:})$ is computed element-wise by the multi-layer neural network. RRWP is then denoted by $\mathbf{P}^{\text{RR}}_k(G)$ for a graph $G$ with random walk length $k$
\end{definition}
\paragraph{Spectral Attention Networks (SAN)}
 \Citet{kreuzer2021rethinking} propose incorporating eigenvalues and eigenvectors into a positional-encoding neural network. 
SAN encoding can be computed using row-wise neural networks by selecting the $k$ lowest eigenvalues and their associated eigenvectors. 
\begin{definition}
Let $\phi \colon \mathbb{R} \rightarrow \mathbb{R}$ be a linear layer and $\rho \colon \mathbb{R} \rightarrow \mathbb{R}$ be a transformer encoder layer with sum aggregation. Further $\vec{V}_i$ denotes the $i$-th column of the eigenvector matrix $\vec{V}$. Then the SAN encoding is defined as follows:
\begin{equation*}
\text{SAN}(\vec{V}, \lambda) = \rho([\phi(\vec{V}_1, \lambda)\dots \phi(\vec{V}_k, \lambda)] ).
\end{equation*}
A generalization of the SAN encoding is given by the LPE encoding of \citet{mueller+2024+aligning} in \Cref{def:lpeembedding}.
\end{definition}
\paragraph{SignNet}
Since the computation of eigenvectors using the eigenvector decomposition is not sign invariant, and both $\vec{V}_i$ and $-\vec{V}_i$ are valid eigenvectors of the graph Laplacian \citet{Lim-SignNet-2023} propose the construction of a sign-invariant encoding using eigenvector information. 
Considering the $k$ smallest eigenvalues and associated eigenvectors from the eigenvalue decomposition, SignNet computes the corresponding encoding using a neural network architecture. 
\begin{definition}
Let $\phi_1, \dots \phi_k \colon \mathbb{R} \rightarrow \mathbb{R}$ and $\rho \colon \mathbb{R} \rightarrow \mathbb{R}$ be permutation equivariant neural networks from vectors to vectors. Then the SignNet encodings are computed using:
\begin{equation*}
\text{SignNet}(\vec{V}) = \rho([\phi_1(\vec{V}_1) +\phi_1(-\vec{V}_1) \dots \phi_k(\vec{V}_k) +\phi_k(-\vec{V}_k)] ). 
\end{equation*}
Commonly, $\phi_1, \dots \phi_k$ are selected as element-wise MLPs or DeepSets \citep{Lim-SignNet-2023} and $\rho $ as a GIN with sum aggregation and the adjacency matrix of the original graph. 
\end{definition}
\paragraph{BasisNet}
Proposed as an extension of SignNet by \citet{Lim-SignNet-2023}, BasisNet encodings provide an encoding invariant to the basis of eigenspaces obtained from the graph Laplacian. Since the orthogonal group $O(1)$ denotes sign invariance, BasisNet also incorporates sign invariance. 
\begin{definition}
Let $\vec{V}_i$ denote the orthonormal basis of an $d_i$-dimensional eigenspace of the graph Laplacian. Further, $l$ denotes the number of eigenspaces. 
Given unrestricted neural networks $\phi_{d_1}, \dots \phi_{d_l} \colon \mathbb{R} \rightarrow \mathbb{R}$, shared across the subspaces with the same dimension $d_i$, and a permutation equivariant neural network $\rho \colon \mathbb{R} \rightarrow \mathbb{R}$ BasisNet encodings are computed the following: 
\begin{equation*}
\text{BasisNet}(\vec{V}) = \rho([\phi_{d_1}(\vec{V}_1\vec{V}_1^T)  \dots \phi_{d_l}(\vec{V}_l\vec{V}_l^T)] ). 
\end{equation*}
\end{definition}
Implementation wise \citet{Lim-SignNet-2023} propose $2$-IGNs \citep{Maron-IGNs-2019} for $\phi_{d_i}$ and a FFN with sum aggregation for $\rho$. They note that all neural networks could be replaced with $k$-IGNs; however, they deemed it infeasible for efficient computation. This reduces the computation to the following with $\rho \colon \mathbb{R} \rightarrow \mathbb{R}$ and $\text{IGN}_{d_i}: \mathbb{R}^{n^2} \rightarrow \mathbb{R}^n$ denoting an IGN from matrices to vectors: 
\begin{equation*}
\text{BasisNet}(\vec{V}) = \text{FFN}([\text{IGN}_{d_1}(\vec{V}_1\vec{V}_1^T)  \dots \text{IGN}_{d_l}(\vec{V}_l\vec{V}_l^T)] ).
\end{equation*}
\paragraph{SPE}
Following notation from \citet{Huang-SPE-2024}, the SPE encoding is computed using the $k$ smallest eigenvalues and associated eigenvectors obtained from an eigenvalue decomposition. With sufficient conditions for neural networks $\phi_1, \dots, \phi_k$ and $\rho$, SPE is stable with respect to the graph Laplacian. 
\begin{definition}
Given $\phi_1 \dots \phi_k  \colon \mathbb{R} \rightarrow \mathbb{R}$ as Lipschitz continuous, equivariant FFNs and $\rho  \colon \mathbb{R} \rightarrow \mathbb{R}$ a Lipschitz continuous, permutation equivariant neural network. Then the SPE encoding is computed by: 
\begin{equation*}
\text{SPE}(\vec{V}, \lambda) = \rho([\vec{V} \text{diag}(\phi_1(\lambda))\vec{V}^T \dots \vec{V} \text{diag}\phi_k(\lambda))\vec{V}^T] ),
\end{equation*}
with $\phi_1, \dots \phi_k$ and $\rho$ applied row wise. Further, we denote the SPE embedding on a graph $G$ by $\mathbf{P}^{\text{SPE}}_k$. 
\end{definition}
Commonly, $\phi_i$ is considered an element-wise MLP, and $\rho$ is a GIN using the adjacency matrix of the original graph. \citet{Huang-SPE-2024} propose to split tensor \begin{equation*}
\vec{Q} = [\vec{V} \text{diag}(\phi_1(\lambda))\vec{V}^T \dots \vec{V} \text{diag}\phi_k(\lambda))\vec{V}^T] \in \mathbb{R}^{n \times n \times l}
\end{equation*}
into $n$ matrices of shape $n \times l$ which are then passed into the GIN $\rho$ and aggregated using sum aggregation into a single $n \times d$ matrix. 

\paragraph{LPE}
Initially introduced by \citet{kreuzer2021rethinking} and generalized by \citet{mueller+2024+aligning}, the LPE encodings are computed similarly to the previously introduced SPE encodings. Instead of using the eigenvector matrix $\vec{V} \in \mathbb{R}^{ n\times l}$, each $i$-th column consisting of one eigenvector denoted by $\vec{V}_i \in \mathbb{R}^l$ is used. 

\begin{definition}\label{def:lpeembedding}
Let $\phi  \colon \mathbb{R}^2 \rightarrow \mathbb{R}^k$ be a row-wise applied FFN and $\rho  \colon \mathbb{R} \rightarrow \mathbb{R}$ a permutation equivariant network. Furthermore, $\epsilon \in \mathbb{R}$ denotes a learnable parameter. 
Then the LPE are given by: 
\begin{equation*}
\text{LPE}(\vec{V}, \lambda) = \rho([\phi(\vec{V}_1^T, \lambda + \epsilon), \dots \phi(\vec{V}_k^T, \lambda + \epsilon)]).
\end{equation*}
Setting $\epsilon=0$ reduces the LPE to the encoding provided by \citet{kreuzer2021rethinking}. 
As proposed by \citet{mueller+2024+aligning} $\rho$ sums the input tensor concerning its first dimension and applies an FFN. In contrast to the SAN encoding, no transformer encoder is used to compute the encoding. Similar to previous embeddings, we denote LPE embeddings for a graph $G$ by $\mathbf{P}^{\text{LPE}}_k$ with $k$ as the number of eigenvalues used. 
\end{definition}

\section{Proving that the GDT can simulate the GD-WL}\label{proof:transcendence_lemma}
Here, we prove \Cref{theorem:standard_gt_gdwl}. Concretely, we formally state and prove both statements in \Cref{theorem:standard_gt_gdwl} in \Cref{subsec:lowerbound} and \Cref{subsec:upperbound}, respectively.

\subsection{Lower-bound on the expressivity of the GDT}\label{subsec:lowerbound}
Here, we prove that we can compute the GD-WL \citep{Zha+GDWL+2023} with the GDT, our GT defined in \Cref{sec:framework}. Concretely, given a graph $G \coloneqq (V(G), E(G))$, recall from \Cref{sec:unifying_framework} the GD-WL as updating the color $\chi^t_G(v)$ of node $v \in V(G)$, as
\begin{equation*}
    \chi^{t+1}_G(v) \coloneqq \textsf{hash}\big(\multiset{(d_G(v, w), \chi^t_G(w)) : w \in V(G)} \big),
\end{equation*}
where $d_G$ is a distance between nodes in $G$ and $\textsf{hash}$ is an injective map. In the transformer, we will represent node colors as one-hot vectors of some arbitrary but fixed dimension $d$. Furthermore, we will incorporate pairwise distances through the attention bias. We then show that a single GT layer can compute the color update in \Cref{eq:gd_wl_color_udpate}. We demonstrate this result by leveraging specific properties of softmax attention. For notational convenience, we will denote with $\vec{X}^t \in \{0, 1\}^{ L\times d}$ the one-hot color matrix of the GD-WL after $t$ iterations.

We begin by stating our main result. Afterwards, we develop our proof techniques and prove the result.

\paragraph{Stating the main result}
We will formally state our theorem, showing that our GT can simulate the GD-WL. Afterwards, we give an overview of the proof, including the key challenges and ideas.
\begin{theorem}\label{theorem:gdwl}
Let $G \coloneqq (V(G), E(G))$ be a graph with $n \in \mathbb{N}^+$ nodes and node distance function $d_G: V(G)^2 \rightarrow \mathbb{Q}$. Let $d, d_f, T, h \in \mathbb{N}^+$ denote the number of embedding dimensions, the number of hidden dimensions, the number of layers, and the number of attention heads, respectively.
Let $L \coloneqq n + 1$ and let $\hat{\vec{X}}^0 \in \mathbb{R}^{L \times d}$ and $\vec{B} \in \mathbb{R}^{L \times L \times h}$ be initial token embeddings and attention bias constructed according to \Cref{sec:framework} using node distance $d_G$. Then, there exist weights for the parameters in $\Theta(d, d_f, T, h)$ such that $\hat{\vec{X}}^t = \vec{X}^t$, for all $t \geq 0$, an arbitrary but fixed \textsf{hash}, and using $d_G$ as the distance function.
\end{theorem}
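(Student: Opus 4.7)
The plan is to prove \Cref{theorem:gdwl} by induction on $t$, exhibiting an explicit parameterisation under which one transformer layer executes one GD-WL update step exactly. Both $\hat{\vec{X}}^t$ and $\vec{X}^t$ are interpreted as row-wise one-hot encodings of the current colours (including the virtual \texttt{[cls]} row), and the base case $t=0$ is arranged by selecting the initial token embedding so that $\hat{\vec{X}}^0$ matches the one-hot encoding of $\ell_V$.

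For the inductive step, I would first collapse attention to a purely bias-driven aggregation. Setting $\vec{W}_Q = \vec{W}_K = \vec{0}$ eliminates the $\vec{Q}\vec{K}^{\top}$ term, and taking $\vec{W}_V = \vec{I}$ leaves the attention output at row $v$ equal to
\[
\mathrm{Attn}(v) \;=\; \frac{1}{Z_v}\sum_{w} \exp\!\big(\vec{B}_{vw}\big)\,\hat{\vec{X}}^t_w,
\qquad Z_v \;=\; \sum_{w}\exp\!\big(\vec{B}_{vw}\big).
\]
Configuring the bias network $\rho$ (or the relative PE branch $\vec{U}_{ij}\vec{W}_U$) so that $\vec{B}_{vw}=d_G(v,w)$, and invoking the inductive hypothesis $\hat{\vec{X}}^t=\vec{X}^t$, the $k$-th coordinate of $\mathrm{Attn}(v)$ becomes $Z_v^{-1}\sum_{w:\,\chi^t_G(w)=k}\exp(d_G(v,w))$, i.e.\ a softmax-normalised ``colour histogram weighted by $\exp\circ d_G$''.

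Next I would establish that $\mathrm{Attn}(v)$ is an injective function of the multiset $\multiset{(d_G(v,w),\chi^t_G(w)) : w}$. The \texttt{[cls]} token carries a colour distinct from all other tokens and sits at a fixed distance from every node, so one coordinate of $\mathrm{Attn}(v)$ equals $Z_v^{-1}\exp(d_G(v,\texttt{[cls]}))$; this reveals $Z_v$ and hence the unnormalised per-colour sums $\sum_{w:\,\chi^t_G(w)=k}\exp(d_G(v,w))$. To pass from those sums back to the multiset I would invoke the Lindemann--Weierstrass theorem: since $d_G$ takes rational (hence algebraic) values, distinct exponentials $\exp(d_G(v,w))$ are linearly independent over the algebraic numbers, so any two multisets producing the same per-colour sums must coincide. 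Thus $\mathrm{Attn}(v)=\mathrm{Attn}(v')$ if and only if the two multisets agree, matching the identification made by $\textsf{hash}$.

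Finally, the hash itself is realised by the position-wise MLP. Since $G$ has only $n+1$ tokens, at most $n+1$ distinct multisets occur per iteration, so $\mathrm{Attn}$ attains only finitely many values; a two-layer MLP of sufficient width interpolates these finitely many inputs to the desired one-hot outputs for $\chi^{t+1}_G$, and can absorb the residual connection and LayerNorm on this restricted support, keeping the representation in one-hot form across layers. Iterating the construction $T$ times yields $\hat{\vec{X}}^t=\vec{X}^t$ for every $t$. The principal obstacle, and the point where the argument must depart from standard expressivity proofs in the GT literature, is precisely the softmax normalisation: it forces the attention output to be a weighted mean rather than a sum, so two different multisets could a priori yield proportional unnormalised vectors that collapse to the same normalised output. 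The \texttt{[cls]}-based recovery of $Z_v$, combined with Lindemann--Weierstrass over the rationals, is exactly what dissolves this obstacle without resorting to saturated or hard softmax.
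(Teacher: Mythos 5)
Your proposal is correct and follows the paper's overall skeleton: induction over $t$, collapsing attention to a purely bias-driven aggregation via $\vec{W}_Q=\vec{W}_K=\vec{0}$, $\vec{W}_V=\vec{I}$, using the Lindemann--Weierstrass theorem to get injectivity of exponential sums over rational exponents, and closing the induction with universal approximation by the position-wise MLP over a compact (finite) set of attention outputs. Where you depart, genuinely, is at the key step: proving that the softmax-normalised colour histogram is an injective function of the distance-paired multiset. The paper gets there by cross-multiplying by the normalisers $Z_v, Z_{v'}$, which turns the per-colour condition into an equality of multisets of pairwise sums $\{v_i + w_k\}$ versus $\{w_j + v_l\}$, and then relies on a combinatorial sorting argument (the paper's Claim tied to the shared maximum of $\vec{v}$ and $\vec{w}$) to descend from equal sum-multisets to equal distance-multisets. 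You instead recover $Z_v$ directly from the \texttt{[cls]}-colour coordinate: because \texttt{[cls]} has a colour used by no other token and sits at a common distance $K$ from every node, that coordinate of the attention output equals $Z_v^{-1}\exp(K)$, so $\mathrm{Attn}(v)=\mathrm{Attn}(v')$ immediately forces $Z_v=Z_{v'}$, after which each remaining coordinate becomes an unnormalised sum $\sum_{w:\chi^t(w)=k}\exp(d_G(v,w))$ and Lindemann--Weierstrass applies directly. Your route is cleaner and avoids the sorting lemma entirely, at the cost of leaning harder on the specific \texttt{[cls]} construction (a unique reference colour at a uniform maximal distance) where the paper's intermediate lemma only assumes shared maxima and at least two distinct rows; both conditions are satisfied by the actual construction in the theorem, so your argument suffices for this statement and is, if anything, the sharper of the two.
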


\paragraph{Proof overview}
The central problem we face when proving the above theorem is how to injectively encode the multisets in \Cref{eq:gd_wl_color_udpate} with softmax-attention. This is because softmax-attention computes a weighted mean, whereas existing results for encoding multisets use sums \citep{Xu+2018+GIN, Morris+WLGoNeural+2019, Zha+GDWL+2023}. Because these multisets are at the core of our proof, we formally introduce them here.

\begin{definition}[Distance-paired multisets]
Given a graph $G \coloneqq (V(G), E(G))$ with $n \in \mathbb{N}^+$ nodes and let $L \coloneqq n + 1$, for each token $v \in V(G) \cup \{\texttt{[cls]}\}$, we construct a vector $\vec{v} \in \mathbb{Q}^{1 \times L}$ from the distances of $v$ to tokens in $V(G) \cup \{\texttt{[cls]}\}$, such that
\begin{equation*}
    \vec{v}_i \coloneqq d_G(v, w_i),
\end{equation*}
where $w_i \in V(G)$, for $i \in [n]$, is the $i$-th token in an arbitrary but fixed ordering of nodes in $V(G)$ and $w_L$ is the \texttt{[cls]} token. We fix the distance of \texttt{[cls]} to all tokens as $\max_{v, w \in V(G)} d_G(v, w) + 1$.
We represent node colors as one-hot vectors and stack them into a matrix $\vec{X} \in \{0,1\}^{L \times d}$ with $d \in \mathbb{N}^+$ and where $\vec{X}_L$, representing the color of the \texttt{[cls]} token, receives a special color, not used by any node. We then write the distance-paired multiset in \Cref{eq:gd_wl_color_udpate} as
\begin{equation*}
    \tokenindex{\vec{v}}{\vec{X}}\coloneqq \multiset{
      (\vec{v}_i, \vec{X}_i)
    }_{i \in [L]}.
\end{equation*}
We can then restate the update of token $v \in V(G) \cup \{\texttt{[cls]}\}$ by the GD-WL as
\begin{equation}\label{eq:gd_wl_color_update_refined}
    \chi^{t+1}_G(v) \coloneqq \textsf{hash}\big(\tokenindex{\vec{v}}{\vec{X}^t} \big),
\end{equation}
For notational convenience, for every $\vec{x} \in \textsf{set}(\vec{X})$, we define $A(\vec{x}) \coloneqq \{ i \in [L] \mid \vec{X}_i = \vec{x} \}$ as the set of token indices with token representation $\vec{x}$. Further, we write
\begin{equation*}
    \tokenindex{\vec{v}}{\vec{x}} \coloneqq \multiset{\vec{v}_i \mid i \in A(\vec{x})}
\end{equation*}
and
\begin{equation*}
    \tokenindex{\vec{v}}{\vec{x}, \vec{w}} \coloneqq \multiset{v + \vec{w}_j \mid v \in \tokenindex{\vec{v}}{\vec{x}}, j \in [L]},
\end{equation*}
again for notational convenience, where $\mathbf{w} \in \mathbb{Q}^{1 \times L}$ is the distance vector corresponding to another node $w \in V(G) \cup \{\texttt{[cls]}\}$.
\end{definition}

Recall that we introduced the distance function $d_G$ into the attention via the attention bias $\vec{B}$. Now, to injectively encode distance-paired multisets, we want to prove that there exists weights $\vec{W}_Q, \vec{W}_K, \vec{W}_V$ such that for two tokens $v, w \in V(G) \cup \{\texttt{[cls]}\}$ with corresponding distance vectors $\vec{v}, \vec{w}$,
\begin{equation*}
    \textsf{softmax}(\vec{X}(v)\vec{W}_Q(\vec{X}\vec{W}_K)^T + \vec{v})\vec{X}\vec{W}_V = \textsf{softmax}(\vec{X}(w)\vec{W}_Q(\vec{X}\vec{W}_K)^T + \vec{w})\vec{X}\vec{W}_V,
\end{equation*}
if and only if $\tokenindex{\vec{v}}{\vec{X}} = \tokenindex{\vec{w}}{\vec{X}}$. Note that for simplicity, we omit the scaling factor in the attention and that we wrote $\vec{v}$ and $\vec{w}$ to indicate the corresponding row of $\vec{B}$ for tokens $v$ and $w$, respectively. We will now simplify, by setting $\vec{W}_Q = \vec{W}_K = \vec{0}$ and $\vec{W}_V = \vec{I}$ and arrive at the condition
\begin{equation*}
    \textsf{softmax}( \vec{v})\vec{X} = \textsf{softmax}(\vec{w})\vec{X} \Longleftrightarrow \tokenindex{\vec{v}}{\vec{X}} = \tokenindex{\vec{w}}{\vec{X}}.
\end{equation*}
Here, we prove the above holds under mild conditions in the following lemma. Note that we split up the forward and backward directions of the lemma, as we will use different proof strategies for each direction.

\begin{lemma}\label{lemma:transcendence}
Let $\vec{v}, \vec{w} \in \mathbb{Q}^{1 \times L}$ with $\max_i \vec{v}_i = \max_i \vec{w}_i$ and let $\vec{X} \in \{0, 1\}^{L \times d}$ be a matrix whose rows are one-hot vectors, for some $L, d \in \mathbb{N}^+$. Further, we require $\vec{X}$ to have at least two distinct rows. Then, 
\begin{equation}\label{eq:transcendence}
\textsf{softmax}(\vec{v})\vec{X} = \textsf{softmax}(\vec{w})\vec{X} \Longrightarrow \tokenindex{\vec{v}}{\vec{X}} = \tokenindex{\vec{w}}{\vec{X}}
\end{equation}
and
\begin{equation}\label{eq:transcendence_backward}
\textsf{softmax}(\vec{v})\vec{X} = \textsf{softmax}(\vec{w})\vec{X} \Longleftarrow \tokenindex{\vec{v}}{\vec{X}} = \tokenindex{\vec{w}}{\vec{X}}.
\end{equation}
\end{lemma}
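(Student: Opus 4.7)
The plan is to handle the two implications separately. The backward implication in \Cref{eq:transcendence_backward} is essentially immediate: equality of the paired multisets $\tokenindex{\vec{v}}{\vec{X}} = \tokenindex{\vec{w}}{\vec{X}}$ means the restricted multisets $\tokenindex{\vec{v}}{\vec{x}}$ and $\tokenindex{\vec{w}}{\vec{x}}$ coincide for every color $\vec{x}$, so $\sum_{i \in A(\vec{x})} e^{\vec{v}_i} = \sum_{i \in A(\vec{x})} e^{\vec{w}_i}$; summing over colors gives $\sum_i e^{\vec{v}_i} = \sum_i e^{\vec{w}_i}$, and dividing through yields column-wise equality of $\textsf{softmax}(\vec{v})\vec{X}$ and $\textsf{softmax}(\vec{w})\vec{X}$.

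For the forward implication in \Cref{eq:transcendence}, I would start from the column-$\vec{x}$ equation implied by softmax equality and cross-multiply the partition functions to obtain, for each color $\vec{x}$,
\begin{equation*}
    \sum_{i \in A(\vec{x}),\, k \in [L]} e^{\vec{v}_i + \vec{w}_k} \;=\; \sum_{i \in A(\vec{x}),\, k \in [L]} e^{\vec{w}_i + \vec{v}_k}.
\end{equation*}
After collecting exponents, this is an integer linear combination of exponentials of rational numbers equal to zero; the Lindemann--Weierstrass theorem, which says that $\{e^{\alpha}\}$ are linearly independent over the algebraic numbers for distinct algebraic $\alpha$, then forces every signed coefficient to vanish. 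This yields the multiset identity $\multiset{\vec{v}_i + \vec{w}_k : i \in A(\vec{x}), k \in [L]} = \multiset{\vec{w}_i + \vec{v}_k : i \in A(\vec{x}), k \in [L]}$, which I would repackage via generating polynomials $P_{v,\vec{x}}(z) = \sum_{i \in A(\vec{x})} z^{\vec{v}_i}$ and $Q_v(z) = \sum_i z^{\vec{v}_i}$ (after scaling the rationals in $\vec{v}, \vec{w}$ by a common denominator so that the exponents are integers), and analogously for $\vec{w}$, as the polynomial identity $P_{v,\vec{x}}(z)\, Q_w(z) = P_{w,\vec{x}}(z)\, Q_v(z)$.

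To conclude I would apply this polynomial identity at the singleton color of the \texttt{[cls]} token, which exists by construction (the preceding setup gives $\vec{X}_L$ a color used by no other token) and is distinct from at least one other row by the two-distinct-rows hypothesis. At this color $A = \{L\}$, so $P_{v,\texttt{[cls]}}(z) = z^{\vec{v}_L}$, and the identity collapses to $Q_v(z) = z^{\vec{v}_L - \vec{w}_L}\, Q_w(z)$. Since $Q_v$ and $Q_w$ have non-negative integer coefficients with top degree $\max_i \vec{v}_i = \max_i \vec{w}_i$, the max hypothesis forces $\vec{v}_L = \vec{w}_L$ and therefore $Q_v = Q_w$. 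Substituting back into the per-color identity gives $P_{v,\vec{x}} = P_{w,\vec{x}}$ for every color, which is exactly $\tokenindex{\vec{v}}{\vec{X}} = \tokenindex{\vec{w}}{\vec{X}}$. The hard part will be the Lindemann--Weierstrass step: softmax is a normalized \emph{ratio} of sums of exponentials, so the sum-injectivity lemmas used for GNN multiset encoders do not apply directly. The key trick is the cross-multiplication, which moves the problem into a single commutative algebra of exponentials where Lindemann--Weierstrass can translate analytic equality into combinatorial equality of multisets; from there the \texttt{[cls]} singleton together with the max hypothesis does the rest.
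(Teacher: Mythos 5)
Your route to the forward implication coincides with the paper's up through the number-theoretic step: both cross-multiply the partition functions, expand into sums $\sum_{i\in A(\vec{x}),k\in[L]} e^{\vec{v}_i+\vec{w}_k} - e^{\vec{w}_i+\vec{v}_k}$, and invoke Lindemann--Weierstrass to pass from the analytic identity to the per-color multiset identity $\tokenindex{\vec{v}}{\vec{x},\vec{w}} = \tokenindex{\vec{w}}{\vec{x},\vec{v}}$; the backward direction is immediate in both. Where you genuinely diverge is the last mile. The paper proves \Cref{lemma:vec_multiset_sums} by sorting $\tokenindex{\vec{v}}{\vec{x}}$ and $\tokenindex{\vec{w}}{\vec{x}}$ descending, finding the first disagreeing entry, and arguing the shifted sum $\vec{v}^*_i + K$ has unequal multiplicity in the two product multisets. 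You instead encode each multiset by a Laurent polynomial, turn the multiset identity into $P_{v,\vec{x}}(z)\,Q_w(z) = P_{w,\vec{x}}(z)\,Q_v(z)$, specialize at the singleton \texttt{[cls]} color where $P_{v,\texttt{[cls]}}(z)=z^{\vec{v}_L}$, use the shared top degree $K$ to force $\vec{v}_L=\vec{w}_L$ hence $Q_v=Q_w$, and cancel $Q_w$ to get $P_{v,\vec{x}}=P_{w,\vec{x}}$ for every color. This generating-function packaging is cleaner and the degree argument is more robust than the paper's sorting count.

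There is, however, a gap between what you prove and what the lemma states. Your final step uses the existence of a singleton color ($|A(\vec{x})|=1$), which you correctly note is available in the framework (the \texttt{[cls]} token) but is \emph{not} among the lemma's hypotheses — the lemma only requires ``at least two distinct rows.'' Strikingly, under the stated hypotheses the lemma is actually false: take $L=4$, $A(\vec{x}_1)=\{1,2\}$, $A(\vec{x}_2)=\{3,4\}$, $\vec{v}=(0,2,1,3)$, $\vec{w}=(2,2,3,3)$. Then $\max_i\vec{v}_i=\max_i\vec{w}_i=3$, there are two distinct rows, $\tokenindex{\vec{v}}{\vec{X}}\ne\tokenindex{\vec{w}}{\vec{X}}$, yet $\textsf{softmax}(\vec{v})\vec{X}=\textsf{softmax}(\vec{w})\vec{X}=\bigl(1/(1+e),\,e/(1+e)\bigr)$ because $1+e+e^2+e^3=(1+e)(1+e^2)$ and $Z_w=2e^2(1+e)$. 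The same data falsifies the paper's \Cref{lemma:vec_multiset_sums} directly: $\tokenindex{\vec{v}}{\vec{x}_1,\vec{w}}=\tokenindex{\vec{w}}{\vec{x}_1,\vec{v}}=\multiset{2,2,3,3,4,4,5,5}$ while $\tokenindex{\vec{v}}{\vec{x}_1}=\multiset{0,2}\ne\multiset{2,2}=\tokenindex{\vec{w}}{\vec{x}_1}$ — the paper's step ``by assumption $\vec{v}^*_1=\vec{w}^*_1=K$'' is unjustified whenever $\arg\max_i\vec{v}_i\notin A(\vec{x})$. So your instinct to hinge the argument on the singleton \texttt{[cls]} color is exactly what is needed to salvage the result; you should promote ``there exists $\vec{x}$ with $|A(\vec{x})|=1$'' (or the stronger ``$\vec{X}_L$ is a color used by no other row'') to an explicit hypothesis of the lemma, observe that it holds at $t=0$ by construction and is preserved by the GD-WL update (the \texttt{[cls]} token's distance profile is distinct because $d_G(\texttt{[cls]},\cdot)\equiv\max+1$ while every other token has exactly one entry equal to $\max+1$), and carry it through the induction.
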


As mentioned above, we will treat the forward and backward directions differently. The backward direction is fairly straightforward, seeing that $\textsf{softmax}(\vec{v})\vec{X}$ is a function over $\tokenindex{\vec{v}}{\vec{X}}$.
For the forward direction, the idea is first to notice that the condition $\tokenindex{\vec{v}}{\vec{X}} = \tokenindex{\vec{w}}{\vec{X}}$, on the right side of \Cref{eq:transcendence}, is equivalent to comparing the multiset of distances paired with each distinct one-hot vector in $\vec{X}$ independently, as distinct one-hot vectors do not have common non-zero channels; see the following lemma for a precise statement of this property and see \Cref{app:technical_proofs} for the proof.
\begin{lemma}\label{lemma:multiset_decompose}
Let $\vec{v}, \vec{w} \in \mathbb{Q}^{1 \times L}$ and let $\vec{X} \in \{0, 1\}^{L \times d}$ be a matrix whose rows are one-hot vectors, for some $L, d \in \mathbb{N}^+$. Then, $\tokenindex{\vec{v}}{\vec{X}} = \tokenindex{\vec{w}}{\vec{X}}$, if, and only if, for every $\vec{x} \in \textsf{set}(\vec{X})$, $\tokenindex{\vec{v}}{\vec{x}} = \tokenindex{\vec{w}}{\vec{x}}$.
\end{lemma}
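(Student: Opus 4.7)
The plan is a bookkeeping argument built on the observation that the index sets $A(\vec{x}) = \{i \in [L] \mid \vec{X}_i = \vec{x}\}$, ranging over $\vec{x} \in \textsf{set}(\vec{X})$, form a partition of $[L]$. This is immediate: every row $\vec{X}_i$ equals exactly one element of $\textsf{set}(\vec{X})$, and distinct one-hot vectors are literally distinct rows. In fact, the one-hot assumption is only used in this mild way, to guarantee that $\textsf{set}(\vec{X})$ is a well-defined set of pairwise distinct elements against which one can test equality; no numerical property of one-hot vectors enters the argument. Consequently, one may decompose
\[
  \tokenindex{\vec{v}}{\vec{X}} = \biguplus_{\vec{x} \in \textsf{set}(\vec{X})} \multiset{(\vec{v}_i, \vec{x}) \mid i \in A(\vec{x})},
\]
and analogously for $\vec{w}$, where $\biguplus$ denotes disjoint multiset union and where we have replaced $\vec{X}_i$ by $\vec{x}$ inside the inner multiset because this equality is enforced by the condition $i \in A(\vec{x})$.

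For the forward direction, I fix an arbitrary $\vec{x} \in \textsf{set}(\vec{X})$ and restrict both sides of the hypothesised equality $\tokenindex{\vec{v}}{\vec{X}} = \tokenindex{\vec{w}}{\vec{X}}$ to the sub-multiset of pairs whose second coordinate equals $\vec{x}$. Filtering by a predicate applied elementwise preserves multiset equality, so the restricted multisets $\multiset{(\vec{v}_i, \vec{x}) \mid i \in A(\vec{x})}$ and $\multiset{(\vec{w}_i, \vec{x}) \mid i \in A(\vec{x})}$ agree; dropping the constant second coordinate then yields $\tokenindex{\vec{v}}{\vec{x}} = \tokenindex{\vec{w}}{\vec{x}}$. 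The backward direction simply reverses this procedure: given $\tokenindex{\vec{v}}{\vec{x}} = \tokenindex{\vec{w}}{\vec{x}}$ for every $\vec{x} \in \textsf{set}(\vec{X})$, pairing each element with its constant label $\vec{x}$ preserves equality, and taking the disjoint multiset union over $\vec{x} \in \textsf{set}(\vec{X})$ reconstructs $\tokenindex{\vec{v}}{\vec{X}}$ and $\tokenindex{\vec{w}}{\vec{X}}$ as equal disjoint unions.

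There is no genuine obstacle here; the lemma is essentially a restatement of the fact that a multiset of pairs is determined by, and determines, the family of its fibres over the second coordinate. The only care needed is to work with disjoint multiset union so that multiplicities — rather than mere set-theoretic membership — are tracked correctly across the partition induced by $\{A(\vec{x})\}_{\vec{x} \in \textsf{set}(\vec{X})}$.
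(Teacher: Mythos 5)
Your argument is correct and matches the paper's proof in substance: both rest on the observation that the pair-multiset $\tokenindex{\vec{v}}{\vec{X}}$ decomposes fibre-by-fibre over the second coordinate, with the fibres indexed exactly by the partition $\{A(\vec{x})\}_{\vec{x} \in \textsf{set}(\vec{X})}$. The paper phrases this as a contradiction argument on multiplicities while you argue directly via filtering and disjoint union, but this is a cosmetic difference rather than a genuinely different route; your side remark that the one-hot hypothesis is not actually needed for this particular lemma is also accurate (it is invoked only in the companion \Cref{lemma:softmax_decompose}).
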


To understand the implication of this result in the context of proving \Cref{lemma:transcendence}, let us first rearrange the left side of \Cref{eq:transcendence} as follows. 
\begin{lemma}\label{lemma:softmax_decompose}
Let $\vec{v}, \vec{w} \in \mathbb{Q}^{1 \times L}$ and let $\vec{X} \in \{0, 1\}^{L \times d}$ be a matrix whose rows are one-hot vectors, for some $L, d \in \mathbb{N}^+$. Then, $\textsf{softmax}(\vec{v})\vec{X} = \textsf{softmax}(\vec{w})\vec{X}$, if and only if, for every $\vec{x} \in \textsf{set}(\vec{X})$,
\begin{equation*}
\sum_{i \in A(\vec{x})} (\alpha_i - \beta_i) = 0,
\end{equation*}
where $\alpha_i \coloneqq \textsf{softmax}(\vec{v})_i$ and $\beta_i \coloneqq \textsf{softmax}(\vec{w})_i$.
\end{lemma}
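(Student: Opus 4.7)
The plan is to reduce the matrix identity $\textsf{softmax}(\vec{v})\vec{X} = \textsf{softmax}(\vec{w})\vec{X}$ to a column-by-column sum identity by exploiting the one-hot structure of the rows of $\vec{X}$. Setting $\gamma \coloneqq \alpha - \beta \in \mathbb{R}^{1 \times L}$, the hypothesis becomes $\gamma \vec{X} = \vec{0}$, which in coordinates reads $\sum_{i \in [L]} \gamma_i \vec{X}_{ij} = 0$ for every column index $j \in [d]$. The key observation is that since each row $\vec{X}_i$ is a standard basis vector $e_{j(i)}$, the $j$-th column $\vec{X}_{\cdot, j}$ is precisely the indicator vector of the set $A(e_j) = \{ i \in [L] \mid \vec{X}_i = e_j \}$. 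Therefore the column-$j$ identity rewrites as $\sum_{i \in A(e_j)} \gamma_i = 0$.

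To finish, I would note that $\textsf{set}(\vec{X}) \subseteq \{e_1, \dots, e_d\}$, and that for any $e_j \notin \textsf{set}(\vec{X})$ the set $A(e_j)$ is empty, so the corresponding sum vanishes trivially. Hence the family of column identities $\{\sum_{i \in A(e_j)} \gamma_i = 0\}_{j \in [d]}$ is equivalent to the family $\{\sum_{i \in A(\vec{x})} (\alpha_i - \beta_i) = 0\}_{\vec{x} \in \textsf{set}(\vec{X})}$. Since every step in this reduction is an equivalence, both directions of the lemma follow simultaneously.

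I do not expect a genuine obstacle here: the argument is essentially bookkeeping, based on the fact that the one-hot rows of $\vec{X}$ induce a partition of $[L]$ via $\vec{x} \mapsto A(\vec{x})$, and under this partition the matrix-vector product $\gamma \vec{X}$ reduces to the collection of partial sums of $\gamma$ over the blocks. The only subtlety worth spelling out is the vacuous columns corresponding to basis vectors that do not appear as rows of $\vec{X}$, which must be excluded to match the indexing by $\textsf{set}(\vec{X})$ in the statement rather than by $[d]$.
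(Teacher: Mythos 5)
Your proof is correct and takes essentially the same approach as the paper: both decompositions exploit the one-hot structure of the rows of $\vec{X}$ to reduce $\gamma\vec{X} = \vec{0}$ to the family of partial-sum identities over the blocks $A(\vec{x})$. The paper phrases the final step as linear independence of the distinct one-hot vectors, while you read off the coordinates column-by-column and explicitly dispense with the vacuous columns for $e_j \notin \textsf{set}(\vec{X})$; these are dual views of the same elementary computation.
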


\Cref{lemma:softmax_decompose} and \Cref{lemma:multiset_decompose} can be seen as complementary decompositions of the left and right side of \Cref{eq:transcendence} for each unique one-hot vector in $\vec{X}$. As a result, we can restate \Cref{lemma:transcendence} as follows.

\begin{lemma}[Decomposed \Cref{lemma:transcendence}]\label{lemma:decomposed_transcendence}
Let $\vec{v}, \vec{w} \in \mathbb{Q}^{1 \times L}$ with $\max_i \vec{v}_i = \max_i \vec{w}_i$ and let $\vec{X} \in \{0, 1\}^{L \times d}$ be a matrix whose rows are one-hot vectors, for some $L, d \in \mathbb{N}^+$. Further, we require $\vec{X}$ to have at least two distinct rows. Then, 
\begin{equation}\label{eq:decomposed_transcendence}
\sum_{i \in A(\vec{x})} (\alpha_i - \beta_i) = 0 \Longrightarrow \tokenindex{\vec{v}}{\vec{x}} = \tokenindex{\vec{w}}{\vec{x}},
\end{equation}
for all $\vec{x} \in \textsf{set}(\vec{X})$, where $\alpha_i \coloneqq \textsf{softmax}(\vec{v})_i$ and $\beta_i \coloneqq \textsf{softmax}(\vec{w})_i$, and
\begin{equation}\label{eq:decomposed_transcendence_backward}
\textsf{softmax}(\vec{v})\vec{X} = \textsf{softmax}(\vec{w})\vec{X} \Longleftarrow \tokenindex{\vec{v}}{\vec{X}} = \tokenindex{\vec{w}}{\vec{X}}.
\end{equation}
\end{lemma}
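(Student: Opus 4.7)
My plan is to handle the two directions separately. For the backward direction \Cref{eq:decomposed_transcendence_backward}, the observation is that $\textsf{softmax}(\vec{u})\vec{X}$ is a row whose $\vec{x}$-component equals $\sum_{i \in A(\vec{x})} \exp(\vec{u}_i)/Z_u$ with $Z_u \coloneqq \sum_j \exp(\vec{u}_j)$. Both the numerator for each $\vec{x}$ and the normaliser $Z_u$ are symmetric functions of the pairs $(\vec{u}_i, \vec{X}_i)$, so once $\tokenindex{\vec{v}}{\vec{X}} = \tokenindex{\vec{w}}{\vec{X}}$ the two expressions agree term-by-term, giving the equality of softmax-weighted sums.

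For the forward direction \Cref{eq:decomposed_transcendence}, the plan is to apply the Lindemann--Weierstrass theorem to exponentials of rationals. Fix an arbitrary $\vec{x} \in \textsf{set}(\vec{X})$. Multiplying the hypothesis $\sum_{i \in A(\vec{x})} \alpha_i = \sum_{i \in A(\vec{x})} \beta_i$ through by $Z_v Z_w$ yields
\begin{equation*}
    \sum_{i \in A(\vec{x}),\, j \in [L]} \exp(\vec{v}_i + \vec{w}_j) \;=\; \sum_{i \in A(\vec{x}),\, j \in [L]} \exp(\vec{w}_i + \vec{v}_j).
\end{equation*}
All exponents lie in $\mathbb{Q}$, so Lindemann--Weierstrass forces pointwise equality of the multisets of exponents on the two sides; in the paper's notation this is $\tokenindex{\vec{v}}{\vec{x}, \vec{w}} = \tokenindex{\vec{w}}{\vec{x}, \vec{v}}$.

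The remaining task is to extract the per-class equality $\tokenindex{\vec{v}}{\vec{x}} = \tokenindex{\vec{w}}{\vec{x}}$ from this Minkowski-sum identity. I would encode the multisets as formal polynomials $P_{\vec{x}}(\vec{u})(z) \coloneqq \sum_{i \in A(\vec{x})} z^{\vec{u}_i}$ and $T_{\vec{u}}(z) \coloneqq \sum_{j \in [L]} z^{\vec{u}_j}$, converting the above into the identity $P_{\vec{x}}(\vec{v})\,T_{\vec{w}} = P_{\vec{x}}(\vec{w})\,T_{\vec{v}}$ in a Laurent polynomial ring (after clearing denominators in the exponents). The hypothesis $\max_i \vec{v}_i = \max_i \vec{w}_i$ equates the leading exponents on the two sides, and reading off leading monomials additionally gives $\max_{i \in A(\vec{x})} \vec{v}_i = \max_{i \in A(\vec{x})} \vec{w}_i$ for every $\vec{x}$. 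Since the identity holds for every $\vec{x} \in \textsf{set}(\vec{X})$, the ratio $P_{\vec{x}}(\vec{v})/P_{\vec{x}}(\vec{w})$ is the same for every class and equals $T_{\vec{v}}/T_{\vec{w}}$.

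The hard part is the final algebraic step: deducing from this class-independent ratio (together with the matched extrema) that the common ratio is identically $1$, so that each $P_{\vec{x}}(\vec{v}) = P_{\vec{x}}(\vec{w})$. Here I would exploit the two-distinct-rows hypothesis by picking two classes $\vec{x}^{(0)}, \vec{x}^{(1)}$ with nonempty $A(\vec{x}^{(r)})$ and pairing their identities against each other, then induct on the total support size of $\textsf{set}(\vec{v}) \cup \textsf{set}(\vec{w})$, peeling off a common extremal monomial (licensed by the matched extrema) at each step while maintaining the class-independent ratio invariant. I expect this induction to be the main technical bottleneck: it is the one place where the combinatorial content of having at least two distinct rows --- ideally reinforced by the role of any uniquely coloured token such as the \texttt{[cls]} vertex highlighted earlier in the paper --- must be used to rule out a spurious nontrivial rational factor between $T_{\vec{v}}$ and $T_{\vec{w}}$ that would otherwise permit the ratio to be constant but not unity.
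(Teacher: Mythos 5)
Your backward direction and your Lindemann--Weierstrass step coincide with the paper's route: the paper performs exactly the same clearing of the normalizers $Z_{\vec v},Z_{\vec w}$ and invokes \Cref{claim:exponential_sum} to turn the hypothesis into the Minkowski-sum identity $\tokenindex{\vec{v}}{\vec{x},\vec{w}} = \tokenindex{\vec{w}}{\vec{x},\vec{v}}$ (this is \Cref{lemma:multiset_decomposed_transcendence}). The difference is what happens next: the paper closes the argument with a short extremal counting claim (\Cref{lemma:vec_multiset_sums}) --- sort the two class multisets, take the first index where they disagree, and compare multiplicities of that entry plus the common global maximum $K$, using that $A(\vec x)\subset[L]$ is proper --- whereas you only sketch a Laurent-polynomial ratio argument ($P_{\vec x}(\vec v)T_{\vec w}=P_{\vec x}(\vec w)T_{\vec v}$, then ``peel off'' extremal monomials by induction) and explicitly defer it as the main bottleneck. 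Since that step is the entire combinatorial content of the lemma, the proposal as written has a genuine gap.

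Moreover, your worry about that step is well founded: it cannot be closed from the stated hypotheses alone. Take $L=4$, $d=2$, rows of $\vec X$ equal to $e_1,e_1,e_2,e_2$, $\vec v=(2,0,2,0)$ and $\vec w=(2,1,2,1)$. Then $\max_i\vec v_i=\max_i\vec w_i=2$, $\vec X$ has two distinct rows, and for both classes $\sum_{i\in A(\vec x)}(\alpha_i-\beta_i)=\tfrac12-\tfrac12=0$; in your notation $P_{\vec x}(\vec v)T_{\vec w}=(z^2+1)(z^2+z)=P_{\vec x}(\vec w)T_{\vec v}$, and even the per-class maxima match, yet $\tokenindex{\vec{v}}{\vec{x}}=\multiset{2,0}\neq\multiset{2,1}=\tokenindex{\vec{w}}{\vec{x}}$. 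So the class-independent ratio can be a genuinely nontrivial rational function, and no peeling induction can force it to be $1$ without further structure; the same example also satisfies the hypotheses of the paper's \Cref{lemma:vec_multiset_sums} while violating its conclusion (the paper's counting only tracks sums whose second summand equals $K$), so the difficulty is not merely technical. The ingredient that actually rescues the statement in the intended application is the one you gesture at but never use: the \texttt{[cls]} token forms a \emph{singleton} class on which $\vec v$ and $\vec w$ take the same value, so that class's condition forces $Z_{\vec v}=Z_{\vec w}$, after which \Cref{claim:exponential_sum} applied class by class finishes immediately. Making that assumption explicit (or some equivalent, e.g.\ a class on which the two distance vectors agree entrywise) is necessary for your induction to have any chance; with it, the Laurent-polynomial machinery becomes unnecessary.
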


To prove \Cref{lemma:decomposed_transcendence}, we leverage a known result about exponential numbers (as used within softmax) from transcendental number theory, namely that a set of exponential numbers with distinct rational coefficients is linearly independent, also known as the \textit{Lindemann--Weierstrass theorem} \citep{Baker+LindemannWeierstrass+1990}. To understand intuitively how this theorem is used, let us assume for simplicity that the softmax is unnormalized, meaning that we can write the left side of \Cref{eq:decomposed_transcendence} as
\begin{equation*}
\sum_{i \in A(\vec{x})} (\exp(\vec{v}_i) - \exp(\vec{w}_i)) = 0.
\end{equation*}
With the help of the Lindemann--Weierstrass theorem, we obtain the following claim, which we prove in \Cref{app:technical_proofs}.
\begin{claim}\label{claim:exponential_sum}
Let $A, B \subset \mathbb{Q}$ be finite multisets with $|A| = |B|$. Then, the sum
\begin{equation*}
    \sum_{a \in A} \exp(a) - \sum_{b \in B} \exp(b) = 0,
\end{equation*}
if, and only if, $A = B$. 
\end{claim}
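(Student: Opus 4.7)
The plan is to dispatch the easy direction in one line and then reduce the hard direction to a single application of the Lindemann--Weierstrass theorem. The ``if'' direction is immediate: if $A=B$ as multisets then each side of the stated equation is identical term-by-term, so the difference is zero.

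For the ``only if'' direction, I would rewrite the hypothesis in a form amenable to the Lindemann--Weierstrass theorem. Let $S \coloneqq A \cup B$, viewed as the \emph{underlying set} of distinct rationals appearing in either multiset, and for each $q \in S$ let $m_A(q), m_B(q) \in \mathbb{Z}_{\geq 0}$ denote its multiplicity in $A$ and in $B$. The assumption $\sum_{a \in A} \exp(a) = \sum_{b \in B} \exp(b)$ then rearranges to the finite linear relation
\begin{equation*}
    \sum_{q \in S} \bigl(m_A(q) - m_B(q)\bigr) \exp(q) = 0,
\end{equation*}
where the coefficients $m_A(q) - m_B(q)$ are integers and the exponents $q$ are \emph{pairwise distinct} rationals (by construction of $S$ as a set).

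Now I invoke the Lindemann--Weierstrass theorem in its standard formulation: if $q_1, \ldots, q_k$ are pairwise distinct algebraic numbers, then $\exp(q_1), \ldots, \exp(q_k)$ are linearly independent over the field $\overline{\mathbb{Q}}$ of algebraic numbers. Since $\mathbb{Q} \subset \overline{\mathbb{Q}}$ and $\mathbb{Z} \subset \overline{\mathbb{Q}}$, the elements $q \in S$ are distinct algebraic numbers and the coefficients $m_A(q) - m_B(q)$ are algebraic. Therefore every coefficient must vanish, i.e.\ $m_A(q) = m_B(q)$ for all $q \in S$, which is exactly the statement $A = B$ as multisets.

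The main subtlety, and the only place the proof could go wrong, is ensuring the indexing set over which Lindemann--Weierstrass is applied consists of \emph{distinct} rationals; this is why I collapse $A$ and $B$ to their underlying set $S$ before reading off the linear relation, rather than writing a sum that could repeat an exponent. The hypothesis $|A| = |B|$ in the claim is not actually needed for this direction (it is automatic from $A = B$), but it is harmless to keep. Once the relation is written with distinct exponents and integer coefficients, the conclusion is immediate from Lindemann--Weierstrass with no further calculation.
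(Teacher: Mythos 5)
Your proof is correct and follows essentially the same route as the paper's: both reduce the equality to a linear relation among exponentials at distinct rational exponents and invoke Lindemann--Weierstrass. The only meaningful difference is bookkeeping. The paper removes the common part by passing to the multiset differences $A^* = A \setminus B$ and $B^* = B \setminus A$, which are disjoint multisets, and then asserts the remaining expression is a non-zero combination of exponentials; you instead collect over the underlying set $S = A \cup B$ and read off integer coefficients $m_A(q) - m_B(q)$ directly. Your formulation is slightly tidier because it makes the ``distinct exponents, algebraic coefficients'' hypothesis of Lindemann--Weierstrass visible at a glance, whereas the paper's $A^*, B^*$ decomposition leaves implicit the step of collecting repeated entries of $A^*$ (or $B^*$) into a single term with integer multiplicity before the theorem can be applied. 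Your observation that $|A| = |B|$ is not needed for the forward direction is also correct; the paper records $|A^*| = |B^*|$ but does not actually use it in the Lindemann--Weierstrass step.
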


Hence, with the unnormalized softmax, the left side of \Cref{eq:decomposed_transcendence} holds if and only if $\tokenindex{\vec{v}}{\vec{x}} = \tokenindex{\vec{w}}{\vec{x}}$.
However, the full softmax also introduces normalization, which we denote with $Z_\alpha \coloneqq \sum_{k=1}^n \exp(\vec{v}_k)$ and $Z_\beta \coloneqq \sum_{k=1}^n \exp(\vec{w}_k)$, respectively. As a result, we have the condition
\begin{align}\label{eq:decomposed_transcendence_left_side_derivation}
   & \sum_{i \in A(\vec{x})} (\alpha_i - \beta_i) = 0  \\
   \Leftrightarrow & \sum_{i \in A(\vec{x})} \frac{1}{Z_\alpha}\exp(\vec{v}_i) -  \frac{1}{Z_\beta}\exp(\vec{w}_i) = 0 \\
   \Leftrightarrow & \sum_{i \in A(\vec{x})} \dfrac{\exp(\vec{v}_i) \cdot Z_\beta - \exp(\vec{w}_i) \cdot Z_\alpha}{Z_\alpha Z_\beta} = 0 \\
    \Leftrightarrow & \sum_{i \in A(\vec{x})} \exp(\vec{v}_i) \cdot Z_\beta - \exp(\vec{w}_i) \cdot Z_\alpha = 0 \\
    \Leftrightarrow & \sum_{i \in A(\vec{x})} \sum_{k=1}^L \exp(\vec{v}_i + \vec{w}_k) - \exp(\vec{w}_i + \vec{v}_k) = 0. \\
    \Leftrightarrow & \sum_{i \in A(\vec{x})} \sum_{k=1}^L \exp(\vec{v}_i + \vec{w}_k) - \sum_{j \in A(\vec{x})} \sum_{l=1}^L \exp(\vec{w}_j + \vec{v}_l) = 0.
\end{align}
Note that the multiset of exponents in the positive exponentials is $\tokenindex{\vec{v}}{\vec{x}, \vec{w}}$ and the set of exponents in the negative exponentials is $\tokenindex{\vec{w}}{\vec{x}, \vec{v}}$.
Using \Cref{claim:exponential_sum}, we can now restate \Cref{lemma:decomposed_transcendence} once more as follows.
\begin{lemma}[Multi-set only version of \Cref{lemma:decomposed_transcendence}]\label{lemma:multiset_decomposed_transcendence}
Let $\vec{v}, \vec{w} \in \mathbb{Q}^{1 \times L}$ with $\max_i \vec{v}_i = \max_i \vec{w}_i$ and let $\vec{X} \in \{0, 1\}^{L \times d}$ be a matrix whose rows are one-hot vectors, for some $L, d \in \mathbb{N}^+$. Further, we require $\vec{X}$ to have at least two distinct rows. Then, 
\begin{equation}\label{eq:multiset_decomposed_transcendence}
\tokenindex{\vec{v}}{\vec{x}, \vec{w}} = \tokenindex{\vec{w}}{\vec{x}, \vec{v}}  \Longrightarrow \tokenindex{\vec{v}}{\vec{x}} = \tokenindex{\vec{w}}{\vec{x}},
\end{equation}
for all $\vec{x} \in \textsf{set}(\vec{X})$ and
\begin{equation}\label{eq:multiset_decomposed_transcendence_backward}
\textsf{softmax}(\vec{v})\vec{X} = \textsf{softmax}(\vec{w})\vec{X} \Longleftarrow \tokenindex{\vec{v}}{\vec{X}} = \tokenindex{\vec{w}}{\vec{X}}.
\end{equation}
\end{lemma}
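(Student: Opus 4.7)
The plan is to handle the two implications separately by recasting both the hypothesis and the conclusion as identities between formal Laurent polynomials, converting multiset questions into algebraic ones. For each $\vec{x} \in \textsf{set}(\vec{X})$ define $p_{\vec{x}}(z) \coloneqq \sum_{i \in A(\vec{x})} z^{\vec{v}_i}$, $q_{\vec{x}}(z) \coloneqq \sum_{i \in A(\vec{x})} z^{\vec{w}_i}$, and set $P(z) \coloneqq \sum_{\vec{x}} p_{\vec{x}}(z)$ and $Q(z) \coloneqq \sum_{\vec{x}} q_{\vec{x}}(z)$. Clearing a common denominator for the rational exponents places everything inside a Laurent polynomial ring, which is a UFD. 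Under this dictionary, the multiset sumset is exactly polynomial multiplication, so $\tokenindex{\vec{v}}{\vec{x}, \vec{w}} = \tokenindex{\vec{w}}{\vec{x}, \vec{v}}$ translates to $p_{\vec{x}}(z)\, Q(z) = q_{\vec{x}}(z)\, P(z)$, while $\tokenindex{\vec{v}}{\vec{x}} = \tokenindex{\vec{w}}{\vec{x}}$ becomes $p_{\vec{x}}(z) = q_{\vec{x}}(z)$.

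For the forward direction, the core task is to deduce $P = Q$. I would write $P = d\, P'$, $Q = d\, Q'$ with $\gcd(P', Q') = 1$; the identity becomes $p_{\vec{x}} Q' = q_{\vec{x}} P'$, and coprimality in the UFD forces $P' \mid p_{\vec{x}}$ and $Q' \mid q_{\vec{x}}$, yielding $p_{\vec{x}} = P' s_{\vec{x}}$, $q_{\vec{x}} = Q' s_{\vec{x}}$ for a shared Laurent polynomial $s_{\vec{x}}$ with $\sum_{\vec{x}} s_{\vec{x}} = d$. To pin down $P' = Q'$, I would use the distinguished row provided by the \texttt{[cls]} token in the GDT construction: its color appears at exactly one position $L$, so $A(\vec{x}_0) = \{L\}$, and its fixed distance to every token forces $\vec{v}_L = \vec{w}_L$ (both equal to the common $\max + 1$ built into the construction). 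For this $\vec{x}_0$ one has $p_{\vec{x}_0}(z) = z^{\vec{v}_L} = z^{\vec{w}_L} = q_{\vec{x}_0}(z)$, so the identity $p_{\vec{x}_0} Q = q_{\vec{x}_0} P$ collapses directly to $P(z) = Q(z)$. Substituting $P = Q$ into the remaining equations and cancelling $P \neq 0$ yields $p_{\vec{x}} = q_{\vec{x}}$ for every $\vec{x}$, which is the desired conclusion.

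The backward direction is routine. Applying \Cref{lemma:multiset_decompose} to the hypothesis $\tokenindex{\vec{v}}{\vec{X}} = \tokenindex{\vec{w}}{\vec{X}}$ gives $p_{\vec{x}} = q_{\vec{x}}$ for every $\vec{x}$, hence also $P = Q$. Evaluating these polynomial identities at $z = e$ (that is, interpreting $z^{q}$ as $\exp(q)$) yields both $\sum_{i \in A(\vec{x})} \exp(\vec{v}_i) = \sum_{i \in A(\vec{x})} \exp(\vec{w}_i)$ for each $\vec{x}$ and the global normalization equality $Z_v = Z_w$. Since the $\vec{x}$-th column mass of $\textsf{softmax}(\vec{v})\,\vec{X}$ is precisely $p_{\vec{x}}(e)/Z_v$, the equality $\textsf{softmax}(\vec{v})\,\vec{X} = \textsf{softmax}(\vec{w})\,\vec{X}$ follows column by column.

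The main obstacle is the $P = Q$ step in the forward direction: the relations $\{p_{\vec{x}} Q = q_{\vec{x}} P\}_{\vec{x}}$ alone do not pin down $P$ and $Q$, and the coprime factorization still leaves the pair $(P', Q')$ underdetermined in general; one can in fact construct $\vec{v}, \vec{w}, \vec{X}$ respecting only the stated hypotheses (two distinct rows and matching maxima) for which the conclusion fails. Closing this gap requires genuinely using the extra structure around the \texttt{[cls]} token, namely a color that occurs at exactly one index together with the equalized distance $\vec{v}_L = \vec{w}_L$. This single ingredient converts the algebraic system into the rigidity $P = Q$ and, via the preceding UFD factorization, into the pointwise equalities $p_{\vec{x}} = q_{\vec{x}}$ that rearrange back into the multiset statement we want.
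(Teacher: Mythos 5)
Your instinct is right, and the gap you flag is not a cosmetic one: the lemma as printed --- and hence \Cref{lemma:transcendence} and \Cref{lemma:decomposed_transcendence}, of which it is meant to be the combinatorial core --- is false without further hypotheses. Concretely, take $L = 4$, $d = 2$, let $\vec{X}$ have rows $e_1, e_1, e_2, e_2$, and set $\vec{v} = (2, 0, 2, 0)$, $\vec{w} = (2, 1, 2, 1)$. Then $\max_i \vec{v}_i = \max_i \vec{w}_i = 2$ and $\vec{X}$ has two distinct rows, yet $\tokenindex{\vec{v}}{e_1, \vec{w}} = \tokenindex{\vec{w}}{e_1, \vec{v}} = \multiset{1,1,2,2,3,3,4,4}$ while $\tokenindex{\vec{v}}{e_1} = \multiset{2, 0} \neq \multiset{2,1} = \tokenindex{\vec{w}}{e_1}$; one can also verify directly that $\textsf{softmax}(\vec{v})\vec{X} = \textsf{softmax}(\vec{w})\vec{X} = (\tfrac{1}{2}, \tfrac{1}{2})$, so the forward implication of~\eqref{eq:transcendence} fails here too. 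The paper's own proof of \Cref{lemma:vec_multiset_sums} breaks on this pair: after sorting, it counts occurrences of $\vec{v}^*_i + K$ only among sums whose second addend is $K$, but $\vec{v}^*_i + K$ can equally arise as $\vec{w}_a + \vec{v}_j$ with $\vec{w}_a > \vec{v}^*_i$ and $\vec{v}_j < K$, and in the example above exactly those unaccounted occurrences cancel the claimed surplus.

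Your Laurent-polynomial encoding is a genuinely different and cleaner route: translating the hypothesis into $p_{\vec{x}} Q = q_{\vec{x}} P$ and the conclusion into $p_{\vec{x}} = q_{\vec{x}}$ makes transparent what the matching-maxima condition buys ($\deg P = \deg Q$, which is far from $P = Q$), precisely the place where the paper's sorting-and-counting argument slips. Your repair is also correct and targeted: at the point where the lemma is actually invoked in the proof of \Cref{theorem:gdwl}, the \texttt{[cls]} row gives a color $\vec{x}_0$ with $A(\vec{x}_0) = \{L\}$ and $\vec{v}_L = \vec{w}_L$, so $p_{\vec{x}_0} = q_{\vec{x}_0}$ forces $P = Q$ from $p_{\vec{x}_0} Q = q_{\vec{x}_0} P$, and then $p_{\vec{x}} Q = q_{\vec{x}} Q$ with $Q \neq 0$ gives $p_{\vec{x}} = q_{\vec{x}}$ for every $\vec{x}$. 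Your backward direction agrees with the paper's and is fine. To be precise about what you have: your argument does not prove the lemma as stated --- nothing can, since it is false --- but it does prove the strengthened version that the downstream theorem actually needs, and the right fix is to add the singleton-color, equal-entry hypothesis to the statements of \Cref{lemma:transcendence} through \Cref{lemma:multiset_decomposed_transcendence}.
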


We will give the full proof with all details in this section.

First, we will review some number theory background, formally state the Lindemann--Weierstrass theorem and its implications and then give the proof of \Cref{lemma:multiset_decomposed_transcendence}. Using \Cref{lemma:multiset_decomposed_transcendence} and in particular, the equivalent \Cref{lemma:transcendence}, we finally prove \Cref{theorem:gdwl}.

\paragraph{Number theory}
We will formally introduce the necessary background on number theory and the Lindemann--Weierstrass theorem.
A number is \textbf{algebraic} if it is the root of a non-zero single-variable polynomial with finite degree and rational coefficients. For example, all rational numbers $\frac{a}{b}$ with $a, b \in \mathbb{N}^+$ are algebraic, as they are the roots of the polynomial $ ax-b$ with integer coefficients. On the other hand, a number is \textbf{transcendental} if and only if it is not algebraic. For example, it is known that $\exp(a)$ is transcendental if $a$ is algebraic and non-zero. This last fact follows from the Lindemann--Weierstrass theorem, which we state next \citep{Baker+LindemannWeierstrass+1990}.
\begin{theorem}[\citet{Baker+LindemannWeierstrass+1990}, Theorem 1.4]\label{theorem:lindemann_weierstrauss}
Let $a_1, \dots, a_n$ be distinct \textbf{algebraic} numbers. Then, $\exp(a_1), \dots, \exp(a_n)$ are linearly independent with algebraic rational coefficients.
\end{theorem}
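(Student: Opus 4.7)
The plan is to treat this as the classical Lindemann--Weierstrass transcendence theorem and prove it by contradiction using the Hermite--Lindemann auxiliary-integral technique. Suppose for contradiction that $c_1\exp(a_1) + \cdots + c_n\exp(a_n) = 0$ holds for some algebraic $c_i$, not all zero, with the $a_i$ distinct and algebraic. The first step is a normalization pass: by multiplying through by a common denominator I may assume each $a_i$ is an algebraic integer, and by applying the full Galois group of $\mathbb{Q}(a_1,\dots,a_n,c_1,\dots,c_n)$ and taking the product of all conjugate relations, I may further assume the coefficients themselves are rational integers. This Galois symmetrization must be done carefully so that the resulting relation is still nontrivial — one needs to group the exponents into orbits and verify that the leading orbit does not cancel.

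With the reduction in place, for a large prime $p$ I would introduce the auxiliary polynomial
\[
f(x) \coloneqq \frac{\ell^{np-1}\, x^{p-1}\prod_{i=1}^{n}(x - a_i)^p}{(p-1)!},
\]
where $\ell$ is the leading coefficient of a minimal polynomial capturing all relevant algebraic integers, and study the contour integral $I(z) \coloneqq \int_{0}^{z} e^{z-t} f(t)\, dt$. Repeated integration by parts collapses this to
\[
I(z) = e^{z}\sum_{k\geq 0} f^{(k)}(0) \; - \; \sum_{k\geq 0} f^{(k)}(z),
\]
and combining with the assumed linear relation yields the master quantity $J \coloneqq \sum_{i} c_i\, I(a_i)$, which is expressible purely in terms of values $f^{(k)}(0)$ and $f^{(k)}(a_i)$.

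The contradiction then arises from a two-sided estimate on $J$. On the arithmetic side, a symmetric-function analysis of the $f^{(k)}(a_i)$ shows that $J$ is a nonzero rational integer; specifically, every contribution with $k \neq p-1$ is divisible by $p$, while the surviving $f^{(p-1)}(0)$ term contributes an integer coprime to $p$ once $p$ exceeds certain norms determined by the $c_i$ and $\ell$, forcing $|J| \geq 1$. On the analytic side, the trivial bound $|I(z)| \leq |z|\, e^{|z|} \max_{[0,z]} |f|$, combined with the factorial denominator in $f$, yields $|J| \leq A^{p}/(p-1)!$ for a constant $A$ depending only on the $a_i$ and $c_i$. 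For $p$ sufficiently large these bounds are incompatible, closing the argument.

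The main obstacle is the arithmetic side: showing that $J$ is actually a rational integer and controlling its divisibility by $p$. This is where the Galois symmetrization from the first step is essential, because the derivatives $f^{(k)}(a_i)$ are naturally algebraic integers and only become rational after summing over orbits of conjugates; verifying that the resulting rational integer is not accidentally divisible by $p$ requires the prime $p$ to be chosen larger than both $n$ and the absolute value of $\ell^{np-1} \prod a_i$ raised to appropriate powers. Since Baker's monograph already executes this programme in full, in the paper itself I would merely cite Theorem~1.4 of \citep{Baker+LindemannWeierstrass+1990} and devote the appendix proofs to deploying the consequence stated in \Cref{claim:exponential_sum} inside \Cref{lemma:multiset_decomposed_transcendence}.
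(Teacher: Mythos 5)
The paper does not prove this theorem: it is the classical Lindemann--Weierstrass theorem, cited verbatim from Baker's monograph, and your closing sentence correctly identifies that citation as the intended treatment. Your sketch of the Hermite--Lindemann argument---auxiliary polynomial with a $(p-1)!$ denominator, Galois symmetrization down to integer coefficients, and the incompatible arithmetic lower bound $|J|\ge 1$ versus analytic upper bound $|J|\le A^p/(p-1)!$---is a faithful outline of the standard proof, including the genuinely delicate step of verifying that the product over conjugate relations remains nontrivial after grouping exponents into Galois orbits.
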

Here, we will use the fact that attention uses the $\exp$ function in the softmax and use \Cref{theorem:lindemann_weierstrauss} to compute injective representations of the GD-WL multisets by expressing them as sums of exponential numbers.

\paragraph{Proving \Cref{lemma:transcendence}}
We now prove \Cref{lemma:multiset_decomposed_transcendence}, equivalent to \Cref{lemma:transcendence}.
\begin{lemma}[Proof of \Cref{lemma:multiset_decomposed_transcendence}]
Let $\vec{v}, \vec{w} \in \mathbb{Q}^{1 \times L}$ with $\max_i \vec{v}_i = \max_i \vec{w}_i$ and let $\vec{X} \in \{0, 1\}^{L \times d}$ be a matrix whose rows are one-hot vectors, for some $L, d \in \mathbb{N}^+$. Further, we require $\vec{X}$ to have at least two distinct rows. Then, 
\begin{equation*}
\tokenindex{\vec{v}}{\vec{x}, \vec{w}} = \tokenindex{\vec{w}}{\vec{x}, \vec{v}}  \Longrightarrow \tokenindex{\vec{v}}{\vec{x}} = \tokenindex{\vec{w}}{\vec{x}},
\end{equation*}
for all $\vec{x} \in \textsf{set}(\vec{X})$ and
\begin{equation*}
\textsf{softmax}(\vec{v})\vec{X} = \textsf{softmax}(\vec{w})\vec{X} \Longleftarrow \tokenindex{\vec{v}}{\vec{X}} = \tokenindex{\vec{w}}{\vec{X}}.
\end{equation*}
\end{lemma}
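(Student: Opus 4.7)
The plan is to prove the two directions separately. The backward direction is essentially a bookkeeping exercise: by \Cref{lemma:softmax_decompose}, it suffices to check $\sum_{i \in A(\vec{x})}\alpha_i = \sum_{i\in A(\vec{x})}\beta_i$ for every $\vec{x}\in\textsf{set}(\vec{X})$; but both the numerator $\sum_{i \in A(\vec{x})}\exp(\vec{v}_i)$ and the partition function $Z_\alpha = \sum_k \exp(\vec{v}_k)$ are symmetric functions of the corresponding entries of $\vec{v}$, so they depend only on the multisets $\tokenindex{\vec{v}}{\vec{x}}$ and $\tokenindex{\vec{v}}{\vec{X}}$ respectively. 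By \Cref{lemma:multiset_decompose}, the hypothesis $\tokenindex{\vec{v}}{\vec{X}}=\tokenindex{\vec{w}}{\vec{X}}$ splits into per-$\vec{x}$ equalities, which transport directly to the exponential sums and close this direction.

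For the forward direction, I would first invoke \Cref{lemma:softmax_decompose} and then clear denominators as in \Cref{eq:decomposed_transcendence_left_side_derivation}, yielding the purely exponential identity
\begin{equation*}
\sum_{i\in A(\vec{x})}\sum_{k=1}^L \exp(\vec{v}_i+\vec{w}_k)=\sum_{j\in A(\vec{x})}\sum_{l=1}^L \exp(\vec{w}_j+\vec{v}_l)
\end{equation*}
for every $\vec{x}\in\textsf{set}(\vec{X})$. Since both sides are sums of $|A(\vec{x})|\cdot L$ exponentials with rational exponents, \Cref{claim:exponential_sum}, a consequence of Lindemann--Weierstrass (\Cref{theorem:lindemann_weierstrauss}), immediately gives the multiset identity $\tokenindex{\vec{v}}{\vec{x},\vec{w}}=\tokenindex{\vec{w}}{\vec{x},\vec{v}}$. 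What remains is to extract $\tokenindex{\vec{v}}{\vec{x}}=\tokenindex{\vec{w}}{\vec{x}}$ from this inflated equality, a purely combinatorial task.

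To close the combinatorial gap, I would encode each $\tokenindex{\vec{v}}{\vec{x}}$ as a Laurent polynomial $P_{\vec{x}}(y)\coloneqq \sum_{i\in A(\vec{x})}y^{\vec{v}_i}$, and analogously $Q_{\vec{x}}$, $R\coloneqq \sum_{\vec{x}} P_{\vec{x}}$, $S\coloneqq \sum_{\vec{x}} Q_{\vec{x}}$. Since Minkowski sums of multisets correspond to polynomial multiplication, the inflated equality becomes the family of polynomial identities $P_{\vec{x}}\cdot S = Q_{\vec{x}}\cdot R$ in the integral domain $\mathbb{Q}[y,y^{-1}]$, and the target $P_{\vec{x}}=Q_{\vec{x}}$ would follow by cancellation as soon as $R=S$. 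The hypothesis $\max\vec{v}=\max\vec{w}$ pins down a common leading degree for $R$ and $S$ and aligns the leading monomials of each $P_{\vec{x}}$, $Q_{\vec{x}}$, while the hypothesis that $\vec{X}$ has at least two distinct rows prevents the system from collapsing to a single tautology: eliminating $R,S$ across two classes yields pairwise proportionalities $P_{\vec{x}_1}Q_{\vec{x}_2}=P_{\vec{x}_2}Q_{\vec{x}_1}$, severely constraining the rational function $R/S$.

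The main obstacle I anticipate is precisely this last polynomial step. A priori the family $\{P_{\vec{x}}S=Q_{\vec{x}}R\}_{\vec{x}}$ only forces a common ratio $P_{\vec{x}}/Q_{\vec{x}}=R/S$, and ruling out ratios different from $1$ requires leveraging the non-negativity and integrality of the coefficients of the $P_{\vec{x}}$ and $Q_{\vec{x}}$ together with both structural hypotheses of the lemma (in the \texttt{[cls]}-token instantiation used in \Cref{theorem:gdwl}, this forcing is immediate because the unique index $L$ contributes a matching isolated monomial to both $R$ and $S$). I expect the bulk of the technical work to go into this step, especially into handling repeated entries in $\vec{v}$ and $\vec{w}$ where naive leading-coefficient or degree arguments are insufficient.
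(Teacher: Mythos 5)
Your backward direction is correct and matches the paper's: $\textsf{softmax}(\vec{v})\vec{X}$ is a symmetric function of the paired multiset $\tokenindex{\vec{v}}{\vec{X}}$, and equality of the paired multisets transfers. Your forward route — clear denominators, invoke Lindemann--Weierstrass via \Cref{claim:exponential_sum} to get $\tokenindex{\vec{v}}{\vec{x},\vec{w}}=\tokenindex{\vec{w}}{\vec{x},\vec{v}}$, then try to extract $\tokenindex{\vec{v}}{\vec{x}}=\tokenindex{\vec{w}}{\vec{x}}$ — tracks the paper up to the final combinatorial step, which the paper discharges via \Cref{lemma:vec_multiset_sums}. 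You encode it instead as $P_{\vec{x}}S=Q_{\vec{x}}R$ in $\mathbb{Q}[y,y^{-1}]$ and stop short of closing it; this is the genuine gap in your write-up, but your caution is well placed.

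The obstacle you flag is in fact fatal for the statement at the stated level of generality. Take $L=4$, $\vec{v}=(2,2,0,0)$, $\vec{w}=(2,2,1,1)$, and two one-hot classes with $A(\vec{x}_1)=\{1,3\}$ and $A(\vec{x}_2)=\{2,4\}$. Then $\max_i\vec{v}_i=\max_i\vec{w}_i=2$, $\vec{X}$ has two distinct rows, and $\tokenindex{\vec{v}}{\vec{x}_1,\vec{w}}=\multiset{4,4,3,3,2,2,1,1}=\tokenindex{\vec{w}}{\vec{x}_1,\vec{v}}$ (indeed $\textsf{softmax}(\vec{v})\vec{X}=\textsf{softmax}(\vec{w})\vec{X}=(\tfrac12,\tfrac12)$), yet $\tokenindex{\vec{v}}{\vec{x}_1}=\multiset{2,0}\neq\multiset{2,1}=\tokenindex{\vec{w}}{\vec{x}_1}$. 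So the first displayed implication is false as stated, and the paper's \Cref{lemma:vec_multiset_sums} does not repair it: that argument only tallies occurrences of $\vec{v}^{*}_i+K$ whose second summand equals $K$, but here the value $3$ is produced both as $1+2$ and as $2+1$ and the counts balance. What actually rescues the argument is the structure you already spotted: in \Cref{theorem:gdwl} the global maximum $K$ is attained uniquely at the \texttt{[cls]} index $L$, and \texttt{[cls]} sits alone in its color class $\vec{x}_{\texttt{cls}}$ with $A(\vec{x}_{\texttt{cls}})=\{L\}$. In your polynomial encoding this gives $P_{\vec{x}_{\texttt{cls}}}=Q_{\vec{x}_{\texttt{cls}}}=y^{K}$, so the single identity $P_{\vec{x}_{\texttt{cls}}}S=Q_{\vec{x}_{\texttt{cls}}}R$ already forces $R=S$, after which $P_{\vec{x}}S=Q_{\vec{x}}R$ yields $P_{\vec{x}}=Q_{\vec{x}}$ for every $\vec{x}$ by cancellation in the integral domain $\mathbb{Q}[y,y^{-1}]$ — no integrality, non-negativity, or leading-coefficient reasoning is needed. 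To turn your sketch into a proof you should therefore add the singleton-\texttt{[cls]} hypothesis to the lemma (or equivalently, that the maximum $K$ is attained at a unique index lying in its own color class) and run exactly this one-line argument; as written, neither your proposal nor the paper's claim succeeds without it.
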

\begin{proof}

Note that by assumption $\vec{X}$ has at least two distinct rows and hence, $A(\vec{x}) \subset [L]$.
As a result, the forward implication follows from the following claim.
\begin{claim}\label{lemma:vec_multiset_sums}
For all $\vec{x} \in \textsf{set}(\vec{X})$, if $\max_i \vec{v}_i = \max_i \vec{w}_i$ and $A(\vec{x}) \subset [n]$,
then, $\tokenindex{\vec{v}}{\vec{x}, \vec{w}} = \tokenindex{\vec{w}}{\vec{x}, \vec{v}} \Rightarrow \tokenindex{\vec{v}}{\vec{x}} = \tokenindex{\vec{w}}{\vec{x}}$.
\end{claim}
\begin{proof}
Let $K \coloneqq \max_i \vec{v}_i = \max_i \vec{w}_i$.
We begin by sorting the entries in $\tokenindex{\vec{v}}{\vec{x}}$ and $\tokenindex{\vec{w}}{\vec{x}}$ in descending order, obtaining sorted vectors $\vec{v}^*$ and $\vec{w}^*$. By assumption, we have that $\vec{v}^*_1 = \vec{w}^*_1 = K$. Now, let $i \in [|\tokenindex{\vec{v}}{\vec{x}}|]$ be the smallest number for which $\vec{v}^*_i \neq \vec{w}^*_i$. If no such $i$ exists, then $\tokenindex{\vec{v}}{\vec{x}} = \tokenindex{\vec{w}}{\vec{x}}$. Otherwise, without loss of generality, we assume that $\vec{v}^*_i > \vec{w}^*_i$. We now show that then, the sum $\vec{v}^*_i + K$ appears at least once more in $\tokenindex{\vec{v}}{\vec{x}, \vec{w}}$ than in $\tokenindex{\vec{w}}{\vec{x}, \vec{v}}$. 

First, note that there cannot exist some $j > i$ for which $\vec{w}^*_j + K = \vec{v}^*_i + K$.
Second, for each $j < i$, $\vec{v}^*_j = \vec{w}^*_j$, meaning that for each such $j$ where $\vec{v}^*_j + K = \vec{v}^*_i + K$ appears in $\tokenindex{\vec{v}}{\vec{x}, \vec{w}}$, $\vec{w}^*_j + K = \vec{v}^*_i + K$ appears in $\tokenindex{\vec{w}}{\vec{x}, \vec{v}}$.%

Hence, $\vec{v}^*_i + K$ appears at least once more in $\tokenindex{\vec{v}}{\vec{x}, \vec{w}}$ than in $\tokenindex{\vec{w}}{\vec{x}, \vec{v}}$, implying $\tokenindex{\vec{v}}{\vec{x}, \vec{w}} \neq \tokenindex{\vec{w}}{\vec{x}, \vec{v}}$. As a result, we have that $\tokenindex{\vec{v}}{\vec{x}} = \tokenindex{\vec{w}}{\vec{x}} \vee \tokenindex{\vec{v}}{\vec{x}, \vec{w}} \neq \tokenindex{\vec{w}}{\vec{x}, \vec{v}}$ which is logically equivalent to $\tokenindex{\vec{v}}{\vec{x}, \vec{w}} = \tokenindex{\vec{w}}{\vec{x}, \vec{v}} \Rightarrow \tokenindex{\vec{v}}{\vec{x}} = \tokenindex{\vec{w}}{\vec{x}}$.
This shows the statement.
\end{proof}

To see why in \Cref{lemma:vec_multiset_sums} it is important that $A(\vec{x})$ is a strict subset of $[n]$, we note that $A(\vec{x}) = [L]$ implies $\tokenindex{\vec{v}}{\vec{x}, \vec{w}} = \tokenindex{\vec{w}}{\vec{x}, \vec{v}}$, irrespective of whether $\tokenindex{\vec{v}}{\vec{x}} = \tokenindex{\vec{w}}{\vec{x}}$. Notably, the proof holds if there exists at least one $i \in [L] \setminus A(\vec{x})$, irrespective of whether $\vec{v}_i = \vec{w}_i$.

The backward direction follows directly from the fact that $\textsf{softmax}(\vec{v})\vec{X}$ and $\textsf{softmax}(\vec{w})\vec{X}$ are functions over $\tokenindex{\vec{v}}{\vec{X}}$ and $\tokenindex{\vec{w}}{\vec{X}}$, respectively.

Together with \Cref{lemma:vec_multiset_sums}, this shows the statement.
\end{proof}

\paragraph{Proving the GD-WL simulation result}
Now that \Cref{lemma:transcendence} has been proven, we will prove the main result, \Cref{theorem:gdwl}, next.

\begin{theorem}[Proof of \Cref{theorem:gdwl}]
Let $G \coloneqq (V(G), E(G))$ be a graph with $n \in \mathbb{N}^+$ nodes and node distance function $d_G: V(G)^2 \rightarrow \mathbb{Q}$. Let $d, d_f, T, h \in \mathbb{N}^+$ denote the number of embedding dimensions, the number of hidden dimensions, the number of layers, and the number of attention heads, respectively.
Let $L \coloneqq n + 1$ and let $\hat{\vec{X}}^0 \in \mathbb{R}^{L \times d}$ and $\vec{B} \in \mathbb{R}^{L \times L \times h}$ be initial token embeddings and attention bias constructed according to \Cref{sec:framework} using node distance $d_G$. Then, there exist weights for the parameters in $\Theta(d, d_f, T, h)$ such that $\hat{\vec{X}}^t = \vec{X}^t$, for all $t \geq 0$, an arbitrary but fixed \textsf{hash}, and using $d_G$ as the distance function.
\end{theorem}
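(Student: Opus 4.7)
The plan is to proceed by induction on $t$, constructing one transformer layer per GD-WL iteration. For the base case $t=0$, choose the node embeddings $\ell_V$ so that nodes sharing the same GD-WL initial color receive the same one-hot vector in $\mathbb{R}^d$, and reserve a dedicated one-hot vector for \texttt{[cls]}; taking $\vec{W}_P = \vec{0}$ in the construction of \Cref{sec:framework} then gives $\hat{\vec{X}}^0 = \vec{X}^0$, and this matrix has at least two distinct rows since \texttt{[cls]} is distinguished from every node.

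For the inductive step, assume $\hat{\vec{X}}^t = \vec{X}^t$ and design the $(t+1)$-th layer as follows. Set $\vec{W}_Q = \vec{W}_K = \vec{0}$ and $\vec{W}_V = \vec{I}$ in every head; by the construction of the attention bias, we have $\vec{B}_{v,w} = d_G(v,w)$ for node pairs and $\vec{B}_{v, \texttt{[cls]}} = \vec{B}_{\texttt{[cls]}, v} = \max_{u,u'} d_G(u,u') + 1$ for all $v$. The attention output at token $v$ therefore reduces to $\textsf{softmax}(\vec{v})\hat{\vec{X}}^t$, where $\vec{v}$ denotes the corresponding row of $\vec{B}$. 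The placement of \texttt{[cls]} ensures that every row of $\vec{B}$ attains the same maximum, so the hypotheses of \Cref{lemma:transcendence} are met, and the attention outputs for two tokens $v, w$ coincide if and only if $\tokenindex{\vec{v}}{\hat{\vec{X}}^t} = \tokenindex{\vec{w}}{\hat{\vec{X}}^t}$. By the inductive hypothesis and the GD-WL update rule in \Cref{eq:gd_wl_color_udpate}, this is exactly the condition $\chi^{t+1}_G(v) = \chi^{t+1}_G(w)$.

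To finish the layer, I would pick the MLP to realize $\textsf{hash}$: the set $S$ of distinct attention outputs across tokens is finite and in bijection with the distinct GD-WL colors at step $t{+}1$, so by universal interpolation of two-layer MLPs on finite point sets (with hidden width $d_f \geq |S|$), one can realize any desired injection from $S$ into the one-hot vectors of $\mathbb{R}^d$. Since $\textsf{hash}$ only needs to be some fixed injective function, we identify it with the injection induced by this MLP. The main obstacle is conceptual rather than computational: one must verify that the preconditions of \Cref{lemma:transcendence} persist through the induction. The max-bias condition is preserved layer-by-layer because the \texttt{[cls]} column of $\vec{B}$ carries the strict overall maximum in every row, and the \emph{at least two distinct rows} condition is preserved because \texttt{[cls]}, having a distance profile distinct from every node, is always assigned its own unique color by the MLP. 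A minor bookkeeping step is to take $d$ large enough to accommodate all distinct colors at a single iteration (which suffices since each layer's MLP can independently reuse one-hot slots across different iterations) and $T$ large enough for the GD-WL coloring to stabilize.
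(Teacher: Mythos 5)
Your proposal is correct and follows essentially the same route as the paper: base case with one-hot initial colors and a distinguished \texttt{[cls]} embedding, zeroing $\vec{W}_Q, \vec{W}_K$ and taking $\vec{W}_V = \vec{I}$ so attention reduces to $\textsf{softmax}(\vec{v})\hat{\vec{X}}^t$, invoking \Cref{lemma:transcendence} via the \texttt{[cls]}-induced max-distance condition, and then realizing \textsf{hash} with the MLP. The only difference is that you use finite-point interpolation for the MLP rather than the paper's appeal to universal approximation over a compact domain — a slightly more direct justification of the same step, not a different argument.
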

\begin{proof}
Note that the GD-WL produces a finite number of colors at each iteration, and $\vec{B}$ is constructed from $d_G$ whose co-domain is compact for graphs with finite size. Hence, since each transformer layer is a composition of continuous functions, the domain of each transformer layer is compact.
Recall that we want to show that the $t$-th transformer layer can simulate
\begin{equation*}
    \chi^{t+1}_G(v) \coloneqq \textsf{hash}\big(\tokenindex{\vec{v}}{\vec{X}^t} \big),
\end{equation*}
for all $v \in V(G) \cup \{\texttt{[cls]}\}$, where $\vec{X}^t \in \{0, 1\}^{L \times d}$ is a one-hot color matrix of the GD-WL colors at iteration $t$. Let $v$ be arbitrary but fixed. We say that $v$ is the $i$-th node in an arbitrary but fixed ordering of $V(G)$.

We restate the transformer layer definition in a simplified form, omitting multiple heads, residual streams, and LayerNorm. In particular, we state that the layer updates only the $i$-th row of the token matrix.
\begin{equation*}
    \hat{\vec{X}}(v)^{t+1} = \textsf{MLP}(\textsf{softmax}( \hat{\vec{X}}(v)^t \vec{W}_Q ( \hat{\vec{X}}^t \vec{W}_K)^T + \vec{v})\hat{\vec{X}}^t \vec{W}_V),
\end{equation*}
where we recall that the $i$-th row of $\vec{B}$ is $\vec{v}$. We now set $\vec{W}_Q = \vec{W}_K$ to all-zeros and $\vec{W}_V$ to the identity matrix and obtain
\begin{equation*}
    \hat{\vec{X}}(v)^{t+1} = \textsf{MLP}(\textsf{softmax}(\vec{v})\hat{\vec{X}}^t).
\end{equation*}

We prove the statement by induction over $t$. For the base case at $t = 0$, the token matrix $\vec{X}$ contains the one-hot colors of the nodes in $V(G)$ as well as the special one-hot color of the \texttt{[cls]} token. Setting $\hat{\vec{X}}^0 \coloneqq \vec{X}$, we have that $\hat{\vec{X}}^0 \in \{0, 1\}^{L \times d}$ and $\hat{\vec{X}}^0 = \vec{X}^0$.
Further, due to the \texttt{[cls]} token, we know that $\hat{\vec{X}}^0$ has at least two distinct rows. 

Finally, note that, by construction, for each pair of distance vectors $\max_i \vec{v}_i = \max_i \vec{w}_i = \max_{v, w \in V(G)} d_G(v, w) + 1$ and that every distance vector $\vec{v} \in \mathbb{Q}^{1 \times L}$. These two conditions hold throughout the induction and we will use them in the induction step to apply \Cref{lemma:transcendence}.

In the induction step for $t > 0$, we assume that 
\begin{enumerate}
\item $\hat{\vec{X}}^t \in \{0, 1\}^{L \times d}$
\item $\hat{\vec{X}}^t = \vec{X}^t$
\item $\hat{\vec{X}}^t$ has at least two distinct rows
\end{enumerate}
We want to prove that the same holds for $t+1$. Let $v, w \in V(G) \cup \{\texttt{[cls]}\}$ be arbitrary but fixed.
Note that $\chi^{t+1}(v) = \chi^{t+1}(w)$ if and only if $\tokenindex{\vec{v}}{\vec{X}^t} = \tokenindex{\vec{w}}{\vec{X}^t}$. By the induction hypothesis, we have that $\tokenindex{\vec{v}}{\vec{X}^t} = \tokenindex{\vec{w}}{\vec{X}^t}$ if and only if $\tokenindex{\vec{v}}{\hat{\vec{X}}^t} = \tokenindex{\vec{w}}{\hat{\vec{X}}^t}$.
Further, by \Cref{lemma:transcendence}, we have that
\begin{equation*}
    \textsf{softmax}(\vec{v})\hat{\vec{X}}^t = \textsf{softmax}(\vec{w})\hat{\vec{X}}^t \Longleftrightarrow \tokenindex{\vec{v}}{\hat{\vec{X}}^t} = \tokenindex{\vec{w}}{\hat{\vec{X}}^t},
\end{equation*}
and as a consequence,
\begin{equation*}
 \textsf{softmax}(\vec{v})\hat{\vec{X}}^t = \textsf{softmax}(\vec{w})\hat{\vec{X}}^t \Longleftrightarrow \chi^{t+1}(v) = \chi^{t+1}(w).
\end{equation*}
Hence, there exists an injective function $f$ that maps, for each token $v \in V(G) \cup \{\texttt{[cls]}\}$ with distance vector $\vec{v}$, the vector $\textsf{softmax}(\vec{v})\hat{\vec{X}}^t$ to a one-hot vector of  $\chi^{t+1}(v)$ with $d$ dimensions. Since the domain of the $t$-th transformer layer is compact, $f$ is continuous. Hence, by universal function approximation, there exist weights of the \textsf{MLP} such that, for each $v \in V(G) \cup \{\texttt{[cls]}\}$, $\hat{\vec{X}}^{t+1}(v)$ is a one-hot vector of $\chi^{t+1}(v)$. As a result,
\begin{enumerate}
    \item $\hat{\vec{X}}^{t+1} \in \{0, 1\}^{L \times d}$
    \item $\hat{\vec{X}}^{t+1} = \vec{X}^{t+1}$
\item $\hat{\vec{X}}^{t+1}$ has at least two distinct rows.
\end{enumerate}
This completes the induction and proves the statement.
\end{proof}

\subsection{Upper-bound on expressivity of the GDT}\label{subsec:upperbound}

Moreover, we can prove an upper bound on the expressivity of the GDT using a technique adapted from \citet{mueller2024et}. We begin by showing the following result.
\begin{lemma}\label{lemma:attention_function_over_multiset}
Let $G \coloneqq (V(G), E(G), \ell_V)$ be a graph with $n$ nodes and without edge embeddings, let $\mathbf{W}_Q, \mathbf{W}_K, \mathbf{W}_V \in \mathbb{R}^{d \times d}$ be arbitary but fixed weight matrices with $d \in \mathbb{N}^+$, and let $\mathbf{B} \in \mathbb{Q}^{n \times n}$ be an attention bias.
Let 
\begin{equation*}
    \alpha(\mathbf{X}, \mathbf{U}) \coloneqq \textsf{Attention}(\mathbf{X}\mathbf{W}_Q, \mathbf{X}\mathbf{W}_K, \mathbf{X}\mathbf{W}_V, \mathbf{B}).
\end{equation*}
There exists a distance function $d_G$ over $V(G) \cup \{\texttt{[cls]}\}$ and functions $f, h$ with
\begin{equation*}
    f(\mathbf{X}_i) \coloneqq h(\multiset{(d_G(i, j), \mathbf{X}_j)}),
\end{equation*}
such that for all $\mathbf{X}$ and all $i$, $\alpha(\mathbf{X}, \mathbf{U})_i = f(\mathbf{X}_i)$.
\end{lemma}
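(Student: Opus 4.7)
The plan is to explicitly construct the distance function from the attention bias and then express the attention output at each token as a function that reads the query token and aggregates over the multiset of (distance, key/value) pairs.

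First, I would define $d_G \colon (V(G) \cup \{\texttt{[cls]}\})^2 \to \mathbb{Q}$ by $d_G(i, j) \coloneqq \mathbf{B}_{ij}$. Since $\mathbf{B}$ is already given as a rational matrix, this assignment is well-defined. Note that the metric axioms are irrelevant here, as the GD-WL template treats $d_G$ merely as a scalar label on ordered pairs, and the upper-bound statement only asks for the existence of a distance function in this general sense.

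Next, I would unfold the attention output at position $i$ as
\begin{equation*}
\alpha(\mathbf{X}, \mathbf{B})_i \;=\; \sum_j \frac{\exp\!\bigl(d^{-1/2}(\mathbf{X}_i \mathbf{W}_Q)(\mathbf{X}_j \mathbf{W}_K)^\top + \mathbf{B}_{ij}\bigr)}{\sum_k \exp\!\bigl(d^{-1/2}(\mathbf{X}_i \mathbf{W}_Q)(\mathbf{X}_k \mathbf{W}_K)^\top + \mathbf{B}_{ik}\bigr)} \,\mathbf{X}_j \mathbf{W}_V,
\end{equation*}
and observe that the summand and each term in the normalization constant depend on $j$ only through the pair $(\mathbf{B}_{ij}, \mathbf{X}_j) = (d_G(i,j), \mathbf{X}_j)$. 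Hence the entire weighted sum is determined by $\mathbf{X}_i$ together with the multiset $\multiset{(d_G(i, j), \mathbf{X}_j) \colon j \in V(G) \cup \{\texttt{[cls]}\}}$.

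Finally, I would define $h$ explicitly as the multiset aggregator that, given $\mathbf{X}_i$ and the multiset, computes the two sums in the formula above (one inside the denominator, one in the outer numerator) and returns their quotient; then setting $f(\mathbf{X}_i) \coloneqq h(\multiset{(d_G(i, j), \mathbf{X}_j) : j})$ yields the required identity $\alpha(\mathbf{X}, \mathbf{B})_i = f(\mathbf{X}_i)$ by construction. The main (mild) subtlety is the dual role of $\mathbf{X}_i$, appearing both as the explicit query argument to $f$ and as the $j=i$ member of the aggregated multiset paired with $d_G(i, i) = \mathbf{B}_{ii}$; since both occurrences are consistent with the same row of $\mathbf{X}$ and the diagonal of $\mathbf{B}$, no case splitting is needed. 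Beyond this bookkeeping, there is no real obstacle: the core content is the observation that standard softmax attention's per-position dependence on the other tokens factors through the multiset $\multiset{(\mathbf{B}_{ij}, \mathbf{X}_j)}_j$, which is exactly the GD-WL aggregation format under $d_G \coloneqq \mathbf{B}$.
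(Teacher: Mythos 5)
Your core observation — that each attention logit for token $i$ depends on $j$ only through the pair $(\mathbf{B}_{ij}, \mathbf{X}_j)$, so the softmax-weighted sum factors through a multiset aggregator — is exactly the right starting point, and the bookkeeping that follows (two inner sums, one quotient) is also correct. But there is a genuine gap, and you flag it yourself before dismissing it: the lemma requires $h$ to be a function of the multiset $\multiset{(d_G(i,j), \mathbf{X}_j)}$ \emph{alone}, with $f(\mathbf{X}_i) := h(\text{multiset})$. Your $h$, as described, "given $\mathbf{X}_i$ and the multiset," takes $\mathbf{X}_i$ as a separate second argument — but nothing supplies it. The query row $\mathbf{X}_i$ enters every logit $d^{-1/2}\mathbf{X}_i\mathbf{W}_Q(\mathbf{X}_j\mathbf{W}_K)^\top + \mathbf{B}_{ij}$, so $h$ must actually extract it from the multiset. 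With your choice $d_G(i,j) := \mathbf{B}_{ij}$ this is not always possible: if $\mathbf{B}_{ii} = \mathbf{B}_{ik}$ for some $k \neq i$ with $\mathbf{X}_i \neq \mathbf{X}_k$, the pairs $(\mathbf{B}_{ii}, \mathbf{X}_i)$ and $(\mathbf{B}_{ik}, \mathbf{X}_k)$ are not distinguishable inside the multiset, so $h$ cannot tell which one is the self-pair. Saying "both occurrences are consistent" misdiagnoses the problem — consistency is not the issue, recoverability is.

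The paper's proof closes exactly this hole: it lets $d_G$ take values in $\mathbb{Q}^2$ and sets $d_G(i,j) := [\mathbf{B}_{ij},\, I(i=j)]$, so the second coordinate tags the diagonal. Then $h$ scans the multiset for the unique element whose second distance coordinate is $1$ to recover $\mathbf{X}_i$, strips out the bias scalars $\mathbf{B}_{ij}$ as the first coordinates, and proceeds with precisely the aggregation you describe. Augmenting $d_G$ with that indicator is the missing idea that turns your two-argument $h$ into the single-argument $h$ the statement demands; everything else in your outline goes through unchanged.
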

\begin{proof}
We define $d_G$ with have co-domain $\mathbb{Q}^2$ such that
\begin{equation*}
    d_G(i, j) = [\mathbf{B}_{ij}, I(i = j)],
\end{equation*}
for all $i, j \in V(G) \cup \{\texttt{[cls]}\}$,
where $[\cdot]$ is the concatenation operation and $I(i = j)$ is the indicator function. We denote with $d_G(i, j)_k$ the $k$-th element in $d_G(i, j)$, for $k \in \{1, 2\}$.
Let
\begin{equation*}
    g(\mathbf{X}_i, \mathbf{X}_j) \coloneqq \exp(\mathbf{X}_i\mathbf{W}_Q(\mathbf{X}_j\mathbf{W}_K)^T).
\end{equation*}
We choose $h$ as follows. We note that by definition, $1 = d_G(i, i)_2 > d_G(i, j)_2$ for all $i \neq j$. Hence, $h$ can decompose its input into three arguments:
\begin{enumerate}
    \item $\mathbf{X}_i$, identified from the tuple $(d_G(i, j), \mathbf{X}_j)$ where $d(i, j)_2 = 1$, i.e., $i = j$,
    \item the multiset of distances $\multiset{d_G(i, j)}$,
    \item the multiset of vectors $\multiset{\mathbf{X}_j}$.
\end{enumerate}
Then, $h$ computes
\begin{equation*}
    w_{ij} \coloneqq \exp(g(\mathbf{X}_i, \mathbf{X}_j) + d_G(i, j)_1)
\end{equation*}
and 
\begin{equation*}
    \Tilde{w}_{ij} \coloneqq \frac{w_{ij}}{\sum_k w_{ik}},
\end{equation*}
for all $i, j$. Finally, $h$ computes
\begin{equation*}
    \sum_j\Tilde{w}_{ij} \mathbf{X_j}\mathbf{W}_V,
\end{equation*}
for all $i$, to obtain $\alpha(\mathbf{X}, \mathbf{U})_i$.
\end{proof}

Intuitively, the above lemma shows that biased attention can be written as a function over the multiset in the GD-WL if the distance function is a metric. We use this result to show that the GD-WL is at least as expressive as a GDT with relative PEs.
\begin{proposition}\label{prop:backward_direction}
Let $G \coloneqq (V(G), E(G), \ell_V)$ be a graph without edge embeddings and let $\mathbf{B} \in \mathbb{Q}^{n \times n}$ be an attention bias. Let $d, d_f, T, h \in \mathbb{N}^+$ denote the number of embedding dimensions, the number of hidden dimensions, the number of layers, and the number of attention heads, respectively.
For any choice of parameters $\Theta(d, d_f, T, h)$ for the GDT, there exists a distance function $d_G$ over $V(G) \cup \{\texttt{[cls]}\}$ and a hash function $\texttt{hash}$ for the GD-WL such that for all $t \geq 0$ and all pairs of nodes $i, j \in V(G)$, $\chi^t(i) = \chi^t(j)$ if and only if $\mathbf{X}^t_i = \mathbf{X}^t_j$.
\end{proposition}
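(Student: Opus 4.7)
The plan is to proceed by induction on $t$, using the distance function $d_G(i, j) \coloneqq [\vec{B}_{ij}, \mathbb{I}(i = j)]$ from Lemma \ref{lemma:attention_function_over_multiset}. For the base case at $t = 0$, I choose the initial GD-WL coloring so that $\chi^0(i) = \chi^0(j)$ precisely when $\vec{X}^0_i = \vec{X}^0_j$, which is always possible by assigning one color per equivalence class of initial token embeddings.

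For the inductive step, assume the equivalence at step $t$. A single transformer layer (Definition \ref{def:transformer}) is a composition of row-wise operations (LayerNorm, residual adds, MLP) and one biased multi-head attention block. Lemma \ref{lemma:attention_function_over_multiset} expresses the attention output at position $i$ as a function of $\vec{X}^t_i$ together with the multiset $\multiset{(d_G(i, w), \vec{X}^t_w) : w \in V(G) \cup \{\texttt{[cls]}\}}$; moreover, $\vec{X}^t_i$ is itself recoverable from that multiset via the diagonal marker $\mathbb{I}(i = w) = 1$. Composing, $\vec{X}^{t+1}_i = F\big(\multiset{(d_G(i, w), \vec{X}^t_w) : w \in V(G) \cup \{\texttt{[cls]}\}}\big)$ for some function $F$ determined by the GDT's parameters. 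By the inductive hypothesis, there is a bijection on images between $\chi^t(w)$ and $\vec{X}^t_w$, so the multiset $M_i \coloneqq \multiset{(d_G(i, w), \chi^t(w)) : w \in V(G) \cup \{\texttt{[cls]}\}}$ is in one-to-one correspondence with $\multiset{(d_G(i, w), \vec{X}^t_w) : w \in V(G) \cup \{\texttt{[cls]}\}}$, and $\vec{X}^{t+1}_i$ is equivalently a function $\tilde{F}$ of $M_i$. I then fix $\textsf{hash}$ at step $t + 1$ so that $\textsf{hash}(M_i)$ is determined exactly by $\tilde{F}(M_i)$, which immediately yields $\chi^{t+1}(i) = \chi^{t+1}(j) \Leftrightarrow \vec{X}^{t+1}_i = \vec{X}^{t+1}_j$ and closes the induction.

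The main obstacle is that the GD-WL definition asks $\textsf{hash}$ to be injective on distinct multisets, whereas $\tilde{F}$ may collapse distinct multisets onto the same embedding. The forward direction survives an injective hash: $\chi^{t+1}(i) = \chi^{t+1}(j)$ forces $M_i = M_j$, hence equal attention outputs and equal $\vec{X}^{t+1}$. The reverse direction is the delicate part and requires reading the hash more liberally as any function that respects the GDT's equivalence classes, or equivalently composing the canonical injective hash with a quotient that identifies multisets sharing an image under $\tilde{F}$. With this reading fixed across iterations, both directions hold simultaneously and the proposition follows.
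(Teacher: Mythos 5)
Your proposal is correct and takes essentially the same route as the paper: induct on $t$, use \Cref{lemma:attention_function_over_multiset} to rewrite the biased-attention output at position $i$ as a function of the multiset $\multiset{(d_G(i,w), \vec{X}^t_w)}$ via the diagonal marker $\mathbb{I}(i=j)$, invoke the inductive bijection between GD-WL colors and token embeddings to translate that multiset into $M_i = \multiset{(d_G(i,w), \chi^t(w))}$, and choose $\textsf{hash}$ to factor through the layer update (the paper writes $\textsf{hash} \coloneqq h \circ \mathsf{MLP} \circ f \circ \textsf{onehot}$, which is the same construction). Your base case (one color per equivalence class of initial embeddings) matches the paper's reliance on $\chi^0$ being consistent with $\ell_V$.

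The one place where you go beyond the paper's write-up is your explicit flagging that the chosen $\textsf{hash}$ is generally \emph{not} injective on multisets, whereas the GD-WL definition (\Cref{eq:gd_wl_color_udpate}) stipulates an injective hash. That observation is accurate: with fixed continuous GDT weights (e.g.\ $\vec{W}_V = \vec{0}$), distinct multisets can collapse to the same attention output, so an injective hash would give the forward implication ($\chi^t(i)=\chi^t(j) \Rightarrow \vec{X}^t_i=\vec{X}^t_j$, which is the direction that actually establishes the GD-WL upper bound) but not the stated ``if and only if.'' The paper silently adopts exactly the ``liberal'' reading you propose — allowing $\textsf{hash}$ to be any function in the update rule — without remarking on the tension. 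So your proposal is not a different argument, but it is a more candid account of the same one.
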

\begin{proof}
We prove the statement by induction over $t$. For $t=0$, the statement holds by definition, as the initial token embeddings $\mathbf{X}^0$ without any absolute PE are simply the node embeddings $\ell_V$ and the
initial colors of the GD-WL are chosen to be consistent with the node embeddings $\ell_V$.
For $t > 0$, we assume by the induction hypothesis that for all pairs of nodes $i, j \in V(G)$, $\chi^{t-1}(i) = \chi^{t-1}(j)$ if and only if $\mathbf{X}^{t-1}_i = \mathbf{X}^{t-1}_j$. By definition, $\mathbf{X}^t$ is computed via \Cref{eq:general_transformer_encoder}, namely
\begin{equation*}
    \vec{X}^{t} \coloneqq \mathsf{MLP}\big(\textsf{Attention}(\vec{X}^{t-1}\vec{W}_Q, \vec{X}^{t-1}\vec{W}_K, \vec{X}^{t-1}\vec{W}_V, \vec{B})\big),
\end{equation*}
where the $\mathsf{MLP}$ is applied row-wise. Let
\begin{enumerate}
    \item $f$ denote the function in \Cref{lemma:attention_function_over_multiset} consistent with projections $\mathbf{W}_Q, \mathbf{W}_K, \mathbf{W}_V$ and attention bias $\mathbf{B}$,
    \item $\textsf{onehot} \colon [n] \rightarrow \{0, 1\}^n$ denote the function that maps numbers $1, \dots, n$ to their corresponding $n$-dimensional one-hot vector.
\end{enumerate}
Then, for all $i, j$, we have that
\begin{equation*}
   \mathsf{MLP} \circ f \circ \textsf{onehot}(\chi^{t-1}(i)) =  \mathsf{MLP} \circ f \circ \textsf{onehot}(\chi^{t-1}(j)) 
\end{equation*}
if and only if
\begin{equation*}
    \vec{X}^{t}_i = \vec{X}^{t}_j.
\end{equation*}
Finally, there are at most $n$ distinct rows in $\mathbf{X}^t$. Let $h$ be a function that injectively maps each unique row in $\mathbf{X}^t$ to a color in $[n]$. We choose $\texttt{hash} \coloneqq h \circ \mathsf{MLP} \circ f \circ \textsf{onehot}$, and have that, for
all pairs of nodes $i, j \in V(G)$, $\chi^t(i) = \chi^t(j)$ if and only if $\mathbf{X}^t_i = \mathbf{X}^t_j$. This completes the induction and hence concludes the proof.
\end{proof}

\subsection{Expressivity of the GDT}\label{app:gdt_1wl}
Together, \Cref{theorem:gdwl} and \Cref{prop:backward_direction} correspond to the first and second statements in \Cref{theorem:standard_gt_gdwl}, respectively. A consequence of \Cref{theorem:gdwl} is the fact that the GDT with NoPE is equivalent to the 1-WL. In particular, let $G := (V(G), E(G), \ell_V, \ell_E)$ be a graph with $n$ nodes. Let $\ell_E(v, w) := \mathbf{1}$, for all $v, w \in V(G)$, where $\mathbf{1}$ is the vector containing 1 in every element. Further, let $\ell_E(v, v) := \mathbf{2}$, for all $v \in V(G)$, where $\mathbf{2}$ is the vector containing 2 in every element.
Then, the GDT with NoPE is equivalent, according to \Cref{theorem:gdwl}, to the following update of the GD-WL:
\begin{equation*}
    \chi^{t+1}_G(v) \coloneqq \textsf{hash}\big(\multiset{(d_G(v, w), \chi^t_G(w)) : w \in V(G)} \big),
\end{equation*}
where $d_G(v, v) = 2$, $d_G(v, w) = 1$ if $(v, w) \in E(G)$, and $d_G(v, w) = 0$, else, for all $v, w \in V(G)$. This can be equivalently written as
\begin{equation*}
    \chi^{t+1}_G(v) \coloneqq \textsf{hash}\big((\chi^{t}_G(v), \multiset{\chi^t_G(w) : w \in N_G(v)} )\big),
\end{equation*}
giving the 1-WL update rule.

\section{Proofs of Section \ref{sec:expressive_power}}\label{appendix:proofssec4}
To guide our proofs of \Cref{sec:expressive_power}, we introduce CSL-graphs, obtaining the fact that RWSE cannot distinguish all of these graphs. These results are then expanded to provide an introduction to \Cref{lemma:1WLvsRWSE}. We first present the result and proof of \Cref{lemma:1WLvsRWSE}, using a simplified version that minimizes the number of random walk steps, leveraging results from \citet{toenshoff-CraWL-2023}. 
In addition, the proofs of \Cref{lemma:RWSEvsRRWP} and \Cref{SPEvsRRWPPropAppendix} are then given with additional details on the expressiveness hierarchy obtained from the definition of each PE. We provide additional incremental results for our selection of PEs, complementing those from \Cref{sec:expressive_power}.
\paragraph{Warm-up: CSL graphs}
We begin by introducing a class of simple and intuitive, yet not 
$1$-WL distinguishable graphs, so-called CSL graphs. These graphs consist of an $n$-node cycle with skip connections of length $k,l$ originating from each node. CSL graphs are a canonical example of a graph class requiring a distance measure, motivating additional PEs \citep{rampasek+2022+graphgps, Mue+2023}. Here, we show that they cannot be fully distinguished by RWSE and provide guidance on how to find pairs of CSL graphs indistinguishable by RWSE.
We first introduce CSL graphs $G_{(n,k)}$ and their properties to prove the following results. 
\begin{definition}\label{def:CSL-graphs}
Let $n, k$ be natural numbers and $k < n-1$. $G_{(n,k)}$ defines an undirected graph which is 4-regular. The set of nodes is given by $V(G_{(n,k)}) = \{0, \dots, n - 1\}$. A two-step process gives the edges. First, to construct a cycle in the CSL graph, every edge $\{i, i+1\} \in E(G_{(n,k)})$ for $j \in \{0, \dots, n-2\}$. Additionally $\{n-1, 0\} \in E(G_{(n,k)})$ holds. Furthermore, the skip links are introduced by defining the sequence $s_1 = 0$ and $s_{i+1} = (s_i + k) \text{ mod } n$ and deriving the edges with $\{s_i, s_{i+1}\} \in E(G_{(n,k)})$.  
\end{definition}
In addition, we introduce the notation used throughout the following proofs. Considering the skip links introduced in the CSL graphs we denote such a skip link by the mapping $s^k: V(G_{(n,k)}) \rightarrow V(G_{(n,k)})$ with $s(v_i) \coloneqq v_{(i+k) \text{ mod } n}$ for nodes $\{v_1, \dots, v_n\}$. A traversal to the next node $v_{i+1}$ or $v_{i-1}$ from node $v_i$ is denoted by $s^1$ and $s^{-1}$ respectively. We further provide specific random walks using a tuple of the visited nodes in a graph.  
\begin{proposition}\label{lemma:CSL-isomorphy}
Two CSL graphs $G_{(n_1,k_1)}$ and $H_{(n_2,k_2)}$ with $n_1 = n_2$ are non isomorphic if $k_1, k_2$ are co-prime natural numbers.
\end{proposition}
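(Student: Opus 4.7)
The plan is to exhibit a graph invariant that distinguishes $G_{(n,k_1)}$ from $H_{(n,k_2)}$ whenever $k_1, k_2$ satisfy the coprimality hypothesis. Both graphs are $4$-regular, vertex-transitive circulant Cayley graphs on $\mathbb{Z}/n\mathbb{Z}$ with connection set $\{\pm 1, \pm k_i\}$, so purely local invariants such as degree sequences or multisets of triangle counts cannot help, and one must use global spectral or closed-walk information.

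The cleanest route is spectral. Circulant matrices are simultaneously diagonalised by the discrete Fourier basis, so the eigenvalues of the adjacency matrix of $G_{(n,k)}$ are $\lambda_j(k) = 2\cos\bigl(\tfrac{2\pi j}{n}\bigr) + 2\cos\bigl(\tfrac{2\pi j k}{n}\bigr)$ for $j = 0, \ldots, n-1$. Since isomorphic graphs are cospectral, the first step is to compare the multisets $\{\lambda_j(k_1)\}_j$ and $\{\lambda_j(k_2)\}_j$. The second step is to extract an algebraic obstruction from the coprimality hypothesis: equality of these multisets would force an index relabelling $j \mapsto u j \pmod n$ for some unit $u \in (\mathbb{Z}/n\mathbb{Z})^*$ sending $\{\pm 1, \pm k_1\}$ to $\{\pm 1, \pm k_2\}$ as subsets of $\mathbb{Z}/n\mathbb{Z}$. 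Translating this set equality into modular equations for $u, k_1, k_2$ gives either $k_2 \equiv \pm k_1 \pmod n$ or $k_1 k_2 \equiv \pm 1 \pmod n$. The third step would then be a short divisibility argument showing that, under $\gcd(k_1, k_2) = 1$ and mild non-degeneracy assumptions (e.g.\ $1 < k_i < n-1$), neither of these congruences can hold.

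I expect the main obstacle to lie precisely in the second class of relations, the \emph{\'Ad\'am-type} congruence $k_1 k_2 \equiv \pm 1 \pmod n$, which is easy to overlook: coprimality of $k_1$ and $k_2$ as integers does not by itself rule out that they are modular inverses mod $n$ (e.g.\ $n = 7$, $k_1 = 2$, $k_2 = 4$). A clean way around this is either to strengthen the statement slightly (e.g.\ requiring that $k_1, k_2$ also be coprime to $n$ and lie in $\{2, \ldots, \lfloor n/2 \rfloor\}$) or to dispatch the inverse case by a direct closed-walk count: the number of closed walks of length $4$ through a fixed vertex that use exactly two skip-edges depends on $k$ in a way not invariant under $k \mapsto k^{-1}$, giving a second invariant that rules out this remaining isomorphism class. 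With both obstructions eliminated, the conclusion follows by contradiction.
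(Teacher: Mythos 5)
The paper itself provides no proof of this proposition; it is stated and then invoked (e.g.\ in the proof of the RRWP--CSL result), so there is nothing to compare against. That said, your proposal surfaces the right structure (CSL graphs are circulant Cayley graphs on $\mathbb{Z}/n\mathbb{Z}$ with connection set $\{\pm 1,\pm k\}$, and the relevant obstruction is the \'Ad\'am multiplier condition $k_1 k_2 \equiv \pm 1 \pmod n$), but you misdiagnose the obstruction as a gap to be patched. It is in fact a counterexample to the proposition as written, and neither of your proposed fixes removes it. Take $n = 11$, $k_1 = 2$, $k_2 = 5$: both lie in $\{2,\dots,\lfloor n/2\rfloor\}$, both are coprime to $n$, and $\gcd(k_1,k_2)=1$, yet $2\cdot 5 = 10 \equiv -1 \pmod{11}$, so the multiplier $x \mapsto 5x \pmod{11}$ maps the connection set $\{\pm 1,\pm 2\}$ to $\{\pm 5,\pm 10\} = \{\pm 1,\pm 5\}$ and is an isomorphism $G_{(11,2)} \cong G_{(11,5)}$. (The example $n=41$, $k_1=3$, $k_2=14$ is another, with $3\cdot 14 \equiv 1 \pmod{41}$.) Consequently the ``closed-walk count'' in your step three cannot dispatch this case either: these graphs \emph{are} isomorphic, so every graph invariant agrees on them. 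The coprimality hypothesis in the proposition is simply the wrong one; the statement would need to exclude the multiplicative-inverse relation (e.g.\ assume $k_1 k_2 \not\equiv \pm 1 \pmod n$ in addition to $k_1 \not\equiv \pm k_2 \pmod n$) to be true.

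A secondary, smaller issue in your outline: the passage from ``equal eigenvalue multisets'' to ``the index relabelling is multiplicative, $j \mapsto uj$'' is not automatic. Cospectrality of circulants does not in general produce a multiplier, and the converse direction (isomorphism of circulants is always realised by a multiplier) is precisely \'Ad\'am's conjecture, known to hold for $n$ prime (Turner) and several other families but false in general. For a rigorous proof you would need to either restrict $n$ so that a multiplier theorem applies, or sidestep spectra entirely and argue directly on the Cayley-graph level. Given the counterexample above, however, the primary problem is that the proposition cannot be proved as stated, regardless of the technique.
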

Given the definition of CSL graphs, it is possible to derive isomorphism results for them. Furthermore, we note that CSL graphs are $1$-WL indistinguishable but can be distinguished by various WL variants, such as GD-WL \citep{Zha+GDWL+2023}.

\begin{proposition}\label{lemma:RWSE-CSLcounterex}
    There exists at least one pair of CSL graphs that RWSE cannot distinguish for any choice of random walk length.
\end{proposition}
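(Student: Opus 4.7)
The plan is to leverage the strong symmetry of CSL graphs to reduce RWSE-distinguishability to a spectral question, and then appeal to the existence of non-isomorphic cospectral CSL graphs.

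First, I would note that each $G_{(n,k)}$ admits the cyclic-shift automorphism $v_i \mapsto v_{(i+1) \bmod n}$, so it is vertex-transitive; consequently the diagonal entries $(\vec{R}^t)_{ii}$ of the random-walk transition matrix are constant in $i$, and RWSE assigns the same return-probability vector to every node. Since CSL graphs are $4$-regular for non-degenerate $k$, we have $\vec{R} = \tfrac{1}{4}\vec{A}$, and vertex-transitivity yields
\begin{equation*}
(\vec{R}^t)_{ii} = \frac{1}{4^t n}\operatorname{tr}(\vec{A}^t) = \frac{1}{4^t n}\sum_{j=0}^{n-1}\lambda_j^t,
\end{equation*}
where $\lambda_j = 2\cos(2\pi j/n) + 2\cos(2\pi jk/n)$ are the well-known circulant eigenvalues of $G_{(n,k)}$.

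By Newton's identities, the return-probability sequences of $G_{(n,k_1)}$ and $G_{(n,k_2)}$ agree for every random-walk length if and only if the two adjacency spectra coincide as multisets, i.e., the graphs are cospectral. Combined with \Cref{lemma:CSL-isomorphy}, the proposition therefore reduces to exhibiting a triple $(n, k_1, k_2)$ with $k_1, k_2$ coprime (which by the lemma ensures $G_{(n,k_1)} \not\cong G_{(n,k_2)}$) such that
\begin{equation*}
\multiset{2\cos(2\pi j/n) + 2\cos(2\pi j k_1/n) : 0 \le j < n} = \multiset{2\cos(2\pi j/n) + 2\cos(2\pi j k_2/n) : 0 \le j < n}.
\end{equation*}

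The main obstacle is this existence step. The multiplier action $k \mapsto \pm k^{\pm 1} \pmod{n}$ already produces isomorphic CSLs and hence only trivial spectral coincidences, so a genuine counterexample requires identifying an $n$ whose multiplicative structure admits an additional spectral equivalence beyond these multiplier orbits. Once such a pair is verified by a direct computation with the closed-form eigenvalue expression above, the vertex-transitivity reduction immediately gives identical RWSE embeddings for every random-walk length, establishing the proposition.
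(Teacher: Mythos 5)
Your reduction is correct and, in fact, cleaner than what the paper offers: using vertex transitivity to write $(\vec{R}^t)_{ii} = \tfrac{1}{4^t n}\operatorname{tr}(\vec{A}^t) = \tfrac{1}{4^t n}\sum_j \lambda_j^t$ with the circulant eigenvalues, and then invoking Newton's identities, correctly reduces RWSE-indistinguishability of CSL pairs to cospectrality. The paper takes the same route in spirit (counting returning walks via $\sum_i \lambda_i^r$ and using regularity/vertex-transitivity), so the framework matches.

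However, your proof is incomplete in exactly the place you flag: you never produce a cospectral yet non-isomorphic pair $G_{(n,k_1)}, G_{(n,k_2)}$. Without a concrete $(n,k_1,k_2)$ that you verify to be (a) non-isomorphic and (b) satisfying the multiset equality of eigenvalues, the existence claim in the proposition is unproven. Your observation about the multiplier action $k \mapsto \pm k^{\pm 1} \pmod n$ is exactly why this step is delicate — and it is also where the paper's own argument is fragile. The paper asserts the pair $G_{(11,3)}$ and $G_{(11,4)}$; but since $3\cdot 4 \equiv 1 \pmod{11}$, we have $4 \equiv 3^{-1} \pmod{11}$, so these two graphs are in the same multiplier orbit and are therefore isomorphic circulants. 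The paper also never actually verifies cospectrality; it just invokes the vertex-transitivity argument. In short, the existence step is a genuine obstacle that neither you nor the paper resolve rigorously, but you are transparent about the gap and you correctly identify the arithmetic constraint (avoiding $k_2 \equiv \pm k_1^{\pm 1}$) that any valid example must satisfy — a constraint that would have flagged the paper's chosen pair as inadmissible. To close the gap, you would need to either exhibit and verify an explicit non-multiplier-equivalent cospectral pair of 4-regular circulants, or cite a known result guaranteeing the existence of cospectral non-isomorphic circulant graphs with connection set of the form $\{\pm 1, \pm k\}$.
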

Nonetheless, we note that the expressive power of RWSE is sufficient to distinguish many $1$-WL indistinguishable graphs; for example, most CSL graphs can already be characterized by RWSE. Furthermore, a minimum step number is given depending on the skip length of each CSL graph. 
\begin{proposition}\label{lemma:RWPE-CSLgraphs}
Let $n, k \in \mathbb{N}^+$ with $n > k(k+1)+1$. RWSE can distinguish any pair of CSL graphs with $n$ nodes and skip length $k, k+1$, with a random walk length of $k + 1$.
\end{proposition}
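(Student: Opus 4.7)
The plan is to compute the RWSE return probabilities directly and show that the step-$(k+1)$ coordinate distinguishes $G_{(n,k)}$ from $G_{(n,k+1)}$. Since both graphs are $4$-regular, $\vec{R}^{t}_{ii} = 4^{-t}[\vec{A}^{t}]_{ii}$, and $[\vec{A}^{t}]_{ii}$ counts closed walks of length $t$ starting at vertex $i$. Both graphs are vertex-transitive under the cyclic rotation group $\mathbb{Z}/n\mathbb{Z}$, so it suffices to compare the number $N_{t}^{(k)}$ of closed walks of length $t$ through a single fixed vertex (say $0$). I would identify such a walk with an $\text{ordered}$ sequence of signed step sizes, where each step lies in $\{\pm 1,\pm k\}$ for $G_{(n,k)}$ and in $\{\pm 1,\pm(k+1)\}$ for $G_{(n,k+1)}$; the walk is closed iff the integer sum of the steps is congruent to $0\pmod n$.

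Next I would use the hypothesis $n > k(k+1)+1$ to rule out wraparound in $G_{(n,k)}$: the maximum absolute value of a length-$(k+1)$ step sum is $(k+1)k < n$, so $\sum\epsilon_i \equiv 0\pmod n$ forces $\sum\epsilon_i = 0$ in $\mathbb{Z}$. Then I would stratify closed walks by the number $b$ of skip steps used. Writing the balance condition as (skip sum) $= -$(cycle sum) with $|\text{cycle sum}|\le k+1-b$, a short case analysis shows: $(i)$ for $b=0$, or for even $b\ge 2$ with matched opposite skip signs, the skip-size does not enter the balance equation and the walk count agrees exactly for both graphs; $(ii)$ for odd $b\ge 3$, the required cycle sum exceeds the available budget $k+1-b$ in both graphs, contributing zero; $(iii)$ for $b=1$, the walk in $G_{(n,k)}$ is forced to place a single $\pm k$-skip together with $k$ cycle steps of the opposite unit sign, contributing exactly $2(k+1)$ walks, whereas in $G_{(n,k+1)}$ one would need $k$ cycle steps of $\pm 1$ summing to $\mp(k+1)$, which is impossible.

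The last step is to control possible wraparound in $G_{(n,k+1)}$, where the maximum absolute sum is $(k+1)^{2}$, which can exceed $n$. I would argue that among achievable sums of $k+1$ steps in $\{\pm 1,\pm(k+1)\}$, the only value in the interval $(k(k+1)+1,(k+1)^{2}]$ is $(k+1)^{2}$ itself: decreasing a maximal $+$ skip to $+1$, $-1$, or $-(k+1)$ costs $k$, $k+2$, or $2(k+1)$, all of which overshoot the window $(k(k+1)+1,(k+1)^{2}]$. Hence wraparound contributes exactly $2$ walks when $n=(k+1)^{2}$ (the all-$+(k+1)$ and all-$-(k+1)$ sequences) and $0$ otherwise. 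Combining everything,
\[
N_{k+1}^{(k)} - N_{k+1}^{(k+1)} = 2(k+1) - \epsilon,\qquad \epsilon\in\{0,2\},
\]
which is strictly positive for $k\ge 2$ (the regime where CSL graphs are $4$-regular). Thus $[\vec{R}^{k+1}]_{ii}$ differs between the two graphs and RWSE with random walk length $k+1$ distinguishes them.

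The main obstacle is the wraparound bookkeeping in the third paragraph: the hypothesis $n > k(k+1)+1$ is not strong enough to preclude sum $\pm n$ in $G_{(n,k+1)}$ for all $n$, so one has to show that the only possible wraparound is the trivial all-skip walk, and then verify that the resulting residual difference $2k$ is still positive. Everything else reduces to a clean combinatorial identity on balanced sign patterns that is symmetric in the skip length.
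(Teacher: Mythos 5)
Your proof is correct and follows the same high-level strategy as the paper's (count closed walks of length $k+1$ from a fixed vertex, stratify by the number $b$ of skip steps, and use $n > k(k+1)+1$ to control wraparound), but you are considerably more careful in two ways. First, you compute the $b=1$ contribution in $G_{(n,k)}$ exactly as $2(k+1)$ (one $\pm k$ skip in any of $k+1$ positions, with the $k$ unit steps forced), whereas the paper just exhibits two example walks and leaves the counting implicit. Second — and more importantly — you catch a subtlety the paper glosses over: the hypothesis $n>k(k+1)+1$ does rule out wraparound in $G_{(n,k)}$ (max step-sum magnitude $k(k+1)<n$), but it does \emph{not} rule it out in $G_{(n,k+1)}$, where the all-$\pm(k+1)$ walk has step-sum magnitude $(k+1)^2$, which can equal $n$. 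The paper flatly asserts "no instances exist where a circumference of the graph circle occurs," which is false at $n=(k+1)^2$; you correctly isolate this as the unique wraparound possibility (second-largest achievable sum is $k(k+1)+1$), show it contributes at most $2$ extra closed walks to $G_{(n,k+1)}$, and verify $2(k+1)-2>0$. Your parity stratification by $b$ is also cleaner than the paper's case split on the parity of $k$. In short, your argument is the rigorous version of what the paper gestures at, and it patches a genuine (if small) gap.
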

With \Cref{def:CSL-graphs} we derive the proofs for each lemma individually. 
\paragraph{Proof of \Cref{lemma:RWPE-CSLgraphs}}
For \Cref{lemma:RWPE-CSLgraphs} we consider a subclass of CSL graphs. The minimum node number is specifically chosen to prevent a random walk of $k$ steps from completing a cycle in the graph, even when using $s^{k+1}$ for each step. We follow a two-step process for the proof: First, we gather paths existing in one graph but not the other. Then, in a second step, we show that all different paths of length at most $k$ exist in both graphs. Further, we highlight that for random walks of length less than $k$, the paths are equal in both graphs. 
\begin{proof}
Let $G_{(n,k)}$ and $H_{(n,k+1)}$ be two CSL graphs with skip link mappings $s^k$ and $s^{k+1}$. Further let $n > k(k+1)+1$. To distinguish them, we denote the same node in both graphs with $v_0$ and $w_0$. Then, for $k$ random walk steps, we first examine whether there exist paths in $G_{(n,k)}$ which are not present in $H_{(n,k+1)}$. These include $(v_0,\dots,v_{k-1},v_0)$ and $(v_0,\dots,v_{n-k+1},v_0)$ as valid paths in $G_{(n,k)}$, which provide two $k$ step walks with one skip link each. However, we note that such paths are not possible in $H_{(n,k+1)}$ as the skip link $s^{k+1}$ has a length $k+1$ and therefore no corresponding walk exists. 
    In addition, we show that there exists no other walk in $H_{(n,k+1)}$, which is not present in $G_{(n,k)}$. 
    For this, we must consider the cases of $k$ even or odd. 
    
    Case 1: $k$ is even: In this case, we must consider all combinations of skip links and $s^1, s^{-1}$ functions. Since we assume an even number of steps, we know that all return walks with an even number of skip links or no skip links exist in both graphs. However, walks with an uneven number of skip links cannot return to $v_0$ or $w_0$, as with the skip link size of $k$ or $k+1,$ no return walks with at most $k$ steps exist. 
    As can be seen, the skip length does not influence the existence of any of the proposed walks; therefore, they exist both in $G_{(n,k)}$ and $H_{(n,k+1)}$. 
    
    Case 2: $k$ is uneven: Given an uneven $k$, we conclude that no return walks with a length of $k$ exist since steps can return neither a skip link nor an uneven number of any combination of skip links and steps can return to the origin node. 

    For step numbers lower than $k$ the same cases apply in permuted order, depending on the step number. Also, due to the minimal $n$ chosen, no instances exist where a circumference of the graph circle occurs in any combination of steps. 
    As for both cases, there exist no paths which are not present in $G_{(n,k)}$, it follows that there exist additional walks for $G_{(n,k)}$ which do not occur in $H_{(n,k+1)}$. Due to the structure of CSL graphs, the number of random walks, including both returning and non-returning random walks, is the same across all nodes in both graphs, resulting in the statement of \Cref{lemma:RWPE-CSLgraphs}. 

    Since the RWPE encoding is different if a single step returns a different return walk probability, RWPE can distinguish $G_{(n,k)}, H_{(n,k+1)}$ with the given random walk steps.
\end{proof}
\begin{figure}
        \begin{center}
        \includegraphics[]{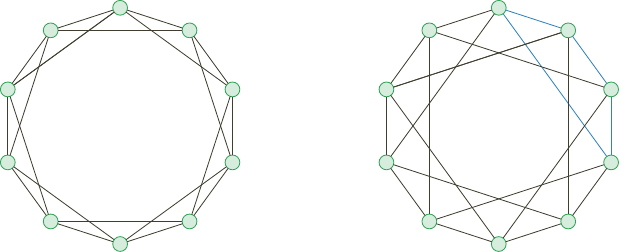}
        \end{center}
	\caption{A pair of CSL graphs $G_{10,2}, H_{10,3}$. We note that the path marked in blue does not exist in graph $G$ or has no replacement path.
    \label{fig:CSLgraphs}}
\end{figure}
For the following pair of CSL graphs, we consider the computation of the number of returning walks. This enables us to compute the random walk probabilities for a single node directly. Due to the design of CSL graphs, each node in a graph has the same random walk return probability, thereby allowing us to derive the proof of \Cref{lemma:RWSE-CSLcounterex}.

\paragraph{Proof of \Cref{lemma:RWSE-CSLcounterex} }
We consider two CSL graphs $G_{(11,3)}$ and $H_{(11,4)}$. With the computation of the number of returning walks of length $r$, $W^r$ given by $W^r = \sum_{i=1}^n \lambda_i^r$, where $\lambda_i$ denotes the eigenvalues of the adjacency matrix, we can compute the number of returning walks for each node in both graphs, since due to the graph structure the number of returning walks is equal for each node. Furthermore, we know that the number of total walks is equal in both graphs, given the graph structure as seen in \Cref{fig:CSLgraphs}. 
From this, we compute the fraction of returning walks of varying length $r$ for each node, resulting in the computation of the RWSE embedding. 
Since both the number of returning walks and the total number of walks are equal for each node in both graphs, as seen in the proof of \Cref{lemma:RWPE-CSLgraphs}, we receive the same RWSE embedding for $G_{(11,3)}$ and $H_{(11,4)}$. 

In addition to the indistinguishability results obtained for RWSE, we propose a result that initially distinguishes RWSE from RRWP. We then refine this result in \Cref{lemma:CSL-RRWP}, showcasing RRWP to be strictly more expressive than RWSE. 

\begin{proposition}\label{lemma:CSL-RRWP}
    RRWP can distinguish all pairs of non-isomorphic CSL graphs. 
\end{proposition}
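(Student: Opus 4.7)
The plan is to observe that the RRWP tensor already contains the full random walk matrix $\vec{R} = \vec{D}^{-1}\vec{A}$ as its $t=1$ slice (see \Cref{RRWPdef}), and that for a $4$-regular graph this slice is nothing but $\tfrac{1}{4}\vec{A}$. Since every CSL graph is $4$-regular by \Cref{def:CSL-graphs}, the RRWP pair-coloring is at least as fine as the adjacency relation on pairs of nodes. So the task reduces to showing that adjacency, read off the same way from both graphs, already separates non-isomorphic CSL graphs, which is immediate.

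First, I would dispatch the trivial case where the two CSL graphs have different vertex counts: their RRWP tensors live in ambient spaces of different shape. So assume $G = G_{(n,k_1)}$ and $H = G_{(n,k_2)}$ are non-isomorphic CSL graphs on the same number $n$ of vertices.

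Next, I would argue by contrapositive. Suppose the RRWP pair-colorings of $G$ and $H$ coincide up to a relabelling, i.e., there exists $\pi \in S_n$ with $\vec{P}^G_{i,j,t} = \vec{P}^H_{\pi(i),\pi(j),t}$ for all $i,j$ and all $t \in \{0,\dots,K-1\}$. Specializing to $t=1$ and multiplying by $4$ using $4$-regularity gives $(\vec{A}_G)_{i,j} = (\vec{A}_H)_{\pi(i),\pi(j)}$, so $\pi$ is a graph isomorphism $G \to H$, contradicting the assumption. Hence the RRWP color refinement (in the sense of \Cref{section:appendixA}) produces distinct stable colorings on $G$ and $H$, and any reasonable pooling over the pair-colors separates the two graphs.

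The main thing to be careful about is articulating what "distinguish" means for a relative PE: here it is that the pair-level coloring induced by RRWP already refines the adjacency bit, so the induced graph-level invariant (e.g., any hash of the multiset of colored ordered pairs) separates non-isomorphic regular graphs automatically. This is also precisely where RRWP improves on RWSE, which retains only the diagonal entries of the $\vec{R}^t$ and therefore fails on the pair constructed in \Cref{lemma:RWSE-CSLcounterex}; the only technical subtlety (an easy one) is verifying that what we called the $t=1$ slice is available to the RRWP refinement with $k \geq 2$, which is the standard regime.
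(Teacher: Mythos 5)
Your argument and the paper's rest on the same key fact: for $4$-regular CSL graphs the $t=1$ slice of the RRWP tensor equals $\tfrac14\vec{A}$, so RRWP carries the full adjacency matrix and the tensors of non-isomorphic CSL graphs cannot be related by any vertex relabelling; your contrapositive via the permutation $\pi$ simply makes explicit what the paper states informally ("different RRWP tensor elements result in different RRWP embeddings"). One small correction, though: the parenthetical claiming that \emph{any} hash of the multiset of colored ordered pairs already separates non-isomorphic regular graphs is too strong---restricted to $t\le 1$, the multiset $\multiset{\vec{P}_{i,j,:}\colon i,j\in[n]}$ is identical for \emph{every} $4$-regular graph on $n$ vertices (you only see $n$ diagonal entries, $4n$ entries equal to $\tfrac14$, and zeros), so a plain multiset pooling need not distinguish two CSL graphs on its own. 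What your permutation argument actually establishes is the orbit-level statement that the RRWP tensors of $G$ and $H$ lie in distinct $S_n$-orbits, and passing from that to "RRWP distinguishes $G$ and $H$" requires a readout that respects this orbit structure (for example the GD-WL-style refinement the paper uses), not an arbitrary multiset hash.
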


\paragraph{Proof of \Cref{lemma:CSL-RRWP}}
In contrast to RWSE, the RRWP embedding uses the full random walk matrix and information about random walks between any two nodes. Because of this, RRWP can capture information that is not visible to RWSE due to the restriction to the diagonal of the random walk matrix. 

\begin{proof}
    We consider two arbitrary CSL graphs $G_{(n,i)}$ and $H_{(n,j)}$ with $i,j$ co-prime and $i \neq j$. Then we can compute the RRWP encoding for each node by computing the random walk matrix. We saw from previous CSL graphs that the random walk matrix diagonal can be equal for both graphs, depending on the choice of $i,j$, and the number of random walk steps. 
    However, for RRWP, we also consider non-diagonal elements. Due to $i \neq j,$ the RRWP tensor elements differ for each random walk of length $1$, since different nodes are connected. 
    Given any injective MLP layer, different RRWP tensor elements result in different RRWP embeddings for the nodes of both graphs, allowing the CSL graphs to be distinguished by RRWP. 
\end{proof}

Using the above results for CSL graphs, we derive a first bound for RWSE that depends on the graph structure of the CSL graphs. However, CSL graphs are not distinguishable by 1-WL, leaving open the comparison between RWSE and 1-WL. In the following, we want to further improve our understanding of RWSE and its expressiveness. We first provide an introduction and intermediate result given by \Cref{lemma:RWSE-CraWL} to limit the expressiveness to random walks of sufficient length. Afterwards, we provide the proof of \Cref{lemma:1WLvsRWSE}, concluding our examination of RWSE.  

\paragraph{Introduction to \Cref{lemma:1WLvsRWSE}} 
To prove \Cref{lemma:1WLvsRWSE}, we first provide an intermediate result from leveraging the graphs introduced by \citet{toenshoff-CraWL-2023} for their work. This allows us to derive graphs that need a certain number of random walk steps, depending on their graph structure. We then expand on this concept in our proof of Theorem 2 by giving an example of graphs not distinguishable by RWSE. 
\begin{lemma}\label{lemma:RWSE-CraWL}
    There exists at least one pair of non-isomorphic graphs with order $3n-1$ that can be distinguished by RWSE only with a random walk length of at least $\mathcal{O}(n)$.
\end{lemma}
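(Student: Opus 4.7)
The plan is to invoke (and lightly adapt) the explicit graph construction due to \citet{toenshoff-CraWL-2023}, which already exhibits pairs of non-isomorphic graphs that look locally identical up to radius $\Theta(n)$. First, I would fix such a pair $G_n, H_n$ on $3n-1$ nodes. A natural candidate is to take, in both $G_n$ and $H_n$, a small constant-size gadget (which pins down the vertex count to exactly $3n-1$ and ensures the graphs are 1-WL distinguishable) attached to a large cyclic region whose global topology differs between the two graphs, for example a single cycle of length $\Theta(n)$ in $G_n$ versus two cycles whose lengths sum to the same value in $H_n$, glued back to the gadget at corresponding attachment points.

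The next step is to show that no short random walk distinguishes the two graphs at the RWSE level. Let $\vec{R}_G \coloneqq \vec{D}_G^{-1}\vec{A}_G$ and analogously $\vec{R}_H$. I would argue by a local-isomorphism (coupling) argument that for every node $v \in V(G_n)$ there is a counterpart $v' \in V(H_n)$ such that the $r$-step return probabilities satisfy
\begin{equation*}
[\vec{R}_G^{\,r}]_{vv} \;=\; [\vec{R}_H^{\,r}]_{v'v'}
\quad\text{for all } r < cn,
\end{equation*}
for some constant $c>0$. This is because every closed walk of length $<cn$ starting at $v$ is confined to the $\lfloor cn/2\rfloor$-neighborhood of $v$, which is isomorphic in $G_n$ and $H_n$ by construction. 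Consequently, for any random-walk length $k < cn$, the multiset of RWSE node embeddings $\mathbf{P}^{\text{RW}}_k(G_n)$ equals that of $\mathbf{P}^{\text{RW}}_k(H_n)$, so RWSE fails to distinguish them.

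For the complementary direction, I would identify the shortest closed walk whose return probability differs between corresponding nodes. Such a walk necessarily traverses the distinguishing cyclic region at least once, so its length is $\Theta(n)$. Computing (or comparing) the return probabilities for $r = \Theta(n)$ explicitly shows that $[\vec{R}_G^{\,r}]_{vv} \neq [\vec{R}_H^{\,r}]_{v'v'}$ at some pair $(v,v')$, hence the RWSE encodings with $k = \Omega(n)$ steps differ. Combined with the previous paragraph, this yields the exact $\mathcal{O}(n)$ threshold claimed in the lemma.

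The main obstacle is the first direction: rigorously establishing that all return probabilities agree up to length $cn$. A clean way to handle this is via the local-isomorphism/coupling argument sketched above, which reduces the identity of return probabilities to an isomorphism of constant-radius neighborhoods and thereby avoids any explicit combinatorial enumeration of closed walks. Once that structural observation is in place, the rest of the proof is a straightforward counting argument around the minimum length of a walk that crosses the distinguishing cyclic region.
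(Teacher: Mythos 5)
Your proposal is correct and follows essentially the same route as the paper: both invoke the \citet{toenshoff-CraWL-2023} construction, argue that short return walks see only isomorphic local structure and hence produce identical RWSE embeddings, and then exhibit a length-$\Theta(n)$ closed walk through the distinguishing cyclic region whose return probability differs. The one thing you do somewhat more carefully is the first direction: you make explicit the local-isomorphism/coupling argument (closed walks of length $<cn$ live inside a radius-$\lfloor cn/2\rfloor$ ball, which is isomorphic in the two graphs, with matching degrees at all transition vertices), whereas the paper simply asserts that "the returning walks are the same for both nodes for any walk of length $r \le n-1$." This is not a different proof so much as a cleaner justification of the same key step, and it would benefit the paper's exposition. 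Your construction sketch (gadget plus a single long cycle versus two shorter cycles summing to the same length) is also a faithful rendering of what the Tönshoff graphs do, so nothing is missing.
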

Since RWSE requires returning random walks to construct the respective embedding, graphs exist that are only distinguishable by random walks of specific length. However, we want to provide a class of graphs requiring at least $\mathcal{O}(n)$ steps. Furthermore, we want these graphs to be 1-WL distinguishable, providing a first step to \Cref{lemma:1WLvsRWSE}, where we show the indistinguishability of RWSE and 1-WL. Following \citet{toenshoff-CraWL-2023} and their evaluation of another random walk-based GNN architecture (CraWL), we adapt their counterexample to our evaluation of RWSE. 
\begin{figure}[t]
        \begin{center}
        \includegraphics[]{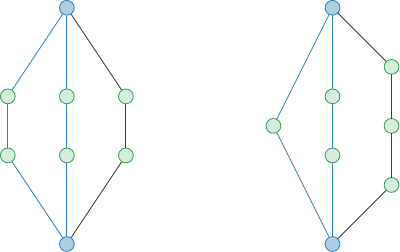}
        \end{center}
	\caption{A pair of graphs from the construction method provided by \citet{toenshoff-CraWL-2023}. Note that these graphs can only be distinguished by a returning random walk of length $\mathcal{O}(n)$ while being 1-WL distinguishable. \label{fig:CraWLgraph}}
\end{figure}
\begin{proof}
We provide a proof by giving a constructed graph only distinguishable by random walks of length greater than $n-1$, fulfilling the necessary condition of $\mathcal{O}(n)$ for the walk length. 
Following \citet{toenshoff-CraWL-2023} with their counterexample for the CraWL algorithm, we adapt the corresponding graphs for the RWSE embedding. Since we only consider the return probabilities of random walks in the RWSE embedding, we disregard information from other random walks.
Given the graphs in \Cref{fig:CraWLgraph} with $n$ nodes, we consider the blue marked nodes. Since the returning walks are the same for both nodes for any walk of length $r \leq n-1$, the graphs cannot be distinguished for any RWSE embedding with a walk length of $r$. Due to the graph construction, the corresponding walks for all other nodes are the same in both graphs. However, due to the cycle colored in blue, the return walk probabilities differ in both graphs for walks with a length $r' \geq n$. This results in a pair of graphs only distinguishable by random walks of length at least $n$. 
Results from \citet{toenshoff-CraWL-2023} allow for constructing further examples with $n$ nodes and order $3n-1$. 
By construction, these graphs are distinguishable by 1-WL \citep{toenshoff-CraWL-2023}, resulting in the stated lemma. 
\end{proof}
With the results from \Cref{lemma:RWSE-CraWL}, we can now expand the set of graphs not distinguishable by RWSE, while 1-WL is distinguishable. Combining both results, we can determine a set of graphs limiting the expressiveness of RWSE and further investigate the expressive power of random walks.

We provide an example pair of trees not distinguishable by the RWSE encoding. In contrast, all trees are known to be distinguishable by the $1$-WL algorithm \citep{Cai+1992+WL}. Combining the findings from \Cref{lemma:1WLvsRWSE} with the CSL graph results in \Cref{lemma:RWPE-CSLgraphs}, it follows that RWSE embeddings are incomparable to the $1$-WL color refinement algorithm. 
\paragraph{Proof of \Cref{lemma:1WLvsRWSE}}
For this proof, we first consider a pair of trees shown by \citet{cvetkovic1988trees}. Originally introduced as examples of trees with differing eigenvalues and graph angles, thereby distinguishable by the EA-invariant, these graphs are not distinguishable by the RWSE embedding for an arbitrary number of random walk steps. 
\begin{figure}[b!]
        \begin{center}
        \includegraphics[]{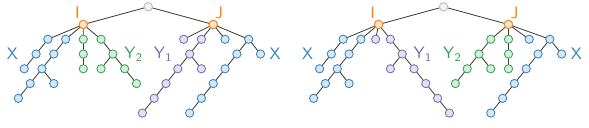}
        \end{center}
	\caption{A pair of trees given by \citet{cvetkovic1988trees}. The grey node denotes an arbitrary tree concatenated to the existing tree. Each part of the tree is colored according to its respective appearance. We further label both backbone nodes in orange, denoting the left backbone node with $I$ and the right backbone node with $J$.  \label{fig:RWSE-1WL}}
\end{figure}
The proof is split into multiple steps, containing parts of both trees that need to be considered. 
\begin{proof}
    We separate the graph as shown in figure \ref{fig:RWSE-1WL} into backbones $I,J$ and subtrees $X,Y_1,Y_2$ and the original tree $G$. $X$ stays the same throughout both graphs, whereas $Y_1,Y_2$ change their edges connecting them to the graph. 
    In the first step, we consider the probability of returning to one of the backbone nodes. Going from either of the backbone nodes to the original tree $G$ has a probability of $\frac{1}{5}$. Furthermore, going to $X$ from either backbone node has the probability $\frac{2}{5}$. Once in either part of $X$, the return probability to the backbone node is given by $p_X$, which is equal in both graphs.  Finally, the probability of going to $Y_1,Y_2$ is denoted by $p_{Y_1}, p_{Y_2}$ respectively, again equal for both graphs. 
    Further, the return probabilities $Y_1$ and $Y_2$ are equal, as can be easily seen from the combination of $Y_1$ and $Y_2$ with the respective backbone node. This allows for a complete evaluation of the backbone nodes, assigning them probabilities $p_I$ and $p_J$. 
    
    Secondly, we consider random walks on the nodes of the subtree $X$, without returning to either backbone node. Since $X$ is equal in both graphs, the return probabilities are also the same, denoted by $\mathbf{x}$. However, they differ from the return probabilities in $Y_1$ and $Y_2$, given by $\mathbf{y_1}$ and $\mathbf{y_2}$ respectively. As a result, RWSE can distinguish $X,Y_1,Y_2$ without connecting to the backbone nodes.

    Combining our previous knowledge, we now consider return probabilities for nodes in $Y_1$ and $Y_2$ without restricting ourselves to the subtrees. We know that the probability of walking towards the backbone node is given by the respective position of each node in $Y_1, Y_2$, equal in both trees.  Once the random walk arrives at either backbone node, the probability to return to said backbone is given by either $p_I$ or $p_J$ and a probability of $\frac{2}{5}$ to return to the originating subgraph. As $p_I$ and $p_J$ are equal in both graphs and  $p_{Y_1}, p_{Y_2}$ are equal, it follows that the return probability for nodes $y_1 \in Y_1$ is equal across both graphs, denoted by $p_{y_1T}$. The same holds for $Y_2$ with the return probabilities denoted by $p_{y_2T}$. 

    Finally, for both graphs, the nodes are assigned the RWSE probability vectors $p_I, p_J$ once, $p_X$ 17 times, and $p_{y_1T}, p_{y_2T}$ 8 times each. Furthermore, the two backbone nodes of both graphs cannot be distinguished using the RWSE embedding. Therefore, the graphs are equal under RWSE, however as shown by \citep{Cai+1992+WL} every pair of trees can be distinguished using the 1-WL test, resulting in the incomparability of RWSE and the 1-WL test. 
\end{proof}

Combining the results of \Cref{lemma:RWSE-CSLcounterex}, \Cref{lemma:RWSE-CraWL}, and \Cref{lemma:1WLvsRWSE}, we obtain fine-grained observations of graph structures not distinguishable by RWSE. While random walks are sufficiently robust to distinguish many common graph structures and graphs not distinguishable by 1-WL, RWSE still fails to distinguish specific trees and graphs originally proposed by \citet{toenshoff-CraWL-2023}. 

In the following, we provide proofs supplementing the theoretical expressiveness hierarchy introduced by \citet{Zhang-PEtheory-2024} and \citet{black+2024+comparinggts}. We first give an intermediate result relating RWSE and RRWP, thereby providing a lower bound for RRWP and an upper bound for RWSE. 
Further, we propose adapted results for LPE and SPE, comparing them to other PEs and to each other. 
The significant results are shown in \Cref{lemma:RWSEvsRRWP} and \Cref{Proposition:SPELPERWSE}. 
\paragraph{Proof of \Cref{lemma:RWSEvsRRWP}}
The proof of \Cref{lemma:RWSEvsRRWP} is given by simply evaluating the corresponding embeddings for both RWSE and RRWP. Since both embeddings use the same MLP encoder layer, we restrict ourselves to evaluating random walk matrices directly. We first state an extended version of \Cref{lemma:RWSEvsRRWP} and provide a proof.
\begin{lemma}[Proposition 3 in the main paper]
Let $G,H$ be two non-isomorphic graphs and $\vec{P}_k^{\text{RW}}(G), \vec{P}_k^{\text{RW}}(H)$ the generated RWSE encodings for both graphs with a random walk length $k$. Then for the generated RRWP encodings $\vec{P}_k^{\text{RR}}(G), \vec{P}_k^{\text{RR}}(H)$ it follows:
\begin{equation*}
\vec{P}_k^{\text{RR}}(G) = \vec{P}_k^{\text{RR}}(H) \Rightarrow \vec{P}_k^{\text{RW}}(G) = \vec{P}_k^{\text{RW}}(H).
\end{equation*}
In addition, at least one pair of graphs exists that is distinguishable by RRWP but not by RWSE. 
\end{lemma}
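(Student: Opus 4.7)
The plan is to establish the forward implication by observing that RWSE is, at the structural level, merely a pointwise function of the diagonal of the random walk tensor used by RRWP, and then to establish strictness by exhibiting a pair of CSL graphs already identified earlier in the paper.

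For the forward direction, I would first unfold the definitions. By \Cref{RRWPdef}, the RRWP tensor on a graph $G$ is $\vec{P}^{\text{RR}}_{i,j}(G) = [\vec{I}, \vec{R}, \vec{R}^2,\dots,\vec{R}^{k-1}]_{i,j}$, whereas the RWSE vector for node $i$ is $\vec{P}^{\text{RW}}_{i,i}(G) = [\vec{I}, \vec{R}, \vec{R}^2,\dots,\vec{R}^{k-1}]_{i,i}$, i.e., precisely the diagonal slice $\vec{P}^{\text{RR}}_{i,i}(G)$. Since graph equivalence under either encoding means there exists a bijection $\pi \colon V(G) \to V(H)$ that identifies the respective tensors node-by-node (or pairwise), if $\vec{P}_k^{\text{RR}}(G) = \vec{P}_k^{\text{RR}}(H)$ via some $\pi$, then in particular $\vec{P}^{\text{RR}}_{i,i}(G) = \vec{P}^{\text{RR}}_{\pi(i),\pi(i)}(H)$ for every $i$, hence $\vec{P}^{\text{RW}}_{i,i}(G) = \vec{P}^{\text{RW}}_{\pi(i),\pi(i)}(H)$ node-wise. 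Applying the same MLP encoder $\vec{F}$ used for RWSE on both sides preserves this equality, giving $\vec{P}_k^{\text{RW}}(G) = \vec{P}_k^{\text{RW}}(H)$.

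For the strictness direction, I would invoke the CSL constructions already developed in this appendix. Concretely, \Cref{lemma:RWSE-CSLcounterex} (concretely realized by $G_{(11,3)}$ and $H_{(11,4)}$) exhibits a pair of non-isomorphic CSL graphs for which the RWSE encoding agrees for every choice of random-walk length, while \Cref{lemma:CSL-RRWP} shows that RRWP distinguishes every pair of non-isomorphic CSL graphs. To match the ``same random walk length'' assumption in the proposition, I would note that the obstruction detected by RRWP already appears in $\vec{R}^1$, whose off-diagonal entries encode the differing skip-link adjacencies of $G_{(11,3)}$ and $H_{(11,4)}$; thus any $k \geq 2$ suffices, and RRWP separates the pair while RWSE does not.

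The main (and really only) subtle point is being explicit about what equality of encodings means, since both RWSE and RRWP are applied per-node or per-pair and must be compared up to a common relabeling of nodes. Once this is pinned down, the forward direction is essentially a projection argument — RWSE reads only the diagonal of the tensor RRWP reads in full — and the strictness follows immediately from the CSL results already in the appendix, so no further calculation is needed.
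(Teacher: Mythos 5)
Your proof is correct and takes essentially the same route as the paper's: both observe that RWSE is the diagonal slice of the tensor RRWP reads in full (so equality of RRWP encodings implies equality of the diagonals and hence of the RWSE encodings after applying the same encoder), and both obtain strictness from the CSL pair $G_{(11,3)}, H_{(11,4)}$ of \Cref{lemma:RWSE-CSLcounterex} combined with the fact (\Cref{lemma:CSL-RRWP}) that RRWP separates them through off-diagonal random-walk entries. Your explicit handling of encoding equality via a common node relabeling $\pi$, and the observation that the separation already appears at $\vec{R}^1$, are modest clarifications in rigor but do not change the underlying argument.
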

\begin{proof}
Given random walk matrices $\vec{R}_G\coloneqq \vec{D}_G^{-1}\vec{A}_G, \vec{R}_H\coloneqq \vec{D}_H^{-1}\vec{A}_H $, its power matrices up to the power of $k$ and the corresponding RRWP embeddings $\vec{P}_k^{\text{RR}}(G), \vec{P}_k^{\text{RR}}(H)$ for two non-isomorphic graphs we can directly deduce that for each tensor in the RRWP embeddings, corresponding to the RRWP embedding for a single node in each graph, the diagonal elements of the random walk matrix are the same for each power up to $k$. 
Therefore, it directly follows that $\vec{P}_k^{\text{RR}}(G)= \vec{P}_k^{\text{RR}}(H)$ results in $\vec{P}_k^{\text{RW}}(G)= \vec{P}_k^{\text{RW}}(H)$, with $\vec{P}_k^{\text{RW}}(G), \vec{P}_k^{\text{RW}}(H)$ denoting the RWSE encodings obtained from the same random walk matrices.
We provide a simple example of a pair of graphs that are not distinguishable by the RWSE embedding, whereas RRWP can distinguish between them. This example follows directly from \Cref{lemma:RWSE-CSLcounterex} since it is proven there that RWSE cannot distinguish this pair of graphs. However, it directly follows from the definition of the random walk matrix. It assumes that the MLP encoder preserves identity and that the two graphs can be distinguished by their differences in skip lengths, as defined by CSL graphs. 
\end{proof}
We now want to consider the expressive power of LPE with respect to RWSE. For this, we use results obtained by \citet{black+2024+comparinggts} and \citet{Zhang-PEtheory-2024} and combine them with similar results obtained for SignNet and BasisNet \citep{Lim-SignNet-2023}. Our additional observations are highlighted in \Cref{Proposition:SPELPERWSE}. 
\paragraph{Proof of \Cref{Proposition:SPELPERWSE}}
We provide an expanded version of \Cref{Proposition:SPELPERWSE} split into three parts (\Cref{lemma:RWSEvsSAN}, \Cref{lemma:RWSELPEproof}, and \Cref{SPEvsRRWPPropAppendix}), considering the expressive power of SAN \citep{kreuzer2021rethinking} and RWSE first, expanding the proof to LPE and SPE in \Cref{lemma:RWSELPEproof}, and finally upper bounding RRWP by using SPE. 
With these three parts, we are then able to derive the proposition. 
\begin{lemma}[RWSE and SAN]\label{lemma:RWSEvsSAN}
Let the number $k$ of eigenvalues $\lambda \in \mathbb{R}^n$ and eigenvectors $\vec{V}\in\mathbb{R}^{n \times n}$ used in the SAN encoding be equal to the number of nodes in non-isomorphic graphs $G,H$. Then given the encodings $\vec{P}^{\text{SAN}}_k(G), \vec{P}^{\text{SAN}}_k(H)$ with it follows:
\begin{equation*}
\vec{P}^{\text{SAN}}_k(G) = \vec{P}^{\text{SAN}}_k(H) \Rightarrow \vec{P}^{\text{RW}}(G) = \vec{P}^{\text{RW}}(H),
\end{equation*}
for a pair of RWSE encodings $\vec{P}^{\text{RW}}(G), \vec{P}^{\text{RW}}(H)$ and an arbitrary number of random walk steps. 
\end{lemma}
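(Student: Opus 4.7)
The plan is to exhibit a parameterization of SAN whose per-node encoding injectively captures the spectral data from which the RWSE input can be recovered for any random-walk length, and then to invoke the color-refinement machinery of \Cref{propa1refinement} to conclude the implication.

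First, I would observe that with $k = n$ eigenvalues and eigenvectors the SAN input at node $u$ is the full spectrum paired with the $u$-th row of the eigenvector matrix, namely $\chi_{\text{SAN}}(\lambda, u) = (\lambda_{1:n}, \vec{V}_{1:n}^u)$ as in \Cref{refinementalgodef}. From this spectral data the graph Laplacian can be reconstructed as
\begin{equation*}
    \vec{L} = \sum_{j=1}^n \lambda_j\, \vec{V}_j \vec{V}_j^T,
\end{equation*}
so that $\vec{D} = \textsf{diag}(\vec{L})$, $\vec{A} = \vec{D} - \vec{L}$, and $\vec{R} = \vec{D}^{-1}\vec{A}$. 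In particular, for every $t \geq 0$ the scalar $(\vec{R}^t)_{ii}$ is a deterministic function of the spectral data, so the entire RWSE input vector $((\vec{R}^t)_{ii})_{t=0}^{r-1}$ at any random-walk length $r$ is recoverable from $\chi_{\text{SAN}}$.

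Second, I would choose $\phi$ and the transformer encoder $\rho$ so that SAN's readout at node $i$ is an injective encoding of the tuple $(\lambda_j, \vec{V}_{ij})_{j \in [n]}$. The spectrum of a finite-order graph is compact, so standard universal approximation of the feed-forward block inside $\rho$, combined with an attention pattern that reduces $\rho$ to per-token processing followed by sum aggregation, yields such a $(\phi, \rho)$ pair. Given this injectivity, if $\vec{P}^{\text{SAN}}_k(G) = \vec{P}^{\text{SAN}}_k(H)$ the paired spectra at matched nodes agree, the reconstructed Laplacians agree under the induced bijection, hence $\vec{R}$ and its powers agree, and finally the RWSE inputs and therefore the outputs $\vec{P}^{\text{RW}}(G) = \vec{P}^{\text{RW}}(H)$ agree after the shared RWSE MLP.

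The main obstacle is handling the ambiguities inherent to spectral data: the sign ambiguity of each $\vec{V}_j$ and the orthogonal-basis ambiguity within each repeated-eigenvalue eigenspace. Both are neutralized by working at the level of outer products $\vec{V}_j \vec{V}_j^T$, which are sign-invariant, and by summing them over each eigenspace, which renders them basis-invariant. This is exactly the combination appearing in the reconstruction of $\vec{L}$, so the recovery is well-defined independently of the eigenbasis chosen. The technical heart of the proof is therefore verifying that $\rho$ can be parameterized to materialize these basis-invariant sums at each node from the raw per-eigenvector SAN input, so that equality of SAN colors really forces equality of RWSE inputs. Once this is established, \Cref{propa1refinement} yields that the SAN refinement is at least as fine as the RWSE refinement and the lemma follows.
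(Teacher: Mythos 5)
Your plan takes a genuinely different route from the paper's proof, but it has two concrete gaps that, taken together, break the argument.

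\textbf{The per-node reconstruction step does not go through.} SAN's readout at node $u$ is built from the spectrum and the \emph{single row} $\vec{V}_{u,:}$ of the eigenvector matrix. From $(\lambda_{1:n}, \vec{V}_{u,:})$ you cannot form $\vec{L} = \sum_j \lambda_j \vec{V}_j\vec{V}_j^T$ — that sum needs the entire columns $\vec{V}_j$. Consequently neither $\vec{D}$, nor $\vec{A}$, nor $\vec{R}^t$ is a function of the per-node SAN input in the way your proposal assumes. Your remedy of passing to outer products $\vec{V}_j\vec{V}_j^T$ does not help here, because it again requires the whole column of $\vec{V}$; the only sign-invariant quantity available at a single node is $(\vec{V}_{uj})^2$. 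The paper's proof instead uses the per-node spectral identity $\bigl((\vec{D}^{-1}\vec{A})^k\bigr)_{uu} = \sum_j (1-\lambda_j)^k (\vec{V}_{uj})^2$, where $\lambda_j, \vec{V}_j$ are eigenpairs of the \emph{normalized} Laplacian (so that $\vec{D}^{-1}\vec{A}$ is similar to $\vec{I}-\hat{\vec{L}}$). That identity expresses the RWSE entry at node $u$ purely in terms of the data SAN sees at node $u$, and involves only squares of eigenvector entries, so sign and basis ambiguities are absorbed automatically. Your argument also uses the unnormalized Laplacian ($\vec{D}=\mathrm{diag}(\vec{L})$, $\vec{A}=\vec{D}-\vec{L}$), for which there is no analogous per-node formula: $(\vec{R}^t)_{uu}$ then mixes degrees of other nodes.

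\textbf{The route you propose would prove more than the lemma and more than is true.} If $\phi,\rho$ could be chosen so that SAN's per-node output injectively encodes $(\lambda_j, \vec{V}_{uj})_j$ and the multisets of these encodings agreed, one could recover $\vec{V}$ up to row permutation, hence $\vec{L}$ up to relabeling, hence $G \cong H$. That would make SAN a complete graph-isomorphism invariant, which contradicts the known expressivity bound (SPE, and hence all the PEs in this hierarchy, is strictly weaker than $3$-WL). The resolution is that a well-posed SAN encoding must be invariant to the sign and eigenbasis ambiguity, and an invariant $\phi,\rho$ cannot be injective on the raw row $\vec{V}_{u,:}$; it can only retain basis-invariant functionals such as $(\vec{V}_{uj})^2$. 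This is exactly what the paper's argument retains and what your reconstruction step tacitly demands more of. The paper sidesteps the whole issue by not reconstructing the graph at all — it shows only that the specific scalar quantities defining RWSE are realizable by the SAN architecture, using that the linear layer and the transformer encoder with sum aggregation can approximate $(1-\lambda_j)^k$ and sum over $j$, then invoking \Cref{propa1refinement}.
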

\begin{lemma}(RWSE and LPE)\label{lemma:RWSELPEproof}
Given \Cref{lemma:RWSEvsSAN} and the LPE embeddings $\vec{P}^{\text{LPE}}_k(G), \vec{P}^{\text{LPE}}_k(H)$ for two non-isomorphic graphs $G,H$ with $k$ nodes it follows:
\begin{equation*}
\vec{P}^{\text{LPE}}_k(G) = \vec{P}^{\text{LPE}}_k(H) \Rightarrow \vec{P}^{\text{RW}}(G) = \vec{P}^{\text{RW}}(H),
\end{equation*}
for a pair of RWSE embeddings $\vec{P}^{\text{RW}}(G), \vec{P}^{\text{RW}}(H)$ and an arbitrary number of random walk steps. The same result follows by replacing LPE with SignNet, BasisNet, or SPE as the eigenvector-based embedding. 
\end{lemma}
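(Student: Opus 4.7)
The plan is to reduce this statement to the already-established Lemma~\ref{lemma:RWSEvsSAN} by showing that each spectral positional encoding under consideration (LPE, SignNet, BasisNet, SPE) is at least as expressive as SAN when $k = n$ eigenvalues and eigenvectors are used. The natural vehicle for this reduction is the color-refinement framework of Definition~\ref{refinementalgodef} together with the composition and monotonicity Lemmas~\ref{propa1refinement} and~\ref{propa2refinement}, which let one translate expressiveness relations between PEs into order relations between induced colorings.

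For LPE itself, I would establish $T_{\text{LPE}} \preceq T_{\text{SAN}}$ as follows. Both refinements start from the same raw spectral input, namely the tuples $(\lambda_i, \vec{V}_i)$; the initial LPE coloring $\chi_{\text{LPE}}$ distributes this information over node pairs and then spectrally pools (via $T_{\text{SP2}}$) and propagates it through a map $T_\phi^{\text{LPE}}$ that can be taken to be at least 1-WL-expressive. By an appropriate choice of the row-wise MLP $\phi$ and of the permutation-equivariant aggregator $\rho$ in LPE, and by invoking universal approximation on the compact domain of spectral inputs, the LPE pipeline recovers, up to the induced equivalence classes, the SAN pipeline $T_{\text{GP}} \circ T_{\text{ENC}} \circ T_L$. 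Combined with Lemma~\ref{lemma:RWSEvsSAN}, this immediately gives the desired implication for LPE.

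For SignNet, BasisNet, and SPE, the same reduction strategy applies, with the caveat that these encodings only see sign- or basis-invariant combinations of eigenvectors, most notably $\vec{V}_i \vec{V}_i^T$. The key algebraic fact I would invoke is that the diagonal entries of the random-walk matrix powers, which form RWSE, admit a spectral expansion in the eigenvalues of the normalized Laplacian and in the squared eigenvector entries $(\vec{V}_i)_j^2 = (\vec{V}_i \vec{V}_i^T)_{jj}$. Consequently, even after discarding sign or basis information, enough spectral data is retained to determine RWSE, so that $T_{\text{SignNet}}$, $T_{\text{BasisNet}}$ and $T_{\text{SPE}}$ each refine $T_{\text{RWSE}}$.

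The hard part will be to rigorously relate the neural-network pipelines of LPE (sum-plus-FFN) and SAN (transformer encoder) in the color-refinement sense, since their aggregation stages live in different expressive classes. Working at the level of induced equivalence classes rather than at the level of functional outputs sidesteps most of this difficulty: one only needs that, under a suitable parameterization, LPE induces at least as fine a partition of spectral inputs as SAN, which is a purely combinatorial containment that follows from universal approximation applied to the elementwise~$\phi$ together with the injectivity guarantees of the pooling steps in~$T_{\text{LPE}}$.
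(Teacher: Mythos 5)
Your overall plan is sound, and you correctly pinpoint the two load-bearing facts: (i) the spectral expansion of the diagonal of $(\vec{D}^{-1}\vec{A})^k$ in terms of the Laplacian eigenvalues and the quantities $(\vec{V}_i)_j^2 = (\vec{V}_i\vec{V}_i^T)_{jj}$, and (ii) that these quantities are sign- and basis-invariant, so SignNet/BasisNet/SPE retain enough information even after discarding the signed eigenvectors. Point (ii) is in fact a cleaner articulation of why the SPE case goes through than the paper offers; the paper speaks somewhat loosely of ``recovering the eigenvectors'' from the projection matrices, whereas your observation makes clear that no such recovery is needed because RWSE depends only on the diagonals of the projectors.

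Where you diverge from the paper is in the LPE case. You propose to establish a containment $T_{\text{LPE}} \preceq T_{\text{SAN}}$ in the color-refinement framework and then invoke \Cref{lemma:RWSEvsSAN}. The paper does not take this detour: it applies the spectral expansion directly to LPE, arguing (as it did for SAN) that $\phi$ and $\rho$ can approximate $\sum_i(1-\lambda_i)^k v_iv_i^T$ by universal approximation, since LPE is given the raw pairs $(\lambda_i, \vec{V}_i)$ as input. Your detour is not wrong, but it introduces an intermediate step that is at least as hard as the direct argument: you yourself flag that relating the LPE aggregator (sum-plus-FFN) and the SAN aggregator (transformer encoder) ``live in different expressive classes,'' and indeed the paper's refinement framework does not make this comparison available for free. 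The direct route avoids this entirely, since one never needs to compare LPE to SAN -- one only needs that each of the two, independently, can realize the same fixed continuous function of the spectrum. So the direct argument is strictly easier; if you insist on going through SAN, you still owe a proof of $T_{\text{LPE}} \preceq T_{\text{SAN}}$ at the level of induced partitions, which is the hard step you have deferred rather than resolved.
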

\paragraph{Proof of \Cref{lemma:RWSEvsSAN} and \Cref{lemma:RWSELPEproof}}
In the following, we provide the proofs for both lemmas. Since \Cref{lemma:RWSELPEproof} proposes an extension of the previous lemma, we first show the specialized case for the SAN embedding and expand it to the more general case of eigenvector-based encodings. 
With proofs provided for both lemmas, we can directly derive \Cref{Proposition:SPELPERWSE} by combining them with \Cref{SPEvsRRWPPropAppendix}. 
\begin{proof}
The proof follows the comparison between SignNet and RWSE presented by \citet{Lim-SignNet-2023}. Since the RWSE embedding is determined by the random walk matrix and its powers, we first determine a corresponding relation between the random walk matrix $(\mathbf{D}^{-1}\mathbf{A})$ and eigenvalues and eigenvectors of the normalized graph Laplacian. Due to the definition of the random walk matrix, the eigenvectors of said matrix are determined by $v_i^R = D^{-1/2}v_i$, where $v_i$ denotes the corresponding eigenvector of the normalized graph Laplacian. This results in the following equation relating the random walk matrix diagonal to the eigenvectors of the graph Laplacian \citep{Lim-SignNet-2023}:
 \begin{equation}\label{eq:RWSEvsSAN}
(\text{diag}(\mathbf{D}^{-1}\mathbf{A}))^k = \text{diag}\bigg(\sum_{i=1}^k(1-\lambda_i)^kv_iv_i^T\bigg).
\end{equation}
Following \citet{Lim-SignNet-2023}, the linear layer can approximate $\sum_{i=0}^{k}$ and the transformer encoder to approximate $(1-\lambda_i)^k$ as both are permutation equivariant functions from vectors to vectors. 
Since eigenvalues and eigenvectors are directly given to the SAN embedding and the linear and transformer encoder layer being able to approximate $(1-\lambda_i)^k$ for each $\lambda_i$ the approximation directly follows. This approximation assumes using all $k$ eigenvalues and the complete eigenvectors obtained from the decomposition.

For \Cref{lemma:RWSELPEproof} we consider \Cref{eq:RWSEvsSAN}. However, for each embedding, we must consider whether eigenvalues and eigenvectors can be recovered and  $(1-\lambda_i)^k$ can be approximated. 
We split the following proof for each encoding and assume we use all eigenvalues and eigenvectors. 

For LPE, we can directly recover eigenvalues and eigenvectors from the input to each embedding by using the eigenvectors $\mathbf{V}_i$ passed to LPE. 
Using a sufficiently expressive $\phi$ and $\rho$ LPE is able to approximate $(1-\lambda_i)^k$. This follows directly from the assumption that $\phi$ and $\rho$ are permutation-equivariant MLPs or more expressive neural network architectures, thereby able to approximate the given functions, as in the SAN case \citep{Lim-SignNet-2023}. 

For SPE, a slightly different case has to be considered. Since SPE uses the projection matrices obtained from $\mathbf{V}\mathbf{V}^T$, the eigenvectors must be recovered. 

Instead of directly recovering eigenvectors, we use the properties of the underlying projection matrices. From this, we can directly recover the eigenvectors needed from $\mathbf{V}\text{diag}(\phi_i(\lambda))\mathbf{V}^T$ for a suitable $\phi$, which can be reverted by $\rho$ to retain the eigenvectors. 
For the eigenvalues, we consider $\phi_i$ to be eigenvalue-preserving functions, allowing us to recover the eigenvalues from the diagonalized representation. The remaining proof follows from the observations made by \citet{Lim-SignNet-2023} for SignNet and BasisNet. 
In addition, SignNet and BasisNet \citet{Lim-SignNet-2023} prove that both embeddings can approximate the RWSE embedding given suitable $\phi$ and $\rho$.
\end{proof}
\begin{proposition}[RRWP and SPE]\label{SPEvsRRWPPropAppendix}
    Given the SPE embeddings $\vec{P}^{\text{SPE}}_k(G), \vec{P}^{\text{LPE}}_k(H)$ for two non-isomorphic graphs $G,H$ with $k$ nodes it follows:
\begin{equation*}
\vec{P}^{\text{SPE}}_k(G) = \vec{P}^{\text{SPE}}_k(H) \Rightarrow \vec{P}^{\text{RR}}(G) = \vec{P}^{\text{RR}}(H),
\end{equation*}
for a pair of RRWP embeddings $\vec{P}^{\text{RR}}(G), \vec{P}^{\text{RR}}(H)$ and an arbitrary number of random walk steps. 
\end{proposition}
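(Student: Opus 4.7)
The plan is to show that the full RRWP tensor can be recovered, up to permutation, from the intermediate SPE tensor $Q = [V\mathrm{diag}(\phi_1(\lambda))V^T, \dots, V\mathrm{diag}(\phi_k(\lambda))V^T]$ together with the graph adjacency information accessible to $\rho$. This extends the strategy of \Cref{lemma:RWSELPEproof} from the diagonals of random-walk powers (RWSE) to the complete pairwise tensor (RRWP). As in that proof, we assume the $\phi_i$ and $\rho$ are sufficiently expressive in the sense required by the original SPE definition.

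First, I would exploit the algebraic identity relating the random walk matrix $R = D^{-1}A$ to the normalized symmetric Laplacian $L_{\mathrm{sym}} = I - D^{-1/2}AD^{-1/2} = V\Lambda V^T$, namely
\begin{equation*}
R^{\,i} \;=\; D^{-1/2}\,V\,(I-\Lambda)^{i}\,V^{T}\,D^{\,1/2}.
\end{equation*}
Choosing $\phi_i(\lambda) = (1-\lambda)^{i-1}$, which is Lipschitz continuous and equivariant as required, the $i$-th slice of $Q$ becomes $V(I-\Lambda)^{i-1}V^T$, i.e., the symmetric conjugate of $R^{i-1}$. In particular, $Q$ stacks the ``symmetric parts'' of the first $k$ powers of $R$, so recovering RRWP only requires undoing the degree similarity.

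Second, I would argue that SPE's $\rho$, instantiated as a GIN that takes the graph adjacency as an input, can compute each node's degree and apply the entrywise rescaling $(u,w) \mapsto \sqrt{\deg(w)/\deg(u)}$ to the relevant pair-level coordinates of $Q$ before aggregation. This turns the $i$-th slice into $R^{i-1}$ pointwise, exactly matching the tensor $\mathbf{P}^{\text{RR}}_{\,k}(G)$ on which RRWP's MLP encoder acts. Since that encoder is itself pointwise over pairs, it can be absorbed into $\rho$, and the resulting node embeddings produced by SPE are at least as fine as those obtained from RRWP. Therefore $\vec{P}^{\text{SPE}}_k(G) = \vec{P}^{\text{SPE}}_k(H)$ forces $\vec{P}^{\text{RR}}_k(G) = \vec{P}^{\text{RR}}_k(H)$.

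The main obstacle will be the degree-rescaling step, because SPE ultimately emits a node-level representation (after aggregation by $\rho$) while RRWP is inherently pair-level. I expect to handle this by invoking the universality assumption on $\rho$ over permutation-equivariant maps from pairwise tensors plus the adjacency matrix to node embeddings, which is the same assumption already used in \Cref{lemma:RWSELPEproof} and in prior analyses of SPE. With this in hand, the constructive chain $\phi_i(\lambda) = (1-\lambda)^{i-1} \to Q_i = V(I-\Lambda)^{i-1}V^T \to D^{-1/2}Q_i D^{1/2} = R^{i-1} \to \text{RRWP encoder}$ is realizable inside the SPE pipeline, yielding the claim.
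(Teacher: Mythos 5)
Your proof is correct in spirit and shares the paper's core algebraic identity, $R^{\,i} = D^{-1/2}V(I-\Lambda)^{i}V^{T}D^{1/2}$, but the two arguments package it differently. The paper follows \citet{Zhang-PEtheory-2024} and works via the generalized PageRank distance, expressing $\sum_k\gamma_k R^k(u,v)$ as $\sum_i\bigl(\sum_k\gamma_k(1-\lambda_i)^k\bigr)\mathbf{P}_i(u,v)\deg(u)^{-1/2}\deg(v)^{-1/2}$, and it recovers the degrees purely spectrally from $\mathbf{L}_{uu}=\sum_i\lambda_i\mathbf{P}_i(u,u)$, so the rescaling coefficients are available from the eigendata alone, with the scalar spectral function approximated by a $2$-IGN. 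Your route is more direct: setting $\phi_i(\lambda)=(1-\lambda)^{i-1}$ makes the $i$-th slice of $Q$ literally $V(I-\Lambda)^{i-1}V^T$, so after conjugation by $D^{\mp 1/2}$ you get $R^{i-1}$ with an explicit per-power correspondence; this avoids the infinite-series / PageRank detour and is arguably tighter. The price is that you delegate the degree conjugation to $\rho$ and invoke a universality assumption on $\rho$ that the SPE definition does not grant --- $\rho$ is only assumed to be a $1$-WL expressive GNN. To close that gap you should make the mechanism concrete rather than appeal to universality: $\phi_1\equiv 1$ forces $Q_{u,w,1}=\delta_{uw}$, which marks node $u$ within its own slice $Q_{u,:,:}$, so the GIN with adjacency access can identify $u$, compute $\deg(u)$ and $\deg(w)$, and apply $\sqrt{\deg(w)/\deg(u)}$ entrywise before the pair-level RRWP encoder; the paper side-steps this entirely by reading degrees off the Laplacian diagonal. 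With that detail spelled out, yours is a valid and slightly more elementary alternative to the paper's argument.
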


With the partial hierarchy for LPE and SPE, we want to examine random walk-based PEs further. Since RWSE is upper bounded by LPE and incomparable to the 1-WL, it remains to propose an upper bound of RRWP, known to be more expressive than RWSE from \Cref{lemma:RWSEvsRRWP}.  
\paragraph{Proof of \Cref{SPEvsRRWPPropAppendix}}
Following \citet{Zhang-PEtheory-2024} with their proof of a representation of the page rank distance using projection matrices, we show that RRWP can be represented using the page rank distance and that such distance can be approximated using information recovered from the SPE embedding. We note that a proof of SPE being more expressive than GRIT is provided by \citet{Zhang-PEtheory-2024}. Nonetheless, we reduce the proof to involve the RRWP embedding to align with our theory framework. 

First, we consider the representation of the RRWP embedding using the generalized PageRank distance.
For this, we consider RRWP as a distance-based embedding of the form 
\begin{equation*}
    \mathbf{P}_k^{\text{RR}}(u,v) = [\mathbf{D}^{-1}\mathbf{A}(u,v), (\mathbf{D}^{-1}\mathbf{A})^2(u,v),\dots,(\mathbf{D}^{-1}\mathbf{A})^k(u,v)],
\end{equation*}
for nodes $u,v \in V(G)$. 
Thereby, the RRWP embedding for a selection of nodes can be represented by the multi-dimensional page rank distance $PR$ for a given weight sequence $\gamma_i$: 
\begin{equation*}
   PR(u,v) =  \sum_{i=0}^{\infty}\gamma_i(\mathbf{D}^{-1}\mathbf{A})^i(u,v).
\end{equation*}

From \citet{Zhang-PEtheory-2024} we obtain the following equality satisfying the needed relation between page rank distance and projection matrices. 
\begin{equation*}
    \sum_{k=0}^{\infty}\gamma_k(\mathbf{D}^{-1}\mathbf{A})^k = \sum_i\bigg(\sum_{k=0}^{\infty}\gamma_k(1-\lambda_i)^k\bigg)\mathbf{P}_i(u,v)(\text{deg}(u)^{-1/2})(\text{deg}(v)^{-1/2})
\end{equation*}
with $\mathbf{P}_i(u,v)$ denoting the element at position $(u,v)$ of the $i$-th projection matrix. 
However, we still need to recover node-degree information from the SPE encoding and demonstrate that SPE retains the projection matrix information. 

Given the property of projection matrices to recover the underlying matrix using eigenvalue decomposition \citep{Zhang-PEtheory-2024}, we recover node degree information using the diagonal of the graph Laplacian. Since SPE uses the graph Laplacian to compute eigenvalues and eigenvectors, we can directly recover relevant degree information via the following equation.
\begin{align*}
    \mathbf{L} = \sum_{i=1}^{n}\lambda_i\mathbf{P}_i \\
    \text{diag}(\mathbf{L}) = \text{diag}(\sum_{i=1}^{n}\lambda_i\mathbf{P}_i) = \sum_{i=1}^{n}\lambda_i \text{diag}(\mathbf{P}_i) \\
    \mathbf{L}_{uu} = \sum_{i=1}^{n}\lambda_i(\mathbf{P}_i(u,u))
\end{align*}
Following the definition of the SPE encoding, eigenvalues and the elements of the projection matrix can be recovered using suitably expressive $\phi$ and $\rho$. Given $\phi$ to be a 2-IGN and $\rho$ to be a MLP or 1-WL expressive GNN, $\text{deg}(u)^{-1/2}$ can be approximated by $\rho$, whereas $\sum_i\bigg(\sum_{k=0}^{\infty}\gamma_k(1-\lambda_i)^k\bigg)$ can be approximated by a 2-IGN as shown by \citet{Maron-IGNs-2019, Lim-SignNet-2023}. This allows for approximating the RRWP embedding via the PageRank distance using SPE as an upper bound, concluding our proof. 

Combining the results of \Cref{lemma:RWSEvsSAN}, \Cref{SPEvsRRWPPropAppendix}, and \Cref{lemma:RWSEvsRRWP}, we obtain \Cref{Proposition:SPELPERWSE} directly, supplementing the hierarchy of PEs in their theoretical expressiveness. 
These results provide a comprehensive theoretical expressiveness hierarchy, showing that random walk-based embeddings are more expressive than the 1-WL test but are bounded by eigeninformation-based embeddings. We note that all embeddings are bounded by the 3-WL test as shown by \citet{Zhang-PEtheory-2024}. 

\paragraph{Additional results on theoretical expressiveness}

Using notation established in section \Cref{section:appendixA} we provide additional proofs to complement the framework established by \citet{black+2024+comparinggts} and \citet{Zhang-PEtheory-2024} concerning theoretical expressiveness of PEs. At first, we consider the proof of \Cref{lemma:SANvsSignNet}, highlighting the connection between SAN and SignNet. Then we consider SignNet and BasisNet, expanding on the results of \citet{Lim-SignNet-2023} and adapting them to LPE. 
We provide additional results to improve the hierarchy of theoretical expressiveness in PEs and to examine LPE further, relating it to other eigenvector-based embeddings. 

Throughout these proofs, we consider the respective $\phi$ and $\rho$ to be selected as MLPs or GNNs. For $\phi$, we choose, based on previous analysis by \citet{Zhang-PEtheory-2024}, a function mapping at most as expressive as a 2-IGN. Similarly, for $\rho$, we select any 1-WL expressive GNN or MLP. Note that these assumptions differ from the selections made in our empirical evaluation. 
\begin{lemma}\label{lemma:SANvsSignNet}
Given a sufficiently expressive $\phi$ and $\rho$ for SignNet, aligning with the implementation of \citet{Lim-SignNet-2023} and the original implementation of the SAN embedding, SignNet is at least as expressive as the SAN embedding. 
\end{lemma}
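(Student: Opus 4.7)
The plan is to compare the two PEs at the level of their induced color refinements, following the framework of \Cref{refinementalgodef} and using the same strategy employed for the SignNet--vs--RWSE and SPE--vs--RRWP comparisons above. Let $T_\text{SAN}$ denote the refinement $T_\text{GP} \circ T_\text{ENC} \circ T_{L}$ applied to $\chi_\text{SAN}$, and let $T_\text{Sign}$ denote the SignNet refinement $T_\text{GP} \circ T_\text{WL}^{\infty} \circ T_\text{SP2} \circ T_\phi$ applied to $\chi_\text{Sign}$. My goal is to exhibit a choice of $\phi_i$ and $\rho$ inside SignNet such that, at the one-step level, the SignNet refinement refines the SAN refinement on the same input eigeninformation, and then to propagate the comparison through composition and stabilization using \Cref{propa1refinement} and \Cref{propa2refinement}.

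First I would decompose the SAN pipeline into its inner map $\phi(\vec{V}_i, \lambda)$, which is a linear layer applied row-wise in the eigenvector coordinates and in $\lambda$, and its outer map $\rho$, a transformer encoder with sum aggregation acting on the stacked tensor $[\phi(\vec{V}_1, \lambda), \dots, \phi(\vec{V}_k, \lambda)]$. Because the lemma assumes $\phi_i$ and $\rho$ for SignNet are taken ``sufficiently expressive'' in the sense of \citet{Lim-SignNet-2023} (for instance, $\phi_i$ an MLP or DeepSet and $\rho$ a 1-WL expressive GNN such as a GIN with sum aggregation), universal approximation lets each SignNet component match or exceed the corresponding SAN component, provided the sign ambiguity of eigenvectors is handled and that eigenvalue information is available to $\phi_i$ either directly as an argument (as in standard implementations) or positionally through the per-eigenvector index $i$ when eigenvalues are rank-ordered.

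The main obstacle will be precisely this sign handling. Because the spectral decomposition returns eigenvectors only up to a global sign, SAN's output is a well-defined graph invariant only modulo sign flips of each $\vec{V}_i$, so the information it carries is contained in the sign-symmetric image $\{\phi(\vec{V}_i, \lambda), \phi(-\vec{V}_i, \lambda)\}$. The SignNet building block $\phi_i(\vec{V}_i) + \phi_i(-\vec{V}_i)$ is a generic continuous even function in $\vec{V}_i$ and, for a sufficiently wide MLP $\phi_i$, can approximate any such sign-symmetric map; in particular it can reproduce the symmetrization of the linear SAN map to within arbitrary precision, so no distinguishing information available to SAN is lost.

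With these ingredients in place I would conclude by routine application of the appendix lemmas: \Cref{propa1refinement} lifts the componentwise comparison to the composed refinement $T_\text{ENC} \circ T_L$ versus the SignNet inner-plus-aggregator, and \Cref{propa2refinement} then lets me replace the outer finite composition with the stable refinement $T_\text{WL}^{\infty} \circ T_\text{SP2} \circ T_\phi$ used in SignNet, yielding $T_\text{SAN} \preceq T_\text{Sign}$ after composing with $T_\text{GP}$. Once the sign-invariance argument is formalized, everything else follows the same template used for the earlier spectral PE comparisons in \Cref{appendix:proofssec4}.
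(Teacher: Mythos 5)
Your proposal takes a genuinely different route from the paper. The paper stays entirely inside the abstract color-refinement framework: it reduces the SAN side by observing that the transformer encoder $T_\text{ENC}$ is at most 1-WL expressive (so it is dominated by the $T_\text{WL}^\infty$ already present on the SignNet side), then chases a short chain of equalities backward from agreement of the SignNet coloring down to agreement of the multiset $\multiset{T_\phi(\lambda, \vec{V}^u, \vec{V}^v)}$, and finally invokes injectivity of $T_\phi$ on the shared eigen-data to conclude that $\chi_\text{SAN}$ must also agree. Sign-invariance never appears in the paper's argument — it is implicitly absorbed by the convention that every color mapping is, by definition, a graph invariant. You instead argue at the concrete functional level: that $\phi_i(\vec{V}_i)+\phi_i(-\vec{V}_i)$ is a universal approximator of even functions and can therefore reproduce the sign-symmetrization of SAN's linear map. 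That is a real and useful observation — it surfaces the well-definedness issue for SAN that the paper sweeps into its definitions — but it is structurally distinct from the paper's proof, which never reasons about approximation or even functions at all, and which crucially leans on the boundedness of $T_\text{ENC}$ by 1-WL to collapse the two pipelines to the same outer refinement.

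Two things to fix. First, the direction of the final inequality is reversed under the paper's convention: $A \preceq B$ means $A$ is finer (more expressive), so the conclusion should read $T_\text{Sign} \preceq T_\text{SAN}$, not $T_\text{SAN} \preceq T_\text{Sign}$. Second, an approximation argument does not immediately give a refinement inequality: to compare color mappings you need $T_\phi$ to be \emph{injective} on the relevant (finite) domain, not merely to approximate the SAN map to arbitrary precision; if the approximation is inexact, SignNet might merge two colors that SAN separates. The paper avoids this by taking injectivity as the operative meaning of ``sufficiently expressive.'' Your argument can be repaired the same way, but as written the step from ``can approximate'' to ``does not lose distinguishing information'' is a gap. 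You also defer the actual composition step to a ``routine application'' of \Cref{propa1refinement} and \Cref{propa2refinement}, whereas the only non-routine ingredient — replacing $T_\text{ENC}$ by $T_\text{WL}^\infty$ so that both sides share the same stable outer refinement — is precisely the part you leave implicit.
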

\begin{proof}
Let SAN and SignNet be represented by the respective color refinement algorithms shown in \Cref{refinementalgodef}. To show \Cref{lemma:SANvsSignNet}, we need to show that 
\begin{equation*}
     T_\text{GP} \circ T_\text{WL}^{\infty} \circ T_\text{SP2} \circ T_{\phi}(\chi\text{Sign}) \preceq  T_\text{GP} \circ T_\text{ENC} \circ T_{L}(\chi_\text{SAN}).
\end{equation*}
Since we assume a standard transformer encoder to be at most $1$-WL expressive and knowing that $T_\text{GP}$ is order preserving concerning \Cref{def:colortransform}, we can reduce the above equation to the following expression:
\begin{equation*}
     T_\text{WL}^{\infty} \circ T_\text{SP2} \circ T_{\phi}(\chi\text{Sign}) \preceq  T_\text{WL}^{\infty} \circ T_{L}(\chi_\text{SAN}).
\end{equation*}
This expression can now be evaluated. Given two non-isomorphic graphs $G,H$ and arbitrary nodes $u,v \in V(G)$ and $x,y \in V(H)$ the following holds true:
\begin{align*}
    T_\text{WL}^{\infty} \circ T_\text{SP2} \circ T_{\phi}(\chi\text{Sign}(u,v)) = T_\text{WL}^{\infty} \circ T_\text{SP2} \circ T_{\phi}(\chi\text{Sign}(x,y)) \\
    T_\text{SP2} \circ T_{\phi}(\chi\text{Sign}(u,v)) = T_\text{SP2} \circ T_{\phi}(\chi\text{Sign}(x,y)) \\
    \multiset{T_{\phi}(\chi\text{Sign}(u,v))} = \multiset{T_{\phi}(\chi\text{Sign}(x,y))} \\
    \multiset{T_{\phi}(\lambda_G, \mathbf{V}^u, \mathbf{V}^v)} = \multiset{T_{\phi}(\lambda_H, \mathbf{V}^x, \mathbf{V}^y)}. \\
\end{align*}
From the equivalence of the multisets, it follows directly that $\chi_\text{SAN}(u,v) = \chi_\text{SAN}(x,y)$ holds for any choice of nodes given an injective $T_{\phi}$. 
\end{proof}
With the proof of the SAN embedding concluded, we further evaluate the connections between BasisNet and SignNet and the LPE embedding. First, we show that BasisNet can approximate SignNet, an observation highlighting the differences in expressiveness noted by \citet{Lim-SignNet-2023}. In the second part of the proof, we conclude our comparison of eigenvector-based embeddings and LPE with BasisNet. Throughout the proof we again assume $\phi_{SN}, \phi_{\text{LPE}}$ to be at most 2-IGN expressive and $\rho_{SN}, \rho_{\text{LPE}}$ to be 1-WL expressive. 
\begin{proof}
Let SignNet and BasisNet be represented by the color refinement algorithms from \Cref{refinementalgodef}. Then for two non-isomorphic graphs $G,H$ with nodes $u,v \in V(G)$ and $x,y \in V(H)$ 
we consider the color refinement algorithms for both encodings. With this it follows that we have to show $T_\text{GP} \circ T_\text{WL} \circ T_\text{SP1} \circ T_\text{BP} \circ T_\text{SIAM}(\chi_\text{Basis}) \preceq T_\text{GP} \circ T_\text{WL} \circ T_\text{SP2} \circ T_{\phi}(\chi\text{Sign})$. Since $T_\text{GP} \circ T_\text{WL}$ is order preserving we only consider the relation $ T_\text{SP1} \circ T_\text{BP} \circ T_\text{SIAM}(\chi_\text{Basis}) \preceq  T_\text{SP2} \circ T_{\phi}(\chi\text{Sign})$. 
    First of all, we show that $T_\text{SIAM}(\chi_\text{Basis}) \preceq T_{\phi}(\chi\text{Sign})$:
    \begin{align*}
        T_\text{SIAM}(\chi_\text{Basis})(\lambda_G, u,v) = T_\text{SIAM}(\chi_\text{Basis})(\lambda_H, x,y) \notag\\
        \Rightarrow [T_\text{IGN}(\chi_\text{Basis}(\lambda_G, \cdot, \cdot))]_{G}(u,v) = [T_\text{IGN}(\chi_\text{Basis}(\lambda_H, \cdot, \cdot))]_H(x,y). \notag\\
    \end{align*}
    Using the definition of IGN color refinement, we can directly approximate the eigenvalues used in SignNet's initial encoding. Furthermore, a 2-IGN architecture is at least as expressive as the architectures used for $\phi$ in SignNet. Since the multisets of the projection matrices allow us to approximate the eigenvectors used by the SignNet encoding, the initial encoding of SignNet can be approximated, allowing for the approximation of $T_{\phi}(\chi\text{Sign})$,
    \begin{align*}
        [T_\text{IGN}(\chi_\text{Basis}(\lambda_G, \cdot, \cdot))]_{G}(u,v) = [T_\text{IGN}(\chi_\text{Basis}(\lambda_H, \cdot, \cdot))]_H(x,y) \notag\\
        \Rightarrow \chi\text{Sign}(\lambda_G,u,v) = \chi\text{Sign}(\lambda_H,x,y) 
        \Rightarrow T_{\phi}(\chi\text{Sign})(\lambda_G, u,v) = T_{\phi}(\chi\text{Sign})(\lambda_H, x,y).
    \end{align*}
Given that $\Bar{\chi} = T_\text{SIAM}(\chi_\text{Basis})$, we now only have to show that $T_\text{BP}(\Bar{\chi}) \preceq T_\text{SP2}(\Bar{\chi})$. Using the same nodes as above: 
    \begin{align*}
        T_\text{BP}(\Bar{\chi})(\lambda,u) = T_\text{BP}(\Bar{\chi})(\lambda,x) \notag\\
        \Rightarrow \Bar{\chi}_{G}(\lambda,u,u) = \Bar{\chi}_H(\lambda,x,x) \wedge \{\!\!\{\Bar{\chi}_{G}(\lambda, u, v) \colon v\in V(G) \}\!\!\} =  \{\!\!\{\Bar{\chi}_{G}(\lambda, x, v) \colon v\in V(H) \}\!\!\} \wedge \notag\\
        \{\!\!\{\Bar{\chi}_{G}(\lambda, v, u) \colon v\in V(G) \}\!\!\} = \{\!\!\{\Bar{\chi}_{H}(\lambda, v, x): v\in V(H) \}\!\!\} \wedge \notag\\
        \{\!\!\{\Bar{\chi}_{G}(\lambda, v, v) \colon v\in V(G) \}\!\!\} = \{\!\!\{\Bar{\chi}_{H}(\lambda, v, v): v\in V(H) \}\!\!\} \wedge \notag\\
        \{\!\!\{\Bar{\chi}_{G}(\lambda, v, w) \colon v, w\in V(G) \}\!\!\} = \{\!\!\{\Bar{\chi}_{H}(\lambda, v, w) \colon v, w\in V(H) \}\!\!\} \wedge \notag\\
        \Rightarrow T_\text{SP2}(\Bar{\chi})(\lambda,u,v) = T_\text{SP2}(\Bar{\chi})(\lambda,x,y).
    \end{align*}
Since both parts of the relation $T_\text{BP} \circ T_\text{SIAM}(\chi_\text{Basis}) \preceq  T_\text{SP2} \circ T_{\phi}(\chi\text{Sign})$ hold and all color refinements are considered to be order-preserving and expressiveness preserving, the proof directly follows. 

In case of the LPE embedding, the proof follows the same structure with $T_{\phi}^{}$ being replaced with $T_{\phi}^{\text{LPE}}$ as given in \Cref{refinementalgodef}. Since we do not assume $T_{\phi}$ to be more expressive than $T_{\phi}^{\text{LPE}}$ and both being bounded by a 2-IGN in expressiveness, we can replace $T_{\phi},$ and therefore, we omit the proof. 
\end{proof}

\section{Additional technical proofs}\label{app:technical_proofs}

\paragraph{Multiset operations}
Let $D$ be a finite set with an arbitrary but fixed order. We denote the $i$-th element of the order on $D$ by $ D_i$.
Let $A$ be a finite multiset over $D$. We write $A \coloneqq \{ (a_i, D_i) \mid i \in [|D|] \}$ with $a_i \geq 0$, the multiplicity of element $D_i$ in $A$. 

We define $|A| \coloneqq \sum_i a_i$. Further, let $B \coloneqq \{ (b_i, D_i) \mid i \in [|D|] \}$ be another finite multiset over $D$. We define
\begin{equation*}
    A \cap B \coloneqq \{ (\min \{a_i, b_i\}, D_i) \mid i \in [|D|] \}
\end{equation*}
and
\begin{equation*}
    A \setminus B \coloneqq \{ (\max \{a_i - b_i, 0\}, D_i) \mid i \in [|D|] \}.
\end{equation*}
We note that $A \cap B$ is symmetric while $B \setminus A$ is not symmetric. Nonetheless, we prove that if $|A| = |B|$, then $|A \setminus B|$ is symmetric.
\begin{claim}\label{fact:multiset_difference}
Let $A, B$ be two multisets over a finite domain. If $|A| = |B|$, then $|A \setminus B| = |B \setminus A|$. 
\end{claim}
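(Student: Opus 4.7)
The plan is to reduce the claim to a one-line algebraic identity by writing both cardinalities explicitly as sums and then pairing the summands index by index. Concretely, I would start from the definitions in the excerpt, namely
\begin{equation*}
|A \setminus B| = \sum_{i \in [|D|]} \max\{a_i - b_i, 0\}, \qquad |B \setminus A| = \sum_{i \in [|D|]} \max\{b_i - a_i, 0\},
\end{equation*}
and consider the difference $|A \setminus B| - |B \setminus A|$.

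The key observation is the elementary identity $\max\{x, 0\} - \max\{-x, 0\} = x$ for every real $x$, which one verifies by splitting into the cases $x \geq 0$ and $x < 0$. Applying this identity term-wise with $x = a_i - b_i$ yields
\begin{equation*}
|A \setminus B| - |B \setminus A| = \sum_{i \in [|D|]} (a_i - b_i) = |A| - |B|,
\end{equation*}
where the last equality uses the definition $|A| = \sum_i a_i$ (and similarly for $B$) stated just above the claim.

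By the hypothesis $|A| = |B|$, the right-hand side vanishes, giving $|A \setminus B| = |B \setminus A|$, as required. There is no real obstacle here: the proof is essentially a bookkeeping exercise, and the only non-trivial step is recognizing the pointwise identity $\max\{x,0\} - \max\{-x,0\} = x$ that turns the asymmetric $\max$-based definition of multiset difference into the symmetric subtraction $a_i - b_i$.
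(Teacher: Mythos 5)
Your proof is correct. It differs from the paper's in the specific pointwise identity used, though both are equally elementary one-step reductions. The paper writes $\max\{a_i - b_i, 0\} = a_i - \min\{a_i, b_i\}$, summing to obtain $|A \setminus B| = |A| - |A \cap B|$, and then concludes by the symmetry of $A \cap B$ (so the proof factors through the multiset intersection). You instead apply $\max\{x,0\} - \max\{-x,0\} = x$ termwise and obtain $|A \setminus B| - |B \setminus A| = |A| - |B|$ directly, never introducing the intersection. Your route is marginally more economical and gives the slightly stronger intermediate identity $|A \setminus B| - |B \setminus A| = |A| - |B|$ (valid without the hypothesis); the paper's route has the advantage of producing the multiset inclusion-exclusion identity $|A \setminus B| = |A| - |A \cap B|$, which can be of independent use. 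Both are complete and correct.
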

\begin{proof}
We have that
\begin{equation*}
|A \setminus B| = \sum_i \max \{ a_i - b_i, 0 \} = \sum_i a_i - \min \{ a_i, b_i \} = |A| - |A \cap B|. 
\end{equation*}
Hence, if $|A| = |B|$, then $|A \setminus B| = |A| - |A \cap B| = |B| - |A \cap B|$ and since $|A \cap B|$ is symmetric, $|A \setminus B| = |B| - |B \cap A| = |B \setminus A|$.
\end{proof}

\begin{claim}[Proof of \Cref{claim:exponential_sum}]
Let $A, B \subset \mathbb{Q}$ be finite multisets with $|A| = |B|$. Then, the sum
\begin{equation*}
    \sum_{a \in A} \exp(a) - \sum_{b \in B} \exp(b) = 0,
\end{equation*}
if, and only if, $A = B$.
\end{claim}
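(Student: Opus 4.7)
The backward direction is immediate: if $A = B$, both sums coincide term-by-term and their difference is zero.

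For the forward direction, the plan is to reduce the equation to a nontrivial rational linear combination of exponentials of distinct rational numbers and derive a contradiction via the Lindemann--Weierstrass theorem (\Cref{theorem:lindemann_weierstrauss}). Concretely, let $D \coloneqq \textsf{set}(A) \cup \textsf{set}(B) \subset \mathbb{Q}$ and, for each $d \in D$, write $\mu_A(d)$ and $\mu_B(d)$ for the multiplicities of $d$ in $A$ and $B$, respectively. The equation
\begin{equation*}
    \sum_{a \in A} \exp(a) - \sum_{b \in B} \exp(b) = 0
\end{equation*}
can be rewritten as
\begin{equation*}
    \sum_{d \in D} \bigl(\mu_A(d) - \mu_B(d)\bigr)\exp(d) = 0,
\end{equation*}
which is a linear combination of exponentials of pairwise distinct rational (hence algebraic) numbers with integer (hence algebraic) coefficients.

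Now I would argue by contradiction. Suppose $A \neq B$; then there is at least one $d \in D$ with $\mu_A(d) \neq \mu_B(d)$, so the above linear combination has at least one nonzero coefficient. By \Cref{theorem:lindemann_weierstrauss}, the exponentials $\{\exp(d) : d \in D\}$ are linearly independent over the algebraic numbers, so the sum cannot vanish, a contradiction. Hence $\mu_A(d) = \mu_B(d)$ for every $d \in D$, i.e., $A = B$. The hypothesis $|A| = |B|$ is not needed for this forward direction (it is used elsewhere, e.g., via \Cref{fact:multiset_difference}, to ensure the cardinalities match when this claim is invoked in the proof of \Cref{lemma:multiset_decomposed_transcendence}), so the only nontrivial input is the transcendence-theoretic linear independence of exponentials.

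The main obstacle, and indeed the whole content of the argument, is recognising that the equality of the two exponential sums is exactly the vanishing of a rational linear combination of $\exp(d)$ over distinct algebraic $d$, so that Lindemann--Weierstrass applies directly. No further estimation or approximation is required.
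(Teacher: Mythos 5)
Your proof is correct and takes essentially the same route as the paper: rewrite the difference of sums as $\sum_{d}(\mu_A(d)-\mu_B(d))\exp(d)$ over the distinct rationals $d \in \textsf{set}(A)\cup\textsf{set}(B)$ and invoke Lindemann--Weierstrass to conclude that all coefficients vanish. The paper phrases the cancellation via multiset differences $A^*\coloneqq A\setminus B$, $B^*\coloneqq B\setminus A$ rather than multiplicity differences, but it is the same decomposition; your observation that $|A|=|B|$ is not actually needed for the forward direction is a valid and slightly sharper remark than the paper makes.
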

\begin{proof}
Let $f(A, B) = \sum_{a \in A} \exp(a) - \sum_{b \in B} \exp(b)$.
Note that for each element $a$ in $A$ that also appears as $b$ in $B$, we have that $\exp(a) - \exp(b) = 0$. Hence, we define $A^* \coloneqq A \setminus B$ and $B^*\coloneqq B \setminus A$ and have that $f(A, B) = f(A^*, B^*)$. Further, since according to the lemma statement $|A| = |B|$, we have that $|A^*| = |B^*|$; see \Cref{fact:multiset_difference}.

We first show that $f(A, B) = 0$ if and only if $A = B$. To this end, note that the sum is $0$ if the positive and the negative summands cancel out, that is, if $A^* = B^* = \varnothing$ and hence, $A = B$. If $A \neq B$, then the above sum is a non-zero sum of exponentials with algebraic exponents, and thus, by \Cref{theorem:lindemann_weierstrauss}, non-zero. Hence, we have $A = B \Leftrightarrow f(A, B) = 0$.
\end{proof}

\begin{lemma}[Proof of \Cref{lemma:softmax_decompose}]
Let $\vec{v}, \vec{w} \in \mathbb{Q}^{1 \times L}$ and let $\vec{X} \in \{0, 1\}^{L \times d}$ be a matrix whose rows are one-hot vectors, for some $L, d \in \mathbb{N}^+$. Then, $\textsf{softmax}(\vec{v})\vec{X} = \textsf{softmax}(\vec{w})\vec{X}$, if and only if, for every $\vec{x} \in \textsf{set}(\vec{X})$,
\begin{equation*}
\sum_{i \in A(\vec{x})} (\alpha_i - \beta_i) = 0,
\end{equation*}
where $\alpha_i \coloneqq \textsf{softmax}(\vec{v})_i$ and $\beta_i \coloneqq \textsf{softmax}(\vec{w})_i$.
\end{lemma}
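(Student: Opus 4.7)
The plan is to expand the matrix–vector product column by column and exploit the one-hot structure of the rows of $\vec{X}$ to turn the vector equation into a collection of scalar equations indexed by $\textsf{set}(\vec{X})$.

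First I would observe that, since every row $\vec{X}_i \in \{0,1\}^{1\times d}$ is one-hot, each $\vec{x}\in\textsf{set}(\vec{X})$ has a unique ``hot'' coordinate $k(\vec{x})\in[d]$, and the map $\vec{x}\mapsto k(\vec{x})$ is injective on $\textsf{set}(\vec{X})$. The sets $\{A(\vec{x})\}_{\vec{x}\in\textsf{set}(\vec{X})}$ then partition $[L]$, so for any coefficients $\gamma_1,\dots,\gamma_L\in\mathbb{R}$ the $j$-th column of $\sum_i\gamma_i\vec{X}_i$ is
\[
\sum_{i=1}^{L}\gamma_i\vec{X}_{ij}
\;=\;\sum_{\vec{x}\in\textsf{set}(\vec{X})}\Bigl(\sum_{i\in A(\vec{x})}\gamma_i\Bigr)\mathbf{1}\{j=k(\vec{x})\}.
\]
Applied with $\gamma_i=\alpha_i$ and $\gamma_i=\beta_i$, respectively, this rewrites the entries of $\textsf{softmax}(\vec{v})\vec{X}$ and $\textsf{softmax}(\vec{w})\vec{X}$ explicitly in terms of the sums $\sum_{i\in A(\vec{x})}\alpha_i$ and $\sum_{i\in A(\vec{x})}\beta_i$.

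Next I would prove both directions. For the ``if'' direction, assume $\sum_{i\in A(\vec{x})}(\alpha_i-\beta_i)=0$ for every $\vec{x}\in\textsf{set}(\vec{X})$. By the column-wise formula above, $(\textsf{softmax}(\vec{v})\vec{X})_j$ and $(\textsf{softmax}(\vec{w})\vec{X})_j$ agree on every column $j$ of the form $k(\vec{x})$; on columns $j\notin\{k(\vec{x}):\vec{x}\in\textsf{set}(\vec{X})\}$ both products vanish trivially, so the two row vectors coincide. For the ``only if'' direction, assume $\textsf{softmax}(\vec{v})\vec{X}=\textsf{softmax}(\vec{w})\vec{X}$ and fix $\vec{x}\in\textsf{set}(\vec{X})$. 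By the injectivity of $k$, reading off column $k(\vec{x})$ isolates the contribution of exactly the indices in $A(\vec{x})$, yielding $\sum_{i\in A(\vec{x})}\alpha_i=\sum_{i\in A(\vec{x})}\beta_i$, as desired.

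There is no real obstacle here: the lemma is a purely combinatorial decomposition that uses nothing about softmax beyond the fact that $\alpha$ and $\beta$ are row vectors of coefficients. The only subtlety worth flagging is that the injectivity of $\vec{x}\mapsto k(\vec{x})$ is what prevents columns from mixing contributions of different classes; this is exactly why the one-hot assumption on rows of $\vec{X}$ is essential, and it is also why no extra hypothesis (such as $\max_i\vec{v}_i=\max_i\vec{w}_i$) is needed for this particular step of the chain in Appendix~\ref{proof:transcendence_lemma}.
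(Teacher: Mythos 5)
Your proof is correct and takes essentially the same approach as the paper: you group the summands by the distinct one-hot rows of $\vec{X}$ and use the fact that these rows occupy disjoint coordinates (equivalently, that $\textsf{set}(\vec{X})$ is linearly independent) to decouple the per-class sums. Your column-by-column phrasing via the hot-coordinate map $\vec{x}\mapsto k(\vec{x})$ is just a more explicit rendering of the paper's appeal to linear independence.
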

\begin{proof}
We have
\begin{equation*}
    \textsf{softmax}(\vec{v})\vec{X} - \textsf{softmax}(\vec{w})\vec{X} = \sum_{i=1}^n (\alpha_i - \beta_i) \cdot \vec{X}_i = \sum_{\vec{x} \in \textsf{set}(\vec{X})} \sum_{i \in A(\vec{x})} (\alpha_i - \beta_i) \cdot \vec{x}.
\end{equation*}
Since the rows of $\vec{X}$ are one-hot vectors, $\textsf{set}(\vec{X})$ is linearly independent we have that
\begin{equation*}
    \sum_{\vec{x} \in \textsf{set}(\vec{X})} \sum_{i \in A(\vec{x})} (\alpha_i - \beta_i) \cdot \vec{x} = 0,
\end{equation*}
if, and only if, $\sum_{i \in A(\vec{x})} (\alpha_i - \beta_i) = 0$, for all $\vec{x} \in \textsf{set}(\vec{X})$.
\end{proof}

\begin{lemma}[Proof of \Cref{lemma:multiset_decompose}]
Let $\vec{v}, \vec{w} \in \mathbb{Q}^{1 \times L}$ and let $\vec{X} \in \{0, 1\}^{L \times d}$ be a matrix whose rows are one-hot vectors, for some $L, d \in \mathbb{N}^+$. Then, $\tokenindex{\vec{v}}{\vec{X}} = \tokenindex{\vec{w}}{\vec{X}}$, if and only if for every $\vec{x} \in \textsf{set}(\vec{X})$,
\begin{equation*}
    \multiset{\vec{v}_i \mid i \in [n] \wedge \vec{X}_i = \vec{x}} = \multiset{\vec{w}_i \mid i \in [n] \wedge \vec{X}_i = \vec{x}}.
\end{equation*}
\end{lemma}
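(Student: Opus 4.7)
The statement is essentially a bookkeeping lemma: a multiset of pairs is determined (and determines) the multisets obtained by fixing each possible second coordinate. Since the rows of $\vec{X}$ are one-hot vectors, the set $\textsf{set}(\vec{X})$ is finite, and filtering pairs in $\tokenindex{\vec{v}}{\vec{X}} = \multiset{(\vec{v}_i, \vec{X}_i)}_{i \in [L]}$ by their second coordinate $\vec{x}$ yields a well-defined submultiset whose first coordinates form exactly $\multiset{\vec{v}_i \mid \vec{X}_i = \vec{x}}$. So my plan is to formalize this filtering step and invoke it in both directions.

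For the backward direction, I would fix an indexing $\textsf{set}(\vec{X}) = \{\vec{x}^{(1)}, \dots, \vec{x}^{(m)}\}$ and observe that the multiset $\tokenindex{\vec{v}}{\vec{X}}$ decomposes as the disjoint multiset union
\begin{equation*}
\tokenindex{\vec{v}}{\vec{X}} = \bigcup_{k=1}^{m} \multiset{(v, \vec{x}^{(k)}) \mid v \in \multiset{\vec{v}_i \mid \vec{X}_i = \vec{x}^{(k)}}},
\end{equation*}
and analogously for $\vec{w}$. If the per-slice multisets agree, each piece of the union agrees, and hence so do the totals.

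For the forward direction, I would define, for each $\vec{x} \in \textsf{set}(\vec{X})$, a projection $\pi_{\vec{x}}$ on multisets of pairs that retains the first coordinate of every pair whose second coordinate equals $\vec{x}$ (counted with multiplicity). This projection is a well-defined operation on finite multisets and is preserved by multiset equality, so $\tokenindex{\vec{v}}{\vec{X}} = \tokenindex{\vec{w}}{\vec{X}}$ implies $\pi_{\vec{x}}\tokenindex{\vec{v}}{\vec{X}} = \pi_{\vec{x}}\tokenindex{\vec{w}}{\vec{X}}$ for every $\vec{x}$. It then suffices to note that $\pi_{\vec{x}}\tokenindex{\vec{v}}{\vec{X}} = \multiset{\vec{v}_i \mid \vec{X}_i = \vec{x}}$ by construction, and similarly for $\vec{w}$.

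There is no genuine obstacle here; the only point worth being careful about is that two different one-hot vectors $\vec{x}, \vec{x}' \in \textsf{set}(\vec{X})$ are distinct as whole vectors (not merely in one coordinate), so the filtering partitions $\tokenindex{\vec{v}}{\vec{X}}$ into disjoint pieces indexed by $\textsf{set}(\vec{X})$, with no overlap. This is what makes the decomposition into per-slice multisets lossless and lets both directions go through symmetrically.
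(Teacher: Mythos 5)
Your proof is correct and takes essentially the same approach as the paper: both exploit that the multiset of pairs is determined by, and determines, the per-second-coordinate slices. The paper phrases both directions as proofs by contradiction via multiplicity counting, while you phrase them more directly as a disjoint-union decomposition and a projection $\pi_{\vec{x}}$ that is preserved by multiset equality; these are the same underlying argument in different clothing. One minor remark: your caveat about one-hot vectors being distinct "as whole vectors" is not specific to one-hot structure — any collection of distinct rows partitions the pairs into disjoint slices. The one-hot property is used elsewhere (e.g., in \Cref{lemma:softmax_decompose}, where linear independence matters), but is not actually load-bearing in this particular lemma.
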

\begin{proof}
We define, for each $\vec{x} \in \textsf{set}(\vec{X})$,
\begin{align*}
    V(\vec{x}) &\coloneqq \multiset{\vec{v}_i \mid i \in [n] \wedge \vec{X}_i = \vec{x}} \\
    W(\vec{x}) &\coloneqq \multiset{\vec{w}_i \mid i \in [n] \wedge \vec{X}_i = \vec{x}}.
\end{align*}
For the forward implication, assume towards a contradiction that $\tokenindex{\vec{v}}{\vec{X}} = \tokenindex{\vec{w}}{\vec{X}}$ but there exists an $\vec{x} \in \textsf{set}(\vec{X})$ such that $V(\vec{x}) \neq W(\vec{x})$.
However, then there also exists a number $v \in V(\vec{x})$ that appears $x$ times in $V(\vec{x})$ but $y$ times in $W(\vec{x})$, with $x \neq y$. Without loss of generality, we assume that $x < y$.
Then, the tuple $(v, \vec{x})$ appears fewer times in $\tokenindex{\vec{v}}{\vec{X}}$ than in $\tokenindex{\vec{w}}{\vec{X}}$, implying $\tokenindex{\vec{v}}{\vec{X}} \neq \tokenindex{\vec{w}}{\vec{X}}$, a contradiction.

For the backward implication, assume towards a contradiction that for all $\vec{x} \in \textsf{set}(\vec{X})$, $V(\vec{x}) = W(\vec{x})$ but $\tokenindex{\vec{v}}{\vec{X}} \neq \tokenindex{\vec{w}}{\vec{X}}$. Then, there exists a tuple $(v, \vec{x})$ that appears $x$ times in $\tokenindex{\vec{v}}{\vec{X}}$ but $y$ times in $\tokenindex{\vec{v}}{\vec{X}}$, with $x \neq y$. Without loss of generality, we assume that $x < y$. But then, for the vector $\vec{x}$, there exists a number $v$ that appears fewer times in $V(\vec{X})$ than in $W(\vec{X})$, implying $V(\vec{X}) \neq W(\vec{X})$, a contradiction. This shows the statement.
\end{proof}

\newpage
\section*{NeurIPS Paper Checklist}

\begin{enumerate}

\item {\bf Claims}
    \item[] Question: Do the main claims made in the abstract and introduction accurately reflect the paper's contributions and scope?
    \item[] Answer: \answerYes{} %
    \item[] Justification: The main claims of the paper are that we propose a new model, develop an understanding of its representation power, and evaluate it extensively on large-scale datasets to derive generalizable insights. We describe the model in detail in \Cref{sec:framework}, provide central expressivity results in \Cref{sec:unifying_framework} and \Cref{sec:expressive_power}, and evaluate the model extensively in \Cref{sec:experiments}.
    \item[] Guidelines:
    \begin{itemize}
        \item The answer NA means that the abstract and introduction do not include the claims made in the paper.
        \item The abstract and/or introduction should clearly state the claims made, including the contributions made in the paper and important assumptions and limitations. A No or NA answer to this question will not be perceived well by the reviewers. 
        \item The claims made should match theoretical and experimental results, and reflect how much the results can be expected to generalize to other settings. 
        \item It is fine to include aspirational goals as motivation as long as it is clear that these goals are not attained by the paper. 
    \end{itemize}

\item {\bf Limitations}
    \item[] Question: Does the paper discuss the limitations of the work performed by the authors?
    \item[] Answer: \answerYes{} %
    \item[] Justification: We discuss the limitations in the main paper in \Cref{sec:limitations}.
    \item[] Guidelines:
    \begin{itemize}
        \item The answer NA means that the paper has no limitation while the answer No means that the paper has limitations, but those are not discussed in the paper. 
        \item The authors are encouraged to create a separate "Limitations" section in their paper.
        \item The paper should point out any strong assumptions and how robust the results are to violations of these assumptions (e.g., independence assumptions, noiseless settings, model well-specification, asymptotic approximations only holding locally). The authors should reflect on how these assumptions might be violated in practice and what the implications would be.
        \item The authors should reflect on the scope of the claims made, e.g., if the approach was only tested on a few datasets or with a few runs. In general, empirical results often depend on implicit assumptions, which should be articulated.
        \item The authors should reflect on the factors that influence the performance of the approach. For example, a facial recognition algorithm may perform poorly when image resolution is low or images are taken in low lighting. Or a speech-to-text system might not be used reliably to provide closed captions for online lectures because it fails to handle technical jargon.
        \item The authors should discuss the computational efficiency of the proposed algorithms and how they scale with dataset size.
        \item If applicable, the authors should discuss possible limitations of their approach to address problems of privacy and fairness.
        \item While the authors might fear that complete honesty about limitations might be used by reviewers as grounds for rejection, a worse outcome might be that reviewers discover limitations that aren't acknowledged in the paper. The authors should use their best judgment and recognize that individual actions in favor of transparency play an important role in developing norms that preserve the integrity of the community. Reviewers will be specifically instructed to not penalize honesty concerning limitations.
    \end{itemize}

\item {\bf Theory assumptions and proofs}
    \item[] Question: For each theoretical result, does the paper provide the full set of assumptions and a complete (and correct) proof?
    \item[] Answer: \answerYes{} %
    \item[] Justification: We provide detailed proofs, including all assumptions in \Cref{proof:transcendence_lemma} and \Cref{appendix:proofssec4}, as well as extensive background with all necessary definitions in \Cref{app:background}.
    \item[] Guidelines:
    \begin{itemize}
        \item The answer NA means that the paper does not include theoretical results. 
        \item All the theorems, formulas, and proofs in the paper should be numbered and cross-referenced.
        \item All assumptions should be clearly stated or referenced in the statement of any theorems.
        \item The proofs can either appear in the main paper or the supplemental material, but if they appear in the supplemental material, the authors are encouraged to provide a short proof sketch to provide intuition. 
        \item Inversely, any informal proof provided in the core of the paper should be complemented by formal proofs provided in appendix or supplemental material.
        \item Theorems and Lemmas that the proof relies upon should be properly referenced. 
    \end{itemize}

    \item {\bf Experimental result reproducibility}
    \item[] Question: Does the paper fully disclose all the information needed to reproduce the main experimental results of the paper to the extent that it affects the main claims and/or conclusions of the paper (regardless of whether the code and data are provided or not)?
    \item[] Answer: \answerYes{} %
    \item[] Justification: We provide detailed information in the main paper in \Cref{sec:experiments} which should suffice to reproduce the results on our real-world tasks. For the algorithmic tasks, which we design and implement for this work, additional details on graph generation and detailed task descriptions are required, which we detail fully in \Cref{app:experimental_details}.
    \item[] Guidelines:
    \begin{itemize}
        \item The answer NA means that the paper does not include experiments.
        \item If the paper includes experiments, a No answer to this question will not be perceived well by the reviewers: Making the paper reproducible is important, regardless of whether the code and data are provided or not.
        \item If the contribution is a dataset and/or model, the authors should describe the steps taken to make their results reproducible or verifiable. 
        \item Depending on the contribution, reproducibility can be accomplished in various ways. For example, if the contribution is a novel architecture, describing the architecture fully might suffice, or if the contribution is a specific model and empirical evaluation, it may be necessary to either make it possible for others to replicate the model with the same dataset, or provide access to the model. In general. releasing code and data is often one good way to accomplish this, but reproducibility can also be provided via detailed instructions for how to replicate the results, access to a hosted model (e.g., in the case of a large language model), releasing of a model checkpoint, or other means that are appropriate to the research performed.
        \item While NeurIPS does not require releasing code, the conference does require all submissions to provide some reasonable avenue for reproducibility, which may depend on the nature of the contribution. For example
        \begin{enumerate}
            \item If the contribution is primarily a new algorithm, the paper should make it clear how to reproduce that algorithm.
            \item If the contribution is primarily a new model architecture, the paper should describe the architecture clearly and fully.
            \item If the contribution is a new model (e.g., a large language model), then there should either be a way to access this model for reproducing the results or a way to reproduce the model (e.g., with an open-source dataset or instructions for how to construct the dataset).
            \item We recognize that reproducibility may be tricky in some cases, in which case authors are welcome to describe the particular way they provide for reproducibility. In the case of closed-source models, it may be that access to the model is limited in some way (e.g., to registered users), but it should be possible for other researchers to have some path to reproducing or verifying the results.
        \end{enumerate}
    \end{itemize}

\item {\bf Open access to data and code}
    \item[] Question: Does the paper provide open access to the data and code, with sufficient instructions to faithfully reproduce the main experimental results, as described in supplemental material?
    \item[] Answer: \answerYes{} %
    \item[] Justification: We provide our code base in the supplementary material which is sufficient to download or generate the data required to reproduce the experiments.
    \item[] Guidelines:
    \begin{itemize}
        \item The answer NA means that paper does not include experiments requiring code.
        \item Please see the NeurIPS code and data submission guidelines (\url{https://nips.cc/public/guides/CodeSubmissionPolicy}) for more details.
        \item While we encourage the release of code and data, we understand that this might not be possible, so “No” is an acceptable answer. Papers cannot be rejected simply for not including code, unless this is central to the contribution (e.g., for a new open-source benchmark).
        \item The instructions should contain the exact command and environment needed to run to reproduce the results. See the NeurIPS code and data submission guidelines (\url{https://nips.cc/public/guides/CodeSubmissionPolicy}) for more details.
        \item The authors should provide instructions on data access and preparation, including how to access the raw data, preprocessed data, intermediate data, and generated data, etc.
        \item The authors should provide scripts to reproduce all experimental results for the new proposed method and baselines. If only a subset of experiments are reproducible, they should state which ones are omitted from the script and why.
        \item At submission time, to preserve anonymity, the authors should release anonymized versions (if applicable).
        \item Providing as much information as possible in supplemental material (appended to the paper) is recommended, but including URLs to data and code is permitted.
    \end{itemize}

\item {\bf Experimental setting/details}
    \item[] Question: Does the paper specify all the training and test details (e.g., data splits, hyperparameters, how they were chosen, type of optimizer, etc.) necessary to understand the results?
    \item[] Answer: \answerYes{} %
    \item[] Justification: training and test details are provided in \Cref{app:experimental_details}, in particular in \Cref{app:Hyperparams}, discussing hyperparameters in detail. 
    \item[] Guidelines:
    \begin{itemize}
        \item The answer NA means that the paper does not include experiments.
        \item The experimental setting should be presented in the core of the paper to a level of detail that is necessary to appreciate the results and make sense of them.
        \item The full details can be provided either with the code, in appendix, or as supplemental material.
    \end{itemize}

\item {\bf Experiment statistical significance}
    \item[] Question: Does the paper report error bars suitably and correctly defined or other appropriate information about the statistical significance of the experiments?
    \item[] Answer: \answerYes{} %
    \item[] Justification: We report standard deviation over multiple random seeds in \Cref{tab:pe_results}, as well as standard error over multiple random seeds in \Cref{fig:inference}. For better clarity, we provide the standard deviation over random seeds of our scaling experiments in \Cref{fig:scaling_resources} (a) in \Cref{app:experimental_details} in tabular form.
    \item[] Guidelines:
    \begin{itemize}
        \item The answer NA means that the paper does not include experiments.
        \item The authors should answer "Yes" if the results are accompanied by error bars, confidence intervals, or statistical significance tests, at least for the experiments that support the main claims of the paper.
        \item The factors of variability that the error bars are capturing should be clearly stated (for example, train/test split, initialization, random drawing of some parameter, or overall run with given experimental conditions).
        \item The method for calculating the error bars should be explained (closed form formula, call to a library function, bootstrap, etc.)
        \item The assumptions made should be given (e.g., Normally distributed errors).
        \item It should be clear whether the error bar is the standard deviation or the standard error of the mean.
        \item It is OK to report 1-sigma error bars, but one should state it. The authors should preferably report a 2-sigma error bar than state that they have a 96\% CI, if the hypothesis of Normality of errors is not verified.
        \item For asymmetric distributions, the authors should be careful not to show in tables or figures symmetric error bars that would yield results that are out of range (e.g. negative error rates).
        \item If error bars are reported in tables or plots, The authors should explain in the text how they were calculated and reference the corresponding figures or tables in the text.
    \end{itemize}

\item {\bf Experiments compute resources}
    \item[] Question: For each experiment, does the paper provide sufficient information on the computer resources (type of compute workers, memory, time of execution) needed to reproduce the experiments?
    \item[] Answer: \answerYes{} %
    \item[] Justification: We provide runtime and memory requirements in \Cref{app:runtime_memory} in tabular form. These requirements are computed from empirical measurements on a single L40 GPU.
    \item[] Guidelines:
    \begin{itemize}
        \item The answer NA means that the paper does not include experiments.
        \item The paper should indicate the type of compute workers CPU or GPU, internal cluster, or cloud provider, including relevant memory and storage.
        \item The paper should provide the amount of compute required for each of the individual experimental runs as well as estimate the total compute. 
        \item The paper should disclose whether the full research project required more compute than the experiments reported in the paper (e.g., preliminary or failed experiments that didn't make it into the paper). 
    \end{itemize}
    
\item {\bf Code of ethics}
    \item[] Question: Does the research conducted in the paper conform, in every respect, with the NeurIPS Code of Ethics \url{https://neurips.cc/public/EthicsGuidelines}?
    \item[] Answer: \answerYes{} %
    \item[] Justification: Our research conforms to the code of ethics.
    \item[] Guidelines:
    \begin{itemize}
        \item The answer NA means that the authors have not reviewed the NeurIPS Code of Ethics.
        \item If the authors answer No, they should explain the special circumstances that require a deviation from the Code of Ethics.
        \item The authors should make sure to preserve anonymity (e.g., if there is a special consideration due to laws or regulations in their jurisdiction).
    \end{itemize}

\item {\bf Broader impacts}
    \item[] Question: Does the paper discuss both potential positive societal impacts and negative societal impacts of the work performed?
    \item[] Answer: \answerNA{} %
    \item[] Justification: In this work, we conduct foundational research in the area of machine learning without any immediate positive or negative societal impact that must be addressed specifically.
    \item[] Guidelines:
    \begin{itemize}
        \item The answer NA means that there is no societal impact of the work performed.
        \item If the authors answer NA or No, they should explain why their work has no societal impact or why the paper does not address societal impact.
        \item Examples of negative societal impacts include potential malicious or unintended uses (e.g., disinformation, generating fake profiles, surveillance), fairness considerations (e.g., deployment of technologies that could make decisions that unfairly impact specific groups), privacy considerations, and security considerations.
        \item The conference expects that many papers will be foundational research and not tied to particular applications, let alone deployments. However, if there is a direct path to any negative applications, the authors should point it out. For example, it is legitimate to point out that an improvement in the quality of generative models could be used to generate deepfakes for disinformation. On the other hand, it is not needed to point out that a generic algorithm for optimizing neural networks could enable people to train models that generate Deepfakes faster.
        \item The authors should consider possible harms that could arise when the technology is being used as intended and functioning correctly, harms that could arise when the technology is being used as intended but gives incorrect results, and harms following from (intentional or unintentional) misuse of the technology.
        \item If there are negative societal impacts, the authors could also discuss possible mitigation strategies (e.g., gated release of models, providing defenses in addition to attacks, mechanisms for monitoring misuse, mechanisms to monitor how a system learns from feedback over time, improving the efficiency and accessibility of ML).
    \end{itemize}
    
\item {\bf Safeguards}
    \item[] Question: Does the paper describe safeguards that have been put in place for responsible release of data or models that have a high risk for misuse (e.g., pretrained language models, image generators, or scraped datasets)?
    \item[] Answer: \answerNA{} %
    \item[] Justification: Our models are trained on standard benchmarks or on synthetic algorithmic tasks without any immediate risks for misuse. 
    \item[] Guidelines:
    \begin{itemize}
        \item The answer NA means that the paper poses no such risks.
        \item Released models that have a high risk for misuse or dual-use should be released with necessary safeguards to allow for controlled use of the model, for example by requiring that users adhere to usage guidelines or restrictions to access the model or implementing safety filters. 
        \item Datasets that have been scraped from the Internet could pose safety risks. The authors should describe how they avoided releasing unsafe images.
        \item We recognize that providing effective safeguards is challenging, and many papers do not require this, but we encourage authors to take this into account and make a best faith effort.
    \end{itemize}

\item {\bf Licenses for existing assets}
    \item[] Question: Are the creators or original owners of assets (e.g., code, data, models), used in the paper, properly credited and are the license and terms of use explicitly mentioned and properly respected?
    \item[] Answer: \answerYes{} %
    \item[] Justification: We provide licenses for each dataset in \Cref{app:experimental_details}.
    \item[] Guidelines:
    \begin{itemize}
        \item The answer NA means that the paper does not use existing assets.
        \item The authors should cite the original paper that produced the code package or dataset.
        \item The authors should state which version of the asset is used and, if possible, include a URL.
        \item The name of the license (e.g., CC-BY 4.0) should be included for each asset.
        \item For scraped data from a particular source (e.g., website), the copyright and terms of service of that source should be provided.
        \item If assets are released, the license, copyright information, and terms of use in the package should be provided. For popular datasets, \url{paperswithcode.com/datasets} has curated licenses for some datasets. Their licensing guide can help determine the license of a dataset.
        \item For existing datasets that are re-packaged, both the original license and the license of the derived asset (if it has changed) should be provided.
        \item If this information is not available online, the authors are encouraged to reach out to the asset's creators.
    \end{itemize}

\item {\bf New assets}
    \item[] Question: Are new assets introduced in the paper well documented and is the documentation provided alongside the assets?
    \item[] Answer: \answerNA{} %
    \item[] Justification: While we do introduce new synthetic tasks, we do not release these as datasets or a new benchmark. However, we thoroughly document how to reproduce the data used in these tasks, as well as how the models are evaluated on these tasks.
    \item[] Guidelines:
    \begin{itemize}
        \item The answer NA means that the paper does not release new assets.
        \item Researchers should communicate the details of the dataset/code/model as part of their submissions via structured templates. This includes details about training, license, limitations, etc. 
        \item The paper should discuss whether and how consent was obtained from people whose asset is used.
        \item At submission time, remember to anonymize your assets (if applicable). You can either create an anonymized URL or include an anonymized zip file.
    \end{itemize}

\item {\bf Crowdsourcing and research with human subjects}
    \item[] Question: For crowdsourcing experiments and research with human subjects, does the paper include the full text of instructions given to participants and screenshots, if applicable, as well as details about compensation (if any)? 
    \item[] Answer: \answerNA{} %
    \item[] Justification: This paper does not involve crowdsourcing nor research with human subjects.
    \item[] Guidelines:
    \begin{itemize}
        \item The answer NA means that the paper does not involve crowdsourcing nor research with human subjects.
        \item Including this information in the supplemental material is fine, but if the main contribution of the paper involves human subjects, then as much detail as possible should be included in the main paper. 
        \item According to the NeurIPS Code of Ethics, workers involved in data collection, curation, or other labor should be paid at least the minimum wage in the country of the data collector. 
    \end{itemize}

\item {\bf Institutional review board (IRB) approvals or equivalent for research with human subjects}
    \item[] Question: Does the paper describe potential risks incurred by study participants, whether such risks were disclosed to the subjects, and whether Institutional Review Board (IRB) approvals (or an equivalent approval/review based on the requirements of your country or institution) were obtained?
    \item[] Answer: \answerNA{} %
    \item[] Justification: This paper does not involve crowdsourcing nor research with human subjects.
    \item[] Guidelines:
    \begin{itemize}
        \item The answer NA means that the paper does not involve crowdsourcing nor research with human subjects.
        \item Depending on the country in which research is conducted, IRB approval (or equivalent) may be required for any human subjects research. If you obtained IRB approval, you should clearly state this in the paper. 
        \item We recognize that the procedures for this may vary significantly between institutions and locations, and we expect authors to adhere to the NeurIPS Code of Ethics and the guidelines for their institution. 
        \item For initial submissions, do not include any information that would break anonymity (if applicable), such as the institution conducting the review.
    \end{itemize}

\item {\bf Declaration of LLM usage}
    \item[] Question: Does the paper describe the usage of LLMs if it is an important, original, or non-standard component of the core methods in this research? Note that if the LLM is used only for writing, editing, or formatting purposes and does not impact the core methodology, scientific rigorousness, or originality of the research, declaration is not required.
    \item[] Answer: \answerNA{} %
    \item[] Justification: We do not use LLMs such that they impact the core method development in this research.
    \item[] Guidelines:
    \begin{itemize}
        \item The answer NA means that the core method development in this research does not involve LLMs as any important, original, or non-standard components.
        \item Please refer to our LLM policy (\url{https://neurips.cc/Conferences/2025/LLM}) for what should or should not be described.
    \end{itemize}

\end{enumerate}

\end{document}